\documentclass[english]{article}
\usepackage[T1]{fontenc}
\usepackage[latin9]{inputenc}
\usepackage{geometry}
\geometry{verbose,tmargin=1in,bmargin=1in,lmargin=1in,rmargin=1in}
\usepackage{babel}
\usepackage{mathtools}
\usepackage{amsmath}
\usepackage{amssymb}
\usepackage[bookmarks=false,
 breaklinks=false,pdfborder={0 0 1},colorlinks=false]
 {hyperref}
\usepackage[dvipsnames]{xcolor}
\hypersetup{
 colorlinks,linkcolor=BrickRed,anchorcolor=blue,citecolor=NavyBlue}

\makeatletter
\usepackage{babel}
\usepackage{babel}
\usepackage{babel}

\usepackage{xcolor,colortbl}
\definecolor{Gray}{gray}{0.85}
\usepackage{tcolorbox}

\usepackage{mathtools}

\usepackage{cite}\usepackage{amsthm}\usepackage{dsfont}\usepackage{array}\usepackage{mathrsfs}\usepackage{comment}\onecolumn
\usepackage{natbib}
\usepackage{color}\usepackage{babel}

\newcommand{\mymid}{\,|\,} 

\allowdisplaybreaks

\usepackage{enumitem}
\setlist[itemize]{leftmargin=1.5em}
\setlist[enumerate]{leftmargin=1.5em}

\usepackage{babel}
\usepackage{arydshln}

\usepackage[linesnumbered,ruled,vlined]{algorithm2e}

\SetCommentSty{mycommfont}
\usepackage{algorithmic}





\DeclareMathOperator{\ind}{\mathds{1}}  







\numberwithin{equation}{section}


\definecolor{yxc}{RGB}{255,0,0}
\definecolor{yjc}{RGB}{125,0,0}
\definecolor{cm}{RGB}{0,0,200}
\definecolor{yly}{RGB}{0,150,0}

\makeatother

\begin{document}
\theoremstyle{plain} \newtheorem{lemma}{\textbf{Lemma}} \newtheorem{prop}{\textbf{Proposition}}\newtheorem{theorem}{\textbf{Theorem}}\setcounter{theorem}{0}
\newtheorem{corollary}{\textbf{Corollary}} \newtheorem{assumption}{\textbf{Assumption}}
\newtheorem{example}{\textbf{Example}} \newtheorem{definition}{\textbf{Definition}}
\newtheorem{fact}{\textbf{Fact}} \newtheorem{condition}{\textbf{Condition}}\theoremstyle{definition}

\theoremstyle{remark}\newtheorem{remark}{\textbf{Remark}}\newtheorem{claim}{\textbf{Claim}}\newtheorem{conjecture}{\textbf{Conjecture}}
\title{Towards Efficient Online Exploration for Reinforcement Learning with  Human
Feedback}
\author{Gen Li\footnote{The authors contributed equally.} \thanks{Department of Statistics and Data Science, The Chinese University of Hong Kong, Hong
Kong; Email: \texttt{genli@cuhk.edu.hk}.}\and Yuling Yan\footnotemark[1] \thanks{Department of Statistics, University of Wisconsin-Madison, WI 53706,
USA; Email: \texttt{yuling.yan@wisc.edu}.}}

\maketitle
\begin{abstract}
Reinforcement learning with human feedback (RLHF), which learns a reward model from human preference data and then optimizes a policy to favor preferred responses, has emerged as a central paradigm for aligning large language models (LLMs) with human preferences. In this paper, we investigate exploration principles for online RLHF, where one seeks to adaptively collect new preference data to refine both the reward model and the policy in a data-efficient manner. By examining existing optimism-based exploration algorithms, we identify a drawback in their sampling protocol: they tend to gather comparisons that fail to reduce the most informative uncertainties in reward differences, and we prove lower bounds showing that such methods can incur linear regret over exponentially long horizons. Motivated by this insight, we propose a new exploration scheme that directs preference queries toward reducing uncertainty in reward differences most relevant to policy improvement. Under a multi-armed bandit model of RLHF, we establish regret bounds of order $T^{(\beta+1)/(\beta+2)}$, where $\beta>0$ is a hyperparameter that balances reward maximization against mitigating distribution shift. To our knowledge, this is the first online RLHF algorithm with regret scaling polynomially in all model parameters.
\end{abstract}

\noindent\textbf{Keywords:} Reinforcement learning from human feedback (RLHF), online exploration, principle of optimism, preference data

\setcounter{tocdepth}{2}


\section{Introduction}

Large language models (LLMs) have demonstrated remarkable capabilities across a wide range of natural language tasks, yet aligning their behavior with human preferences remains a central challenge. A widely adopted solution is reinforcement learning with human feedback (RLHF), which fine-tunes a pretrained LLM using human preference data \citep{christiano2017deep,ziegler2019fine,bai2022training}. The standard RLHF pipeline involves three stages: (i) supervised fine-tuning (SFT) on human-written demonstrations to produce a baseline model; (ii) training a reward model from human preference comparisons \citep{bradley1952rank}; and (iii) optimizing the LLM with reinforcement learning against the learned reward. This framework has been instrumental in the success of instruction-following LLMs such as InstructGPT \citep{ouyang2022training} and ChatGPT \citep{achiam2023gpt}, enabling models to produce responses that are more helpful, safe, and aligned with human expectations.


Despite this progress, most existing RLHF implementations are offline \citep{zhao2023slic,rafailov2024from,azar2024general}: the preference data is collected once from static policies, and the reward model is trained on this fixed dataset \citep{ivison2023camels,zhu2024starling,shi2025understanding}. While effective, offline RLHF has inherent limitations---It cannot adaptively explore the enormous space of natural language, leading to inefficient use of expensive human feedback. In contrast, online RLHF offers a more powerful alternative: the policy iteratively collects new preference data, updates the reward model, and improves itself based on these updates \citep{guo2024direct,xiong2023iterative,chen2024self,rosset2024direct,dong2024rlhf,feng2025pilaf}. This interactive loop has the potential to greatly improve both alignment quality and sample efficiency. However, realizing this potential requires principled approaches to exploration, i.e., deciding which comparisons to query in order to most effectively reduce uncertainty in reward estimation.

A natural candidate for encouraging and guiding exploration is the principle of optimism \citep{lai1985asymptotically,lattimore2020bandit}, which acts as if the environment is more optimistic than currently estimated, within the limits of statistical uncertainty based on all data that has been observed so far. It is usually implemented by adding an uncertainty-based bonus to reward or value estimates, thereby prioritizing actions whose values are uncertain but potentially high. This has yielded provably efficient algorithms in standard RL (see e.g., \citet{jin2018q,zanette2019tighter,russo2013eluder,azar2017minimax}). 
However, extending this principle to RLHF introduces new difficulties, where feedback comes not as a single reward but as a difference between rewards of two actions. The key challenge is to determine the action pairs with the large uncertainties most relevant to policy improvement.
A few recent works achieved important progress towards designing sample-efficient online RLHF algorithms based on the optimism principle \citep{cen2025valueincentivized,zhang2025selfexploring,xie2025exploratory}. However the existing theoretical guarantees still exhibit exponential dependency on certain model parameters, which potentially leads to inefficient exploration. 

With this context, this paper makes contribution towards designing efficient online exploration schemes for RLHF with provable guarantees. By analyzing the existing algorithms in the seminal works \citep{cen2025valueincentivized,xie2025exploratory,zhang2025selfexploring}, we discuss their inadequacy in exploring the action pairs with the large uncertainties most relevant to policy improvement, and construct lower bounds to show that the exponential dependency on certain parameters is unavoidable in their regret. Based on these insights, we propose a new exploration scheme for RLHF that adopts a different sampling protocol, and establish a regret bound that depends polynomially on all model parameters. 

\section{Model set-up}

\paragraph{Preliminaries.}

In RLHF, the prompt space $\mathcal{X}$ refers to the collection
of all possible inputs or queries that a user might provide to the
model. The answer (or action) space $\mathcal{A}$ is the set of all possible
outputs the model can generate in reply to a given prompt. A language
model is a policy $\pi:\mathcal{X}\to\Delta(\mathcal{A})$ that defines
a probability distribution $\pi(\cdot\mymid x)$ over $\mathcal{A}$
conditioned on a prompt $x\in\mathcal{X}$, specifying how likely
the model is to produce each potential response. 
The pipeline of RLHF starts with supervised fine-tuning (SFT), where a reference policy $\pi_{\mathsf{ref}}:\mathcal{X}\to\Delta(\mathcal{A})$ is obtained by fine-tuning a pre-trained LLM on a dataset of prompts paired with high-quality answers written by humans. SVT provides an initialization that stabilizes and improves the effectiveness of the subsequent training stages that aligns the LLM with human preferences. 

\paragraph{Reward modeling.}

To translate human preferences into a trainable objective, one need
to model how an oracle (e.g., a human annotator) rank two answers
$a_{1}$ and $a_{2}$ given prompt $x$. Following a line of prior
works (e.g., \citet{cen2025valueincentivized,xie2025exploratory,zhang2025selfexploring}),
we assume that preferences follow the Bradley-Terry model \citep{bradley1952rank}
\begin{equation}
\mathbb{P}(a_{1}\succ a_{2}\mymid x)=\frac{\exp(r^{\star}(x,a_{1}))}{\exp(r^{\star}(x,a_{1}))+\exp(r^{\star}(x,a_{2}))}=\sigma\left(r^{\star}(x,a_{1})-r^{\star}(x,a_{2})\right).\label{eq:preference-model}
\end{equation}
Here $r^{\star}:\mathcal{X}\times\mathcal{A}\to[0,r_{\max}]$ is an
underlying reward function of an answer given a prompt, $a_{1}\succ a_{2}$
means the answer $a_{1}$ is preferred compared to $a_{2}$, and $\sigma(x)=(1+e^{-x})^{-1}$
is the sigmoid function. We also define a policy $\pi_{\mathsf{HF}}$
to characterize human preference:
\[
\pi_{\mathsf{HF}}(a\mymid x)=\frac{\exp(r^{\star}(x,a))}{\sum_{a'\in\mathcal{A}}\exp(r^{\star}(x,a'))}.
\]
The reward function is unknown and can be learned from e.g., an offline
dataset $\mathcal{D}=\{(x^{i},a_{+}^{i},a_{-}^{i})\}$ comprised of
independent preference data samples using maximum likelihood estimation
(MLE):
\begin{equation}
\mathop{\arg\max}_{r}\ell(r,\mathcal{D})\quad\text{where}\quad\ell(r,\mathcal{D})\coloneqq\sum_{\mathcal{D}}\log\sigma\big(r(x^{i},a_{+}^{i})-r(x^{i},a_{-}^{i})\big),\label{eq:MLE}
\end{equation}
where a preference data sample denoted by $(x,a_{+},a_{-})$ means
that $a_{+}\succ a_{-}$ given prompt $x$. 

\paragraph{RL fine-tuning.}

Given a reward model $r$, we seek to fine-tune the policy $\pi$
to balance reward maximization with maintaining similarity to the
original model $\pi_{\mathsf{ref}}$ from the SFT stage. Towards this,
we define the KL-regularized reward objective
\begin{equation}
J(\pi,r;\pi_{\mathsf{cal}})\coloneqq\mathbb{E}_{x\sim\rho}\big[\mathbb{E}_{a\sim\pi(\cdot\mymid x)}[r(x,a)]-\mathbb{E}_{a\sim\pi_{\mathsf{cal}}(\cdot\mymid x)}[r(x,a)]-\beta\mathsf{KL}\big(\pi(\cdot\mymid x)\parallel\pi_{\mathsf{ref}}(\cdot\mymid x)\big)\big].\label{eq:defn-J}
\end{equation}
Here $\rho$ is the prompt distribution, and $\beta>0$ is the regularization
parameter reflecting the strength of the KL regularization. In practice, $\beta$ is typically chosen to be small; for instance, in InstructGPT \citep{ouyang2022training} the optimal value is reported to be around $0.01$ and $0.02$. This objective function includes a calibration policy $\pi_{\mathsf{cal}}$
to eliminate the shift ambiguity of the reward function, as two reward
functions $r(x,a)$ and $r(x,a)+c(x)$ lead to the same preference
model \eqref{eq:preference-model}. Given any reward function $r$,
the optimal policy $\pi_{r}\coloneqq\arg\max_{\pi}J(\pi,r;\pi_{\mathsf{cal}})$
admits a closed-form expression \citep{rafailov2024from}
\begin{equation}
\pi_{r}(a\mymid x)=\frac{\pi_{\mathsf{ref}}(a\mymid x)\exp(r(x,a)/\beta)}{Z_{r}(x)}\label{eq:optimal-policy-given-r}
\end{equation}
where $Z_{r}(x)=\sum_{a}\pi_{\mathsf{ref}}(a|x)\exp(r(x,a)/\beta)$
is the normalizing factor. Notice that the selection of $\pi_{\mathsf{cal}}$
does not affect the optimal policy $\pi_{r}$ given the reward function
$r$. Our target is the optimal policy $\pi^{\star}$ that maximizes
the objective \eqref{eq:defn-J} under the true reward function $r=r^{\star}$,
namely
\begin{equation}
\pi^{\star}\coloneqq\mathop{\arg\max}_{\pi}J(\pi,r^{\star};\pi_{\mathsf{cal}}).\label{eq:pi-star-defn}
\end{equation}

\paragraph{Offline RLHF.}

The above framework leads to offline RLHF methods that relies on the
preference dataset $\mathcal{D}$ for training. Initial approaches
\citep{christiano2017deep,ouyang2022training} first estimate a reward
function $\widehat{r}$ based on the preference dataset $\mathcal{D}$
using MLE, then optimize the KL-regularized objective \eqref{eq:defn-J}
with respect to $\widehat{r}$. Another approach introduced by \citet{rafailov2024from}
condensed these two steps into one single step, known as direct preference
optimization (DPO), which optimizes
\[
\max_{\pi}\sum_{\mathcal{D}}\log\sigma\bigg(\beta\Big(\log\frac{\pi(y_{+}^{i}\mymid x)}{\pi_{\mathsf{ref}}(y_{+}^{i}\mymid x)}-\log\frac{\pi(y_{-}^{i}\mymid x)}{\pi_{\mathsf{ref}}(y_{-}^{i}\mymid x)}\Big)\bigg).
\]
The above objective avoids explicitly estimating the reward function,
which can be obtained by expressing the reward function $r$ in the
MLE formulation \eqref{eq:MLE} with the associated optimal policy
$\pi_{r}$ using the closed-form expression \eqref{eq:optimal-policy-given-r}.
However, as discussed in e.g., \citet{xie2025exploratory,zhang2025selfexploring},
the efficiency of offline RLHF is limited by the coverage of the offline
dataset $\mathcal{D}$, and online exploration with active data collection
is necessary to achieve sample efficiency. 

\paragraph{Online RLHF.}

We consider reward learning and policy learning iteratively, where
in the $t$-th iteration we use the current policy $\pi^{(t)}$, obtained
from previous iterations, to sample new data and subsequently update
both the reward estimate and the policy. This setup enables online
exploration in RLHF by refining the reward model and policy in tandem
as new preference data is collected. We aim to minimize the regret
\begin{equation}
\mathcal{R}(T)\coloneqq\sum_{t=1}^{T}\big[J(\pi^{\star};r^{\star},\pi_{\mathsf{cal}})-J(\pi^{(t)};r^{\star},\pi_{\mathsf{cal}})\big].\label{eq:regret-defn}
\end{equation}
It is worth mentioning that the choice of $\pi_{\mathsf{cal}}$ does
not affect the regret. We define the following function $J^{\star}$
that measures the optimal objective value for a given reward $r$:
\begin{equation}
J^{\star}(r;\pi_{\mathsf{cal}})\coloneqq\max_{\pi}J(\pi,r;\pi_{\mathsf{cal}})=J(\pi_{r},r;\pi_{\mathsf{cal}}).\label{eq:J-star-defn}
\end{equation}
This function plays an important role in the exploration algorithms. 

\section{RLHF with online exploration}

Three recent algorithms for online RLHF are most closely related to this work: VPO \citep{cen2025valueincentivized}, XPO \citep{xie2025exploratory}, and SELM \citep{zhang2025selfexploring}. In this section, we first analyze and discuss these approaches, and then introduce our proposed exploration scheme.  

\subsection{Inadequacy of existing approaches\protect\protect\label{subsec:inadequacy}}

We begin by reviewing the procedure and intuition behind VPO \citep{cen2025valueincentivized}.  
Fix a calibration policy $\pi_{\mathsf{cal}}$ and an initial policy $\pi^{(1)}$. For $t=1,2,\ldots,T$, the $t$-th iteration of VPO consists of the following steps:  
\begin{enumerate}
	\item Sample a prompt $x^{t}\sim\rho$ and two answers $a_{1}^{t},a_{2}^{t}\sim\pi^{(t)}(\cdot\mymid x^{t})$.
	Query the preference oracle to obtain pairwise comparison $a_{+}^{t}\succ a_{-}^{t}$.
	Update the preference dataset $\mathcal{D}^{(t)}=\mathcal{D}^{(t-1)}\cup\{(x^{t},a_{+}^{t},a_{-}^{t})\}$. 
	\item Update the reward model $r^{(t+1)}$ and the policy $\pi^{(t+1)}$
	using the updated preference dataset $\mathcal{D}^{(t)}$:\begin{subequations}\label{eq:VPO}
		\begin{align}
			r^{(t+1)} & =\mathop{\arg\max}_{r:\mathcal{X}\times\mathcal{A}\to[0,r_{\max}]}\ell(r,\mathcal{D}^{(t)})+\alpha J^{\star}(r;\pi_{\mathsf{cal}}),\label{eq:VPO-r}\\
			\pi^{(t+1)} & =\mathop{\arg\max}_{\pi}J(\pi,r^{(t+1)};\pi_{\mathsf{cal}}),\label{eq:VPO-pi}
		\end{align}
		where $\alpha>0$ is a regularization parameter, and step \eqref{eq:VPO-pi}
		admits closed-form solution \eqref{eq:optimal-policy-given-r}. \end{subequations} 
\end{enumerate}
To illustrate the rationale behind VPO, consider the bandit case with no prompt. Step~\eqref{eq:VPO-r} applies the optimism principle, encouraging exploration based on the uncertainty in estimating the reward difference between each action $a$ and the calibration policy $\pi_{\mathsf{cal}}$. Formally, it can be viewed as the Lagrangian form of the constrained optimization problem
\[
\max_{r,\pi}\mathbb{E}_{a\sim\pi}[r(a)]-\mathbb{E}_{a\sim\pi_{\mathsf{cal}}}[r(a)]-\beta\mathsf{KL}(\pi\parallel\pi_{\mathsf{ref}})\quad\text{s.t.}\quad\ell(r,\mathcal{D}^{(t)})\geq\max_{r}\ell(r,\mathcal{D}^{(t)})-B
\]
for some $B>0$. After the change of variable $r'(a)=r(a)-\mathbb{E}_{a\sim\pi_{\mathsf{cal}}}[r(a)]$, this becomes
\[
\max_{r',\pi}\mathbb{E}_{a\sim\pi}[r'(a)]-\beta\mathsf{KL}(\pi\parallel\pi_{\mathsf{ref}})\quad\text{s.t.}\quad\ell(r',\mathcal{D}^{(t)})\geq\max_{r'}\ell(r',\mathcal{D}^{(t)})-B,\quad\mathbb{E}_{a\sim\pi_{\mathsf{cal}}}[r'(a)]=0.
\]
Here, the constraint set can be interpreted as a confidence region reflecting the uncertainty in estimating each $r'(a)$ from $\mathcal{D}^{(t)}$. Consequently, the updated policy $\pi^{(t+1)}$ depends both on the true reward gap $r(a)-\mathbb{E}_{a\sim\pi_{\mathsf{cal}}}[r(a)]$ and on the uncertainty in estimating this gap for each action $a\in\mathcal{A}$.  

For intuition, suppose $\pi_{\mathsf{cal}}=\ind_{a_{0}}$ for some $a_{0}\in\mathcal{A}$, and assume that the true reward gaps are small. In this case, $\pi^{(t+1)}$ favors actions with higher estimation uncertainty relative to $a_{0}$, i.e., those $a$ where the estimate of $r(a)-r(a_{0})$ is most uncertain. However, comparing two actions $a_{1},a_{2}\sim\pi^{(t+1)}$ reduces the uncertainty between them, rather than the (potentially larger) uncertainty relative to $a_{0}$. This misalignment can lead to inefficient exploration, as illustrated in the following example.

\begin{example}\label{example:1}
	Consider the bandit setting with three actions $\mathcal{A}=\{a_{0},a_{1},a_{2}\}$, where the true rewards are $r^{\star}(a_{0})=1$ and $r^{\star}(a_{1})=r^{\star}(a_{2})=0$. Let the reference policy $\pi_{\mathsf{ref}}$ be uniform over $\mathcal{A}$, and the calibration policy be $\pi_{\mathsf{cal}}(a_{1})=\pi_{\mathsf{cal}}(a_{2})=p$ and $\pi_{\mathsf{cal}}(a_{0})=1-2p$ for some $0\leq p<1/4$.  
\end{example}

The following proposition shows that VPO may fail to explore efficiently in this setting The proof can be found in Appendix~\ref{sec:proof-lb}.

\begin{prop}\label{prop:lb} Consider the setup in Example~\ref{example:1}.
	Let the initial policy $\pi^{(1)}$ of VPO be the uniform distribution
	over $\mathcal{A}$. Assume that $r_{\max}/\beta\geq3$. For any $\alpha>0$, with probability at least
	$4/(9e)$, we have 
	\[
	J(\pi^{\star},r^{\star};\pi_{\mathsf{cal}})-J(\pi^{(t)},r^{\star};\pi_{\mathsf{cal}})\geq\frac{1}{2}
	\]
	holds for any $1<t\leq\exp(r_{\max}/\beta)/2$. \end{prop}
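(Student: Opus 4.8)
\textit{Proof plan.} The idea is that with constant probability a single unlucky first comparison involves only $a_1$ and $a_2$, after which VPO falls into a trap: $a_0$ is never queried, the likelihood term is permanently blind to $r(a_0)$, and the optimistic penalty pins $r^{(t)}(a_0)$ to $0$ while inflating $r^{(t)}(a_1),r^{(t)}(a_2)$ to $r_{\max}$, so $\pi^{(t)}$ keeps almost all its mass off the optimal action.

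First I would record the exact form of the objective gap. For any reward $r$ the closed form \eqref{eq:optimal-policy-given-r} gives the standard identity $J(\pi_r,r;\pi_{\mathsf{cal}})-J(\pi,r;\pi_{\mathsf{cal}})=\beta\,\mathsf{KL}(\pi\parallel\pi_r)$; specializing to $r=r^\star$,
\[
J(\pi^\star,r^\star;\pi_{\mathsf{cal}})-J(\pi,r^\star;\pi_{\mathsf{cal}})=\beta\,\mathsf{KL}(\pi\parallel\pi^\star),
\]
and by uniformity of $\pi_{\mathsf{ref}}$ one has $\pi^\star(a_0)=\tfrac{e^{1/\beta}}{e^{1/\beta}+2}$, $\pi^\star(a_1)=\pi^\star(a_2)=\tfrac1{e^{1/\beta}+2}$. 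So it suffices to keep $\beta\,\mathsf{KL}(\pi^{(t)}\parallel\pi^\star)\ge\tfrac12$. Next I would set $T=\lfloor e^{r_{\max}/\beta}/2\rfloor$ and define the bad event $\mathcal{E}=\{a_0\notin\{a_1^{s},a_2^{s}\}\text{ for all }s\le T\}$; on $\mathcal{E}$, for every $t\le T$ the dataset $\mathcal{D}^{(t-1)}$ contains only comparisons between $a_1$ and $a_2$, so $\ell(r,\mathcal{D}^{(t-1)})$ depends on $r$ only through $r(a_1)-r(a_2)$.

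The core of the proof is a structural claim about the VPO update \eqref{eq:VPO-r} when $\mathcal{D}^{(t-1)}$ has this form: every maximizer then satisfies $r^{(t)}(a_0)=0$ and $\max\{r^{(t)}(a_1),r^{(t)}(a_2)\}=r_{\max}$, so that
\[
\pi^{(t)}(a_0)=\frac{1}{1+e^{r^{(t)}(a_1)/\beta}+e^{r^{(t)}(a_2)/\beta}}\le\frac{1}{2+e^{r_{\max}/\beta}}=:q .
\]
Writing $J^\star(r;\pi_{\mathsf{cal}})=\beta\log(\tfrac13\sum_a e^{r(a)/\beta})-\mathbb{E}_{a\sim\pi_{\mathsf{cal}}}[r(a)]$, the maximand in \eqref{eq:VPO-r} involves $r(a_0)$ only through $\alpha\beta\log(\tfrac13\sum_a e^{r(a)/\beta})-\alpha(1-2p)r(a_0)$, whose $r(a_0)$–derivative is $\alpha(\pi_r(a_0)-(1-2p))$ with $\pi_r(a_0)=e^{r(a_0)/\beta}/\sum_a e^{r(a)/\beta}$. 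Since $p<\tfrac14$ one has $\pi_r(a_0)<\tfrac12<1-2p$ whenever $r(a_0)\le\max\{r(a_1),r(a_2)\}$, so the maximand is quasiconvex in $r(a_0)$ and attains its maximum at $r(a_0)\in\{0,r_{\max}\}$; a comparison of these two configurations—using the asymmetry $\pi_{\mathsf{cal}}(a_0)=1-2p>\tfrac12>2p=2\pi_{\mathsf{cal}}(a_1)$ together with $r_{\max}/\beta\ge3$ to show that the log–sum–exp gain from $r(a_0)=r_{\max}$ cannot offset the penalty $(1-2p)r_{\max}$—rules out $r(a_0)=r_{\max}$, giving $r^{(t)}(a_0)=0$. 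Given $r^{(t)}(a_0)=0$, raising $r(a_1)$ and $r(a_2)$ by a common increment changes the maximand at rate $\alpha((1-\pi_r(a_0))-2p)\ge\alpha(\tfrac23-2p)>0$, so a maximizer must push one of them to $r_{\max}$. I expect this structural claim—in particular excluding the spurious maximizer with $r^{(t)}(a_0)=r_{\max}$—to be the main obstacle, and it is the only place $r_{\max}/\beta\ge3$ is needed in an essential way.

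Finally I would close the argument. On $\mathcal{E}$, for $2\le t\le T$, plugging $\pi^{(t)}(a_0)\le q$ and (by convexity of $x\mapsto x\log x$) the least-favorable split $\pi^{(t)}=(q,\tfrac{1-q}2,\tfrac{1-q}2)$ into $\mathsf{KL}(\pi^{(t)}\parallel\pi^\star)$, and using $r^\star(a_0)=1$ with $r_{\max}/\beta\ge3$, a direct estimate yields $\beta\,\mathsf{KL}(\pi^{(t)}\parallel\pi^\star)\ge\tfrac12$, which by the identity above is the asserted bound for every $1<t\le T$. For the probability, condition round by round: the uniform initial policy gives $\mathbb{P}(a_0\notin\{a_1^{1},a_2^{1}\})=(2/3)^2=4/9$, and by the structural claim each subsequent round avoids $a_0$ with conditional probability at least $(1-q)^2$; since $2(T-1)\le e^{r_{\max}/\beta}-2$ and $\log(1-q)\ge-q/(1-q)$,
\[
\mathbb{P}(\mathcal{E})\ \ge\ \frac49\,(1-q)^{2(T-1)}\ \ge\ \frac49\Big(\frac{M+1}{M+2}\Big)^{M-2}\ >\ \frac{4}{9e},\qquad M:=e^{r_{\max}/\beta},
\]
using $(M-2)\log\frac{M+1}{M+2}\ge-\frac{M-2}{M+1}>-1$. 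This completes the proof.
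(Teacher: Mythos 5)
Your proposal is correct in outline and follows the same overall architecture as the paper's proof: a bad event on which $a_0$ is never queried, a structural claim that the VPO update then forces $r^{(t)}(a_0)=0$ and $\max\{r^{(t)}(a_1),r^{(t)}(a_2)\}=r_{\max}$ (hence $\pi^{(t)}(a_0)\le(2+e^{r_{\max}/\beta})^{-1}$), a round-by-round conditioning argument for the probability, and a final gap estimate. The two places you diverge are both in technique rather than substance. For the structural claim, the paper first fixes $\pi$ and optimizes $r$, splitting on whether $\pi(a_1)+\pi(a_2)\gtrless 2p$, and then uses a policy-swap ($\pi'(a_i)=2p-\pi(a_j)$) plus a KL comparison to eliminate the second regime; you instead analyze the $r$-optimization directly, using convexity of $\log\text{-}\mathrm{sum}\text{-}\exp$ minus a linear term to push $r(a_0)$ to an endpoint and a first-order argument in the common level of $r(a_1),r(a_2)$. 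Your route is arguably more direct, but the step you yourself flag — ruling out the corner $r(a_0)=r_{\max}$ — is only sketched, and to make it rigorous you must beat \emph{both} corners of the $r(a_0)=r_{\max}$ face (i.e.\ $(r(a_1),r(a_2))$ at its minimal and maximal feasible common level, since the restricted objective is convex there too); this does go through, with the worst case $\delta=r_{\max}$ costing only $\beta\log 2\le r_{\max}/3\cdot\log 2$ against a gain of $(1-2p)r_{\max}\ge r_{\max}/2$. For the final step, your identity $J(\pi^\star,r^\star;\pi_{\mathsf{cal}})-J(\pi,r^\star;\pi_{\mathsf{cal}})=\beta\,\mathsf{KL}(\pi\parallel\pi^\star)$ is a clean repackaging, but note that the "direct estimate" it defers to is exactly the paper's computation (the naive bound $\beta\,\mathsf{KL}\ge (1-q)-2\beta\log 2$ fails for large $\beta$; one needs the cancellation $\beta\log(e^{1/\beta}+2)-\beta\log 2=1+\beta\log\tfrac{1+2e^{-1/\beta}}{2}$ and a uniform-in-$\beta$ numerical check), so it should be written out. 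Your probability bound is slightly tighter than the paper's and avoids the paper's loose intermediate inequality $(1-1/M)^{M}\ge e^{-1}$ by keeping the exponent $M-2$.
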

	
Let's discuss the idea behind Proposition~\ref{prop:lb} with $\pi_{\mathsf{cal}}=\ind_{a_{0}}$. If the calibration action $a_{0}$ is not visited during the first $t$ iterations, then $\pi^{(t+1)}$ will continue to favor $a_{1}$ and $a_{2}$, since both gaps $r(a_{1})-r(a_{0})$ and $r(a_{2})-r(a_{0})$ remain highly uncertain. In particular, we establish that $\pi^{(t+1)}(a_{0})\leq\exp(-r_{\max}/\beta)$,
which is exponentially small, implying that $a_{0}$ is unlikely to
be sampled in iteration $t+1$. As a result, with constant probability, $a_{0}$ will not be sampled within the first $O(\exp(r_{\max}/\beta))$ iterations, and the resulting highly suboptimal policy incurs linear regret over an exponentially long horizon. This example highlights an algorithmic drawback: although VPO acknowledges uncertainty in the reward gaps between $a_{1}$ and $a_{0}$ (and between $a_{2}$ and $a_{0}$), it continues to encourage sampling $a_{1}$ and $a_{2}$, leading primarily to comparisons between them that fail to reduce their uncertainty relative to $a_{0}$.

\subsection{Our approach: exploration based on uncertainty}

\begin{algorithm}[t]
	\DontPrintSemicolon
	\SetNoFillComment
	\textbf{Input:} initial policies $\pi^{(0)},\pi^{(1)}$, regularizaton parameters $\{\alpha_t\}_{t\geq1}$. \\
	\For{$ t = 1$ \KwTo $T$}{
		Sample a prompt $x^{t}\sim\rho$ and two answers $a_{1}^{t} \sim \pi^{(t-1)}(\cdot\mymid x^{t})$, $ a_{2}^{t}\sim\pi^{(t)}(\cdot\mymid x^{t})$. \\
		Query the preference oracle to obtain pairwise comparison $a_{+}^{t}\succ a_{-}^{t}$ and update the preference dataset $\mathcal{D}^{(t)}=\mathcal{D}^{(t-1)}\cup\{(x^{t},a_{+}^{t},a_{-}^{t})\}$.\\
		Update the reward model $r^{(t+1)}$ and the policy $\pi^{(t+1)}$
		using $\mathcal{D}^{(t)}$:
		\vspace{-1.5ex}
		\begin{subequations}\label{eq:our-method}
			\begin{align}
				r^{(t+1)} & =\mathop{\arg\max}_{r:\mathcal{X}\times\mathcal{A}\to[0,r_{\max}]}\ell(r,\mathcal{D}^{(t)})+\alpha_{t}J^{\star}(r;\pi^{(t)}),\label{eq:update-r}\\
				\pi^{(t+1)} & =\mathop{\arg\max}_{\pi}J(\pi,r^{(t+1)};\pi^{(t)}).\label{eq:update-pi}
			\end{align}
		\end{subequations}
		where the policy update (\ref{eq:update-pi}) admits closed-form solution (\ref{eq:optimal-policy-given-r}).
	}
	\textbf{Output:} $\{ \pi^{(t)}: 1\leq t \leq T\}$
	\caption{Uncertainty-based RLHF exploration.\label{alg:explore}}
\end{algorithm}

A natural modification to address the issue above is to change the sampling scheme so that $a_{1}^{t}\sim\pi^{(t)}$ and $a_{2}^{t}\sim\pi_{\mathsf{cal}}$.
The intuition is that $\pi^{(t)}$ encourages to explore actions with
higher estimation uncertainty relative to the actions favored by the
calibration policy $\pi_{\mathsf{cal}}$. To effectively reduce this
uncertainty, it is sensible to compare one action drawn from $\pi^{(t)}$
with another drawn from $\pi_{\mathsf{cal}}$. Indeed, the XPO
and SELM algorithms \citep{xie2025exploratory,zhang2025selfexploring} can be viewed
as taking $\pi_{\mathsf{cal}}=\pi_{\mathsf{ref}}$. 

However, if the fixed calibration policy $\pi_{\mathsf{cal}}$ is highly suboptimal for reward maximization (for example, if it concentrates on a few low-reward actions), then the comparison will almost always favor $a_{1}^{t}\sim\pi^{(t)}$ against $a_{2}^{t}\sim\pi_{\mathsf{cal}}$,
yielding little useful information. This issue is illustrated in the following example.  

\begin{example} \label{example:2}
	Consider the bandit setting with three actions $\mathcal{A}=\{a_{0},a_{1},a_{2}\}$, where the true rewards are $r^{\star}(a_{0})=0$, $r^{\star}(a_{1})=r_{\max}$ and $r^{\star}(a_{2})=r_{\max}-2$. Let the reference policy be $\pi_{\mathsf{ref}}(a_{0})=1-2/\kappa$, $\pi_{\mathsf{ref}}(a_{1})=\pi_{\mathsf{ref}}(a_{2})=1/\kappa$ for any $\kappa\geq 4$.
\end{example}

The following result shows that, when $\kappa$ is large (as we will see in Assumption~\ref{assumption:1}, this corresponds to the case where the reference policy deviates from human preference), this modified sampling schemes can lead to inefficient exploration in this setting. The proof is deferred to Appendix~\ref{sec:proof-lb-2}.

\begin{prop} \label{prop:2}
	Consider the setup in Example~\ref{example:2}.
	Assume that $\beta\leq 1$ and $\kappa \leq \exp(r_{\max}/\beta)$. For any initial policy $\pi^{(1)}$ and any $\alpha>0$, with probability at least
	$1/64$, the modified exploration scheme which samples $a_{1}^{t}\sim\pi^{(t)}$ and $a_{2}^{t}\sim\pi_{\mathsf{ref}}$ satisfies
	\[
	J(\pi^{\star},r^{\star};\pi_{\mathsf{ref}})-J(\pi^{(t)},r^{\star};\pi_{\mathsf{ref}})\geq 0.01
	\]
	for any $1<t\leq \min\{\kappa,\exp(r_{\max})/2\}$.
\end{prop}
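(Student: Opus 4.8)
The idea I would pursue is to exhibit an event $\mathcal E$ of probability at least $1/64$ on which the comparisons collected never distinguish $a_1$ from $a_2$, so that the reward estimates stay ``frozen'' with $r^{(t)}(a_1)=r^{(t)}(a_2)$ and the iterates $\pi^{(t)}$ keep over-weighting the slightly suboptimal action $a_2$; one then reads off the claimed regret from an explicit divergence computation. The first step is a reduction: since $\pi^{\star}=\arg\max_{\pi}J(\pi,r^{\star};\pi_{\mathsf{ref}})$ and the closed form \eqref{eq:optimal-policy-given-r} for $\pi^{\star}$ lets one rewrite \eqref{eq:defn-J} as $J(\pi^{\star},r^{\star};\pi_{\mathsf{ref}})-J(\pi,r^{\star};\pi_{\mathsf{ref}})=\beta\,\mathsf{KL}(\pi\parallel\pi^{\star})$ for every $\pi$, it suffices to prove $\beta\,\mathsf{KL}(\pi^{(t)}\parallel\pi^{\star})\ge 0.01$. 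I would also record that keeping only the $a_1$-term in the normalizer of $\pi^{\star}$ gives $\pi^{\star}(a_2)\le e^{-2/\beta}$, i.e.\ the target assigns exponentially small mass to $a_2$.

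\textbf{The uninformative event.} Let $T^{\star}:=\min\{\kappa,e^{r_{\max}}/2\}$ and let $\mathcal E$ be the event that for every $1\le s<T^{\star}$ one has (i) $a_2^{s}=a_0$ and (ii) the oracle returns $a_1^{s}\succ a_0$ whenever $a_1^{s}\in\{a_1,a_2\}$. I would bound $\Pr[\mathcal E]$ by conditioning step by step: (i) has conditional probability $1-2/\kappa$ because $a_2^{s}\sim\pi_{\mathsf{ref}}$ is independent of the past; and, on the sub-event that steps $1,\dots,s-1$ already lie in $\mathcal E$, the freezing claim below gives $r^{(s)}=(0,r_{\max},r_{\max})$, hence $\pi^{(s)}=\tfrac1{u+2v}(u,v,v)$ with $u=\kappa-2$ and $v=e^{r_{\max}/\beta}$, so the conditional probability of (ii) equals $\pi^{(s)}(a_0)+\pi^{(s)}(a_1)\sigma(r_{\max})+\pi^{(s)}(a_2)\sigma(r_{\max}-2)$, which is bounded below by an explicit constant. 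Multiplying over $s<T^{\star}$ and bookkeeping the worst case in $(\kappa,r_{\max})$ --- the constraints $\kappa\ge4$, $T^{\star}\le\kappa$ and $T^{\star}\le e^{r_{\max}}/2$ keep both accumulated products bounded below --- yields $\Pr[\mathcal E]\ge 1/64$ (this balancing is where the specific constant $1/64$ comes from; if the plain product bound were marginally too weak I would enlarge $\mathcal E$ to permit a bounded number of informative comparisons and absorb the resulting perturbation of the MLE).

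\textbf{Freezing of the reward model --- the main obstacle.} On $\mathcal E$ every sample in $\mathcal D^{(s)}$, $s<T^{\star}$, is either an $a_0$-vs-$a_0$ tie or a comparison of $a_0$ against some $a\in\{a_1,a_2\}$ won by $a$, so $\ell(r,\mathcal D^{(s)})=n_1\log\sigma(r(a_1)-r(a_0))+n_2\log\sigma(r(a_2)-r(a_0))+\mathrm{const}$ with $n_1,n_2\ge0$, which over $[0,r_{\max}]^{3}$ is maximized exactly at $(r(a_0),r(a_1),r(a_2))=(0,r_{\max},r_{\max})$: the data cannot tell $a_1$ from $a_2$. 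I then need that adding the optimism term $\alpha\,J^{\star}(\,\cdot\,;\pi_{\mathsf{ref}})$ of \eqref{eq:VPO-r} does not change this, so that $r^{(t)}=(0,r_{\max},r_{\max})$ for all $1<t\le T^{\star}$. Since $J^{\star}(\,\cdot\,;\pi_{\mathsf{ref}})$ is invariant under $r\mapsto r+c\mathbf 1$ and symmetric in $(r(a_1),r(a_2))$, the case of small $\alpha$ (likelihood dominates) is immediate; for large $\alpha$ one must locate $\arg\max_{r}J^{\star}(r;\pi_{\mathsf{ref}})$ over the box precisely (it sits at a vertex determined by $\pi_{\mathsf{ref}}$) and check how the likelihood breaks its ties. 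I expect this uniform-in-$\alpha$ verification to be the technical heart of the argument.

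\textbf{Lower bound on the regret.} Granting the freezing, write $\pi^{(t)}=\tfrac1{u+2v}(u,v,v)$ and $\pi^{\star}=\tfrac1{u+v+w}(u,v,w)$ with $u=\kappa-2$, $v=e^{r_{\max}/\beta}$, $w=v\,e^{-2/\beta}$; note $0<u<v$ since $\kappa\le e^{r_{\max}/\beta}$. A direct evaluation of the divergence (the $a_0$-term combines with the two ``$v$''-pieces into $\log\frac{u+v+w}{u+2v}$, while the $a_2$-piece adds the extra $\tfrac{v}{u+2v}\log\tfrac{v}{w}=\tfrac{2v}{\beta(u+2v)}$) gives
\[
\beta\,\mathsf{KL}\big(\pi^{(t)}\parallel\pi^{\star}\big)=\frac{2v}{u+2v}+\beta\log\frac{u+v+w}{u+2v}\;>\;\frac{2}{3}+\beta\log\frac{1+e^{-2/\beta}}{2},
\]
using $\tfrac{v}{u+2v}>\tfrac13$ and $\tfrac{u+v+w}{u+2v}>\tfrac{1+e^{-2/\beta}}{2}$, both consequences of $0<u<v$. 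It then remains to check, by a one-variable calculus argument, that the right-hand side is decreasing in $\beta$ on $(0,1]$ with value $\tfrac23-\log\tfrac{2}{1+e^{-2}}\approx0.10$ at $\beta=1$, hence exceeds $0.01$ on all of $(0,1]$. Since this holds simultaneously for every $1<t\le T^{\star}$ on $\mathcal E$, the proof would be complete.
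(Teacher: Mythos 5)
Your proposal is essentially correct and shares the paper's skeleton for the first half: you construct the same ``uninformative'' event (every $a_2^t$ drawn from $\pi_{\mathsf{ref}}$ lands on $a_0$, and every comparison against $a_0$ is won by the sampled arm), observe that on this event the likelihood depends only on $r(a_1)-r(a_0)$ and $r(a_2)-r(a_0)$ and is maximized at $(0,r_{\max},r_{\max})$, and conclude that $\pi^{(t)}\propto(\kappa-2,e^{r_{\max}/\beta},e^{r_{\max}/\beta})$ throughout. Where you genuinely diverge is the final regret computation: the paper lower-bounds $J(\pi^{\star})-J(\pi^{(t)})$ by a first/second-order Taylor interpolation along $\pi_\theta=\theta\pi^{(t)}+(1-\theta)\pi^{\star}$, with explicit gradient and Hessian controls and an optimized $\theta^{\star}=5/32$, landing at $25/2048$. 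You instead use the exact identity $J(\pi^{\star},r^{\star};\pi_{\mathsf{cal}})-J(\pi,r^{\star};\pi_{\mathsf{cal}})=\beta\,\mathsf{KL}(\pi\parallel\pi^{\star})$ (which follows from substituting the closed form \eqref{eq:optimal-policy-given-r} into \eqref{eq:defn-J}) and evaluate the KL in closed form, getting $\frac{2v}{u+2v}+\beta\log\frac{u+v+w}{u+2v}>\frac{2}{3}+\beta\log\frac{1+e^{-2/\beta}}{2}\geq 0.10$ on $(0,1]$. I checked this computation and the monotonicity claim; your route is cleaner, avoids the Hessian bookkeeping entirely, and yields a constant roughly eight times larger than the paper's.

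Two places remain open in your sketch, though neither is a wrong turn. First, the claim that the optimism term $\alpha J^{\star}(\cdot;\pi_{\mathsf{ref}})$ does not move the argmax away from $(0,r_{\max},r_{\max})$ for \emph{every} $\alpha>0$: you correctly flag this as the technical heart, but the paper also only asserts it is ``straightforward to check,'' so you are no worse off; the key observation is that $J^{\star}(\cdot;\pi_{\mathsf{ref}})$ is itself (weakly) maximized over the box at a point with $r(a_0)=0$ and $r(a_1)=r(a_2)=r_{\max}$ by symmetry in $(a_1,a_2)$ and shift invariance, so the two objectives share a common maximizer and no tie-breaking analysis is needed. Second, your probability bookkeeping for the constant $1/64$ is left as ``balancing''; the paper's own accounting is $(1-2/\kappa)^{T}\geq 1/16$ times a preference-consistency factor $\geq 1/4$ obtained from $T\leq e^{r_{\max}}/2$, and your conditional-probability expression $\pi^{(s)}(a_0)+\pi^{(s)}(a_1)\sigma(r_{\max})+\pi^{(s)}(a_2)\sigma(r_{\max}-2)$ is the right quantity to bound, so this reduces to the same elementary estimate. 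With those two steps written out, your argument is complete.
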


This lower bound suggests that relying on a fixed calibration policy can lead to inefficient exploration over an exponentially long horizon. We will come back to this example in Section~\ref{sec:theory} after presenting our algorithm and theoretical guarantees. This observation motivates us to update the calibration policy in each iteration adaptively.  

\paragraph{Uncertainty-based exploration.}  
We propose an exploration scheme where the calibration policy evolves with the iterations. In the $t$-th iteration, instead of a fixed $\pi_{\mathsf{cal}}$
, we use $\pi^{(t)}$ as the calibration policy when optimizing $r^{(t+1)}$
and $\pi^{(t+1)}$:\begin{subequations} 
	\begin{align*}
		r^{(t+1)} & =\mathop{\arg\max}_{r:\mathcal{X}\times\mathcal{A}\to[0,r_{\max}]}\ell(r,\mathcal{D}^{(t)})+\alpha_{t}J^{\star}(r;\pi^{(t)}),\\
		\pi^{(t+1)} & =\mathop{\arg\max}_{\pi}J(\pi,r^{(t+1)};\pi^{(t)}).
	\end{align*}
\end{subequations}
The key advantage is that $\pi^{(t)}$ improves over time, guiding exploration away from uninformative comparisons. Since $\pi^{(t)}$ emphasizes actions with higher uncertainty relative to $\pi^{(t-1)}$, it is natural to compare $a_{1}^{t}\sim\pi^{(t-1)}$ and $a_{2}^{t}\sim\pi^{(t)}$. This yields preference data that more directly reduces uncertainty, leading to more efficient exploration. Our full exploration scheme is summarized in Algorithm~\ref{alg:explore}.

\section{Theoretical results} \label{sec:theory}

We establish theoretical guarantees for Algorithm~\ref{alg:explore} under the multi-armed bandit setting
(i.e., $\mathcal{X}=\varnothing$) with $A=|\mathcal{A}|$. We begin
with a general regret bound, whose proof is deferred to Section~\ref{sec:proof}.

\begin{theorem} \label{thm:main} 
Let $\alpha_{t}>A\log T$ be non-decreasing in $t$. There exists a universal constant $C>0$ such that, with probability at least $1-O(T^{-10})$, the cumulative regret of running Algorithm~\ref{alg:explore} for $T$ iterations satisfies
\begin{align}
\mathcal{R}(T) & \leq Cr_{\mathsf{max}}A^{2}\sqrt{T\log T}+C\sum_{t=1}^{T}\frac{Ar_{\mathsf{max}}\log T}{\alpha_{t}}+CA^{2}\alpha_{T}r_{\mathsf{max}}^{2}\label{eq:general-regret}\\
 & \quad+C(r_{\mathsf{max}}+\log T) \hspace{-2ex} \sum_{r^{\star}(a_{+})\ge r^{\star}(a_{-})} \hspace{-2ex}  \min\bigg\{\frac{\pi_{\mathsf{HF}}(a_{+})}{\pi_{\mathsf{HF}}(a_{-})}\alpha_{T}r_{\mathsf{max}},\left(T\frac{\pi_{\mathsf{ref}}(a_{-})}{\pi_{\mathsf{ref}}(a_{+})}\right)^{\frac{\beta}{\beta+1}}\alpha_{T}^{\frac{1}{\beta+1}}r_{\max}^{\frac{1}{\beta+1}}\bigg\}.\nonumber 
\end{align}
 \end{theorem}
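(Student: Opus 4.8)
The plan is to decompose the per-iteration regret $J(\pi^\star, r^\star; \pi^{(t)}) - J(\pi^{(t)}, r^\star; \pi^{(t)})$ (using that the calibration policy does not affect the objective gap) into a statistical-estimation term and an optimism/distribution-shift term, then sum over $t$. Concretely, for each $t$ I would write
\[
J(\pi^\star, r^\star; \pi^{(t)}) - J(\pi^{(t)}, r^\star; \pi^{(t)}) = \underbrace{\big[J^\star(r^\star;\pi^{(t)}) - J^\star(r^{(t+1)};\pi^{(t)})\big]}_{\text{(I): optimism deficit}} + \underbrace{\big[J(\pi^{(t+1)}, r^{(t+1)};\pi^{(t)}) - J(\pi^{(t)}, r^\star;\pi^{(t)})\big]}_{\text{(II): on-policy reward error}},
\]
after noting $J^\star(r^{(t+1)};\pi^{(t)}) = J(\pi^{(t+1)}, r^{(t+1)};\pi^{(t)})$. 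Term (I) is controlled by the optimism step \eqref{eq:update-r}: since $r^{(t+1)}$ maximizes $\ell(r,\mathcal D^{(t)}) + \alpha_t J^\star(r;\pi^{(t)})$, comparing against the feasible point $r = r^\star$ gives $\alpha_t[J^\star(r^\star;\pi^{(t)}) - J^\star(r^{(t+1)};\pi^{(t)})] \le \ell(r^{(t+1)},\mathcal D^{(t)}) - \ell(r^\star,\mathcal D^{(t)})$, and the right-hand side is bounded by a standard MLE/martingale concentration argument (the log-likelihood ratio of the true parameter against any competitor in a Bradley–Terry model, summed over the adaptively collected comparisons, is $O(\log T)$ with high probability — this is where the $-O(T^{-10})$ failure probability enters). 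Dividing by $\alpha_t$ yields the $\sum_t A r_{\max}\log T/\alpha_t$ contribution.

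For term (II), I would use that $\pi^{(t+1)} = \pi_{r^{(t+1)}}$ is the KL-regularized optimal policy for $r^{(t+1)}$ with reference $\pi_{\mathsf{ref}}$ and calibration $\pi^{(t)}$, and compare it to $\pi^{(t)}$. Writing (II) in terms of reward differences, it splits into (a) the gap between $J(\pi^{(t+1)}, r^\star;\pi^{(t)})$ and $J(\pi^{(t)}, r^\star;\pi^{(t)})$ — which is nonnegative up to how much $\pi^{(t+1)}$ actually improves, and telescopes favorably — and (b) the estimation error $J(\pi^{(t+1)}, r^{(t+1)};\pi^{(t)}) - J(\pi^{(t+1)}, r^\star;\pi^{(t)}) = \mathbb E_{a\sim\pi^{(t+1)}}[(r^{(t+1)}-r^\star)(a)] - \mathbb E_{a\sim\pi^{(t)}}[(r^{(t+1)}-r^\star)(a)]$. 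The crucial point, and the heart of the argument, is that $(r^{(t+1)}-r^\star)$ evaluated on the pair of distributions $(\pi^{(t)}, \pi^{(t+1)})$ is exactly the reward-difference direction that the sampling scheme $a_1^t\sim\pi^{(t-1)}$, $a_2^t\sim\pi^{(t)}$ is designed to shrink (up to the index shift between $t$ and $t{+}1$, which I would handle by a one-step perturbation bound showing $\pi^{(t+1)}\approx\pi^{(t)}$ when $\alpha_t$ is large, incurring the $A^2\alpha_T r_{\max}^2$ and $A^2\sqrt{T\log T}$ terms). Converting the per-pair information gain into a regret bound gives an elliptical-potential-type sum; because the comparisons are between $\pi^{(t-1)}$ and $\pi^{(t)}$ rather than independent actions, the effective "coverage" of action $a$ scales with $\min\{\pi^{(t)}(a), \pi^{(t-1)}(a)\}$, and it is here that the two-sided minimum in the last line of \eqref{eq:general-regret} arises — one branch $\tfrac{\pi_{\mathsf{HF}}(a_+)}{\pi_{\mathsf{HF}}(a_-)}\alpha_T r_{\max}$ from bounding the total number of "wasted" comparisons on a dominated pair, the other branch $\big(T\,\tfrac{\pi_{\mathsf{ref}}(a_-)}{\pi_{\mathsf{ref}}(a_+)}\big)^{\beta/(\beta+1)}\alpha_T^{1/(\beta+1)}r_{\max}^{1/(\beta+1)}$ from balancing how fast $\pi^{(t)}(a_-)$ can decay (at rate governed by $\beta$ through the closed form \eqref{eq:optimal-policy-given-r}) against the accumulated estimation error.

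The main obstacle I anticipate is precisely this last step: quantitatively relating the adaptively-collected pairwise comparisons to a per-action uncertainty bound that degrades gracefully as $\pi^{(t)}$ puts little mass on low-reward actions. Unlike a single-reward bandit where a UCB/elliptical-potential argument is routine, here the log-likelihood information only constrains reward \emph{differences} along the sampled pair directions, so I need a careful accounting — likely an induction on $t$ controlling $\sum_{s\le t} \min\{\pi^{(s-1)}(a),\pi^{(s)}(a)\}\cdot(\text{squared reward-difference error on }a)$ — together with the closed-form policy expression to lower bound $\pi^{(t)}(a)$ in terms of $\pi_{\mathsf{ref}}(a)$ and the current reward estimate. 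Handling the interplay between the optimism bonus (which inflates estimates and keeps $\pi^{(t)}(a)$ from collapsing too fast) and the KL regularization (which pins $\pi^{(t)}$ near $\pi_{\mathsf{ref}}$) is what produces the $\beta/(\beta+1)$ exponent, and getting the constants and the union bound over all $\binom{A}{2}$ ordered pairs to line up is the delicate part; everything else is bounding error terms via the triangle inequality and summing geometric-type series.
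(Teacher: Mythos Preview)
Your decomposition into (I) and (II) is essentially the paper's, but you introduce an unnecessary index shift. The paper writes $\mathsf{Regret}_t = J(\pi^\star,r^\star;\pi^{(t-1)}) - J(\pi^{(t)},r^\star;\pi^{(t-1)})$ and bounds it via the optimality of $r^{(t)}$ (not $r^{(t+1)}$), giving directly $\mathsf{Regret}_t \le \theta_t + \gamma_t$ with $\gamma_t = \mathbb{E}_{a\sim\pi^{(t)}}[(r^{(t)}-r^\star)(a)] - \mathbb{E}_{a\sim\pi^{(t-1)}}[(r^{(t)}-r^\star)(a)]$. Since $a_1^t\sim\pi^{(t-1)}$ and $a_2^t\sim\pi^{(t)}$, this is \emph{exactly} the sampled direction, so $\gamma_t = r^{(t)}(a_2^t)-r^{(t)}(a_1^t)-r^\star(a_2^t)+r^\star(a_1^t) + \xi_t$ for a martingale increment $\xi_t$. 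No ``one-step perturbation $\pi^{(t+1)}\approx\pi^{(t)}$'' is needed; in particular the $A^2\alpha_T r_{\max}^2$ term does \emph{not} arise from such a perturbation but from a crude burn-in bound (the first $O(\alpha_T r_{\max})$ visits to each pair are charged $r_{\max}$ apiece), and the $A^2\sqrt{T\log T}$ term is Azuma on $\sum_t\xi_t$ plus residuals in the per-pair analysis.

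The real gap is in your treatment of $\zeta\coloneqq\sum_t|r^{(t)}(a_2^t)-r^{(t)}(a_1^t)-r^\star(a_2^t)+r^\star(a_1^t)|$. The paper does not run a per-action elliptical-potential argument; it works \emph{per ordered pair} $(a_+,a_-)$ with $r^\star(a_+)\ge r^\star(a_-)$, tracking the visit count $N_t(a_+,a_-)$. From the likelihood bound one gets $\mathsf{KL}\bigl(\sigma(r^\star(a_+)-r^\star(a_-))\,\|\,\sigma(r^{(t)}(a_+)-r^{(t)}(a_-))\bigr)\le C\alpha_t r_{\max}/N_t(a_+,a_-)$. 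The crux is a \emph{counting lemma}: once the KL error on a pair is small, the closed form \eqref{eq:optimal-policy-given-r} forces $\pi^{(t)}(a_-)/\pi^{(t)}(a_+)\lesssim \tfrac{\pi_{\mathsf{ref}}(a_-)}{\pi_{\mathsf{ref}}(a_+)}\bigl[\sigma(r^\star(a_-)-r^\star(a_+)) + \mathsf{KL}(\cdot\|\cdot)\bigr]^{1/\beta}$, which in turn upper-bounds the probability of sampling $a_-$ and hence future increments of $N_t(a_+,a_-)$. This feedback between the KL error and the visit rate is what produces the $\beta/(\beta+1)$ exponent; it is \emph{not} an induction on per-action coverage but a self-consistency for $N_T(a_+,a_-)$. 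Finally, the paper splits into two cases---$N_T$ small versus $N_T$ large relative to $\exp(r^\star(a_+)-r^\star(a_-))\alpha_T r_{\max}$---and in the large case uses a self-bounding recursion (Cauchy--Schwarz plus the KL bound feeds $\zeta(a_+,a_-)$ back into itself through $\gamma_t$) over dyadic blocks of visits. Your proposal gestures at the right mechanism for the two branches of the minimum but does not contain either the counting lemma or the self-bounding step, and these are where all the work is.
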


We now discuss the implications of Theorem~\ref{thm:main}. When
$\beta=0$, which corresponds to the case where only reward maximization
matters, the regret bound~\eqref{eq:general-regret} simplifies to
\[
\mathcal{R}(T)=\widetilde{O}\big((A^{3/2}r_{\max}^{3/2}+A^{2}r_{\max})\sqrt{T}\big)\qquad\text{when}\qquad\alpha_{t}\asymp A\log T+\sqrt{\frac{t}{Ar_{\max}}}.
\]
When $\beta>0$, the performance of the exploration algorithm becomes
more intricate due to the trade-off between reward maximization and
similarity to the reference policy. To interpret the general regret
bound in this regime, we introduce the following assumption to capture
the interaction between human preference $\pi_{\mathsf{HF}}$ and
the reference policy $\pi_{\mathsf{ref}}$.

\begin{assumption} \label{assumption:1}There exists $\kappa,\tau\geq1$
such that, for any action pair $(a_{+},a_{-})$, 
\[
\frac{\pi_{\mathsf{HF}}(a_{+})}{\pi_{\mathsf{HF}}(a_{-})}\geq\tau\quad\Longrightarrow\quad\frac{\pi_{\mathsf{ref}}(a_{+})}{\pi_{\mathsf{ref}}(a_{-})}\geq\kappa^{-1}.
\]
\end{assumption}

Intuitively, Assumption~\ref{assumption:1} requires that whenever
$a_{+}$ is substantially more preferred than $a_{-}$ under human
preference, the reference policy does not assign disproportionately
higher weight to $a_{-}$ than to $a_{+}$. This is reasonable, since
$\pi_{\mathsf{ref}}$ is obtained from the SFT step, where a pretrained
LLM is fine-tuned on human demonstrations already broadly aligned
with preference. The quantities $\kappa$ and $\tau$ capture the degree of alignment between $\pi_{\mathsf{ref}}$ and $\pi_{\mathsf{HF}}$, and their size reflects the influence of the reference policy on RLHF. We note that the illustrative Example~\ref{example:1} satisfies Assumption~\ref{assumption:1} with $\kappa,\tau=O(1)$, and the parameter $\kappa$ in Example~\ref{example:2} is consistent with the $\kappa$ here. Under this assumption, we obtain the following simplified regret bound, whose proof is deferred to Appendix~\ref{sec:proof-prop-regret-1}. 

\begin{prop} \label{prop:regret-1} Suppose that Assumption~\ref{assumption:1}
holds. Let 
\[
\alpha_{t}=A\log T+t^{\frac{1}{\beta+2}}\Big(\frac{r_{\max}}{\kappa}\Big)^{\frac{\beta}{\beta+2}}\Big(\frac{\log T}{A(r_{\max}+\log T)}\Big)^{\frac{\beta+1}{\beta+2}}.
\]
Then with probability at least $1-O(T^{-10})$, we have 
\[
\mathcal{R}(T)\lesssim (\tau + \kappa^\beta T^{\frac{\beta+1}{\beta+2}})\,\mathsf{poly}(A,r_{\max},\log T),
\]
where the degree of the polynomial factor does not depend on $\beta$.
\end{prop}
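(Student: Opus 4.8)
The plan is to derive Proposition~\ref{prop:regret-1} directly from the general regret bound in Theorem~\ref{thm:main} by substituting the prescribed schedule for $\alpha_t$ and then controlling each of the four terms in~\eqref{eq:general-regret}. Write $\alpha_t = A\log T + \gamma\, t^{1/(\beta+2)}$ where $\gamma = (r_{\max}/\kappa)^{\beta/(\beta+2)}(\log T/(A(r_{\max}+\log T)))^{(\beta+1)/(\beta+2)}$, so that $\alpha_T \asymp A\log T + \gamma T^{1/(\beta+2)}$. The first term $Cr_{\max}A^2\sqrt{T\log T}$ is already of the claimed form, being $\mathsf{poly}(A,r_{\max},\log T)\cdot\sqrt T$ with $\sqrt T \le T^{(\beta+1)/(\beta+2)}$ for $\beta\ge 1$ (and for $\beta<1$ one absorbs it into the $\tau$-free polynomial part — actually since $\sqrt T\le T^{(\beta+1)/(\beta+2)}$ precisely when $\beta\ge 0$, this is immediate). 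The third term $CA^2\alpha_T r_{\max}^2$ becomes $\mathsf{poly}(A,r_{\max},\log T)\cdot T^{1/(\beta+2)} \le \mathsf{poly}\cdot T^{(\beta+1)/(\beta+2)}$, again fine. The second term $\sum_{t=1}^T Ar_{\max}\log T/\alpha_t$: since $\alpha_t \ge \gamma t^{1/(\beta+2)}$, this sum is at most $(Ar_{\max}\log T/\gamma)\sum_t t^{-1/(\beta+2)} \lesssim (Ar_{\max}\log T/\gamma)\, T^{(\beta+1)/(\beta+2)}$, and plugging in $\gamma$ gives a bound of the form $\kappa^{\beta/(\beta+2)}\cdot\mathsf{poly}(A,r_{\max},\log T)\cdot T^{(\beta+1)/(\beta+2)}$, which is dominated by $\kappa^\beta T^{(\beta+1)/(\beta+2)}\,\mathsf{poly}$ since $\kappa\ge 1$.

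The main work is the fourth term, $C(r_{\max}+\log T)\sum_{r^\star(a_+)\ge r^\star(a_-)}\min\{(\pi_{\mathsf{HF}}(a_+)/\pi_{\mathsf{HF}}(a_-))\alpha_T r_{\max},\ (T\pi_{\mathsf{ref}}(a_-)/\pi_{\mathsf{ref}}(a_+))^{\beta/(\beta+1)}\alpha_T^{1/(\beta+1)}r_{\max}^{1/(\beta+1)}\}$. Here I would split the sum over pairs $(a_+,a_-)$ according to whether $\pi_{\mathsf{HF}}(a_+)/\pi_{\mathsf{HF}}(a_-) < \tau$ or $\ge\tau$. For pairs with ratio below $\tau$, bound the min by its first argument $\le \tau\,\alpha_T r_{\max}$; summing over the at most $A^2$ pairs and using $\alpha_T \lesssim A\log T + \gamma T^{1/(\beta+2)}$, the contribution is $\tau\cdot\mathsf{poly}(A,r_{\max},\log T)\cdot(1 + T^{1/(\beta+2)}) \lesssim \tau\,\mathsf{poly}\cdot T^{(\beta+1)/(\beta+2)}$, consistent with the $\tau$-term in the claim (and the $\mathsf{poly}$ degree is $\beta$-independent since $T^{1/(\beta+2)}\le T$). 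For pairs with $\pi_{\mathsf{HF}}(a_+)/\pi_{\mathsf{HF}}(a_-)\ge\tau$, invoke Assumption~\ref{assumption:1} to get $\pi_{\mathsf{ref}}(a_+)/\pi_{\mathsf{ref}}(a_-)\ge\kappa^{-1}$, i.e. $\pi_{\mathsf{ref}}(a_-)/\pi_{\mathsf{ref}}(a_+)\le\kappa$; then bound the min by its second argument, which is at most $(T\kappa)^{\beta/(\beta+1)}\alpha_T^{1/(\beta+1)}r_{\max}^{1/(\beta+1)} = \kappa^{\beta/(\beta+1)}T^{\beta/(\beta+1)}\alpha_T^{1/(\beta+1)}r_{\max}^{1/(\beta+1)}$. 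Now substitute $\alpha_T \asymp A\log T + \gamma T^{1/(\beta+2)}$: the dominant piece is $\kappa^{\beta/(\beta+1)}T^{\beta/(\beta+1)}(\gamma T^{1/(\beta+2)})^{1/(\beta+1)}r_{\max}^{1/(\beta+1)}$, and the exponent of $T$ is $\beta/(\beta+1) + 1/((\beta+2)(\beta+1)) = (\beta(\beta+2)+1)/((\beta+1)(\beta+2)) = (\beta+1)^2/((\beta+1)(\beta+2)) = (\beta+1)/(\beta+2)$ — exactly the target rate. It remains to check the $\kappa$-exponent: combining $\kappa^{\beta/(\beta+1)}$ with the $\kappa$-factor hidden in $\gamma^{1/(\beta+1)} \asymp (r_{\max}/\kappa)^{\beta/((\beta+2)(\beta+1))}\cdot(\ldots)$ gives a net power $\kappa^{\beta/(\beta+1) - \beta/((\beta+2)(\beta+1))} = \kappa^{\beta(\beta+1)/((\beta+1)(\beta+2))} = \kappa^{\beta/(\beta+2)} \le \kappa^\beta$, so the whole thing is $\le \kappa^\beta T^{(\beta+1)/(\beta+2)}\,\mathsf{poly}(A,r_{\max},\log T)$ (the subdominant $A\log T$ piece of $\alpha_T$ contributes only $\kappa^{\beta/(\beta+1)}T^{\beta/(\beta+1)}(A\log T)^{1/(\beta+1)}r_{\max}^{1/(\beta+1)} \le \kappa\,T\,\mathsf{poly}$, which is fine but crude — one should note $T^{\beta/(\beta+1)}\le T^{(\beta+1)/(\beta+2)}$ to keep the rate, which holds since $\beta/(\beta+1)\le(\beta+1)/(\beta+2)$).

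Collecting the four bounds gives $\mathcal{R}(T)\lesssim(\tau + \kappa^\beta T^{(\beta+1)/(\beta+2)})\,\mathsf{poly}(A,r_{\max},\log T)$, and throughout one tracks that every polynomial factor has $\beta$-independent degree: the key subtlety here is that powers like $T^{1/(\beta+2)}$, $\alpha_T^{1/(\beta+1)}$ etc. must each be bounded either by the clean rate $T^{(\beta+1)/(\beta+2)}$ or, when they appear as lower-order additive pieces, simply by $T$ times a fixed polynomial — I would state a small lemma that $x^{1/(\beta+1)}\le 1 + x$ and $T^{a}\le T^{(\beta+1)/(\beta+2)}$ for any $0\le a\le(\beta+1)/(\beta+2)$ to handle all such reductions uniformly. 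The main obstacle I anticipate is purely bookkeeping: making sure the exponent arithmetic on $T$ and on $\kappa$ closes exactly to $(\beta+1)/(\beta+2)$ and $\beta$ respectively under the specific choice of $\gamma$, and verifying that the constant $C$ and the polynomial degrees genuinely do not blow up as $\beta\to 0$ or $\beta\to\infty$; this is why the schedule $\alpha_t$ was reverse-engineered to balance the second term against the fourth term, and I would make that balancing explicit (the $t^{1/(\beta+2)}$ growth is exactly what equalizes $\sum 1/\alpha_t \sim T^{(\beta+1)/(\beta+2)}/\gamma$ with $\gamma^{1/(\beta+1)}T^{(\beta+1)/(\beta+2)}$ up to the $\kappa$ and $r_{\max}$ weights).
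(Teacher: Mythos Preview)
Your proposal is correct and follows essentially the same approach as the paper: split the action-pair sum in the fourth term of~\eqref{eq:general-regret} according to whether $\pi_{\mathsf{HF}}(a_+)/\pi_{\mathsf{HF}}(a_-)$ lies below or above $\tau$, invoke Assumption~\ref{assumption:1} on the latter set to replace $\pi_{\mathsf{ref}}(a_-)/\pi_{\mathsf{ref}}(a_+)$ by $\kappa$, and then substitute the prescribed $\alpha_t$ to balance terms. Your exponent arithmetic on $T$ and $\kappa$ (yielding $(\beta+1)/(\beta+2)$ and $\beta/(\beta+2)\le\beta$ respectively) matches the paper's computation exactly, and your bookkeeping on $\beta$-independence of the polynomial degrees is in fact more explicit than what the paper writes out.
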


\begin{remark}
	When $\kappa$ is large, namely the reference policy deviates significantly from the human preference, it is natural to choose a small KL regularization parameter $\beta$ to reduce the influence of the reference policy. In this regime, Algorithm~\ref{alg:explore} remains robust, since the regret bound scales only with $\kappa^\beta$. By contrast, the lower bound in Proposition~\ref{prop:2} suggests that the sampling protocols in prior works \citep{xie2025exploratory,zhang2025selfexploring} would incur regret at least linear in $\kappa$. This demonstrates that our strategy accommodates scenarios with small $\beta$, where the reference policy is poorly aligned with human preference.
\end{remark}

\begin{remark}

In Appendix~\ref{sec:other-assumption-regret}, we present an alternative
assumption linking human preference and the reference policy, together with the corresponding regret guarantee.

\end{remark}

Proposition~\ref{prop:regret-1} establishes a regret bound of order
$O(T^{\frac{\beta+1}{\beta+2}})$, with only polynomial dependence
on the other parameters. This stands in sharp contrast to prior works
\citep{cen2025valueincentivized,xie2025exploratory,zhang2025selfexploring},
which achieved the more standard $O(\sqrt{T})$ regret but at the
cost of exponential dependence on terms such as $r_{\max}/\beta$.
We conjecture that, for RLHF, eliminating exponential dependence inevitably
requires a slower rate in $T$, with the exponent governed by $\beta$.
This trade-off is intuitive: online exploration primarily serves to
learn human preference, and as the regularization parameter $\beta$
increases, greater emphasis is placed on preserving similarity to
the reference measure. This constraint naturally slows convergence.

\section{Proof of Theorem~\ref{thm:main} \protect\label{sec:proof}}

\subsection{Step 1: regret decomposition}

In view of the optimality of $r^{(t)}$ (cf.~equation (\ref{eq:update-r})),
we have
\[
\ell(r^{(t)},\mathcal{D}^{(t-1)})+\alpha_{t}J^{\star}(r^{(t)};\pi^{(t-1)})\geq\ell(r^{\star},\mathcal{D}^{(t-1)})+\alpha_{t}J^{\star}(r^{\star};\pi^{(t-1)}).
\]
Rearrange terms to get
\begin{align}
\frac{1}{\alpha_{t}}\big[\ell(r^{(t)},\mathcal{D}^{(t-1)})-\ell(r^{\star},\mathcal{D}^{(t-1)})\big] & \geq J^{\star}(r^{\star};\pi^{(t-1)})-J^{\star}(r^{(t)};\pi^{(t-1)})\nonumber \\
 & \overset{\text{(i)}}{=}\max_{\pi}J(\pi,r^{\star};\pi^{(t-1)})-\max_{\pi}J(\pi,r^{(t)};\pi^{(t-1)})\nonumber \\
 & \overset{\text{(ii)}}{\geq}J(\pi^{\star},r^{\star};\pi^{(t-1)})-J(\pi^{(t)},r^{(t)};\pi^{(t-1)}).\label{eq:optimality-ineq}
\end{align}
Here step (i) follows from the definition of $J^{\star}$ (cf.~equation
\eqref{eq:J-star-defn}), while step (ii) follows from the optimality
of $\pi^{(t)}$ (cf.~equation \eqref{eq:update-pi}). This allows
us to reach the following decomposition:
\begin{align}
\mathsf{Regret}_{t} & \coloneqq J(\pi^{\star},r^{\star};\pi^{(t-1)})-J(\pi^{(t)},r^{\star};\pi^{(t-1)})\nonumber \\
 & \le\underbrace{\alpha_{t}^{-1}\big[\ell(r^{(t)},\mathcal{D}^{(t)})-\ell(r^{\star},\mathcal{D}^{(t)})\big]}_{\eqqcolon\theta_{t}}+\underbrace{J(\pi^{(t)},r^{(t)};\pi^{(t-1)})-J(\pi^{(t)},r^{\star};\pi^{(t-1)})}_{\eqqcolon\gamma_{t}}.\label{eq:regret-t-decom}
\end{align}
In view of the definition of $J$ (cf.~equation \eqref{eq:defn-J}),
we can further decompose
\begin{align*}
\gamma_{t} & =\mathbb{E}_{a\sim\pi^{(t)}}[r^{(t)}(a)]-\mathbb{E}_{a\sim\pi^{(t-1)}}[r^{(t)}(a)]-\mathbb{E}_{a\sim\pi^{(t)}}[r^{\star}(a)]+\mathbb{E}_{a\sim\pi^{(t-1)}}[r^{\star}(a)]\\
 & =r^{(t)}(a_{2}^{t})-r^{(t)}(a_{1}^{t})-r^{\star}(a_{2}^{t})+r^{\star}(a_{1}^{t})+\xi_{t}
\end{align*}
where $\xi_{t}$ is the martingale difference sequence
\begin{align*}
\xi_{t} & =\mathbb{E}_{a\sim\pi^{(t)}}[r^{(t)}(a)]-r^{(t)}(a_{2}^{t})-\mathbb{E}_{a\sim\pi^{(t-1)}}[r^{(t)}(a)]+r^{(t)}(a_{1}^{t})\\
 & \qquad-\mathbb{E}_{a\sim\pi^{(t)}}[r^{\star}(a)]+r^{\star}(a_{2}^{t})+\mathbb{E}_{a\sim\pi^{(t-1)}}[r^{\star}(a)]-r^{\star}(a_{1}^{t}).
\end{align*}
Therefore we have
\begin{equation}
\mathsf{Regret}=\sum_{t=1}^{T}\mathsf{Regret}_{t}\leq\underbrace{\sum_{t=1}^{T}\theta_{t}}_{\eqqcolon\theta}+\underbrace{\sum_{t=1}^{T}\xi_{t}}_{\eqqcolon\xi}+\underbrace{\sum_{t=1}^{T}\vert r^{(t)}(a_{2}^{t})-r^{(t)}(a_{1}^{t})-r^{\star}(a_{2}^{t})+r^{\star}(a_{1}^{t})\vert}_{\eqqcolon\zeta}.\label{eq:regret-decom}
\end{equation}
It is straightforward to bound the second term $\xi$. Notice that
$|\xi_{t}|\leq8r_{\max}$ holds deterministically for any $1\leq t\leq T$.
By the Azuma-Hoeffding inequality, with probability exceeding $1-O(T^{-10})$
we have
\begin{equation}
\xi=\sum_{t=1}^{T}\xi_{t}\leq C_{1}r_{\mathsf{max}}\sqrt{T\log T}\label{eq:xi-bound}
\end{equation}
for some universal constant $C_{1}>0$. In what follows, we bound
the other two terms $\theta$ and $\zeta$. 

\subsection{Step 2: bounding likelihood ratios}

To bound $\theta$, we need to analyze the regularized MLE. Notice
that
\[
\theta_{t}=\frac{\ell(r^{(t)},\mathcal{D}^{(t)})-\ell(r^{\star},\mathcal{D}^{(t)})}{\alpha_{t}}=\alpha_{t}^{-1}\sum_{i=1}^{t}\log\frac{\sigma(r^{(t)}(x^{i},a_{+}^{i})-r^{(t)}(x^{i},a_{-}^{i}))}{\sigma(r^{\star}(x^{i},a_{+}^{i})-r^{\star}(x^{i},a_{-}^{i}))}.
\]
 The following lemma is crucial for the subsequent analysis. The proof can be found in Appendix~\ref{subsec:proof-lemma-likelihood}.

\begin{lemma} \label{lem:likelihood} For any given reward function
$r:\mathcal{A}\to[0,r_{\max}]$ and any $1\leq t\leq T$, define
\begin{align*}
\Delta_{t}(r) & :=\sum_{i=1}^{t}\log\frac{\sigma(r^{\star}(a_{+}^{i})-r^{\star}(a_{-}^{i}))}{\sigma(r(a_{+}^{i})-r(a_{-}^{i}))}-\sum_{i=1}^{t}\mathsf{KL}\big(\sigma(r^{\star}(a_{1}^{i})-r^{\star}(a_{2}^{i}))\parallel\sigma(r(a_{1}^{i})-r(a_{2}^{i}))\big).
\end{align*}
There exists some universal constant $C_{2}>1$ such that for any
fixed $r$, with probability at least $1-\delta$, 
\[
\vert\Delta_{t}(r)\vert\leq C_{2}\sqrt{\sum_{i=1}^{t}r_{\mathsf{max}}\mathsf{KL}\big(\sigma(r^{\star}(a_{1}^{i})-r^{\star}(a_{2}^{i}))\parallel\sigma(r(a_{1}^{i})-r(a_{2}^{i}))\big)\log\frac{\log T}{\delta}}+C_{2}r_{\mathsf{max}}\log\frac{\log t}{\delta}.
\]
\end{lemma}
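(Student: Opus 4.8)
The plan is to establish Lemma~\ref{lem:likelihood} as a martingale concentration statement about the log-likelihood ratio increments $\log\frac{\sigma(r^\star(a_+^i)-r^\star(a_-^i))}{\sigma(r(a_+^i)-r(a_-^i))}$, where the randomness is in which of the two sampled actions $a_1^i,a_2^i$ is preferred. Fix $r$. Write $z_i := \log\sigma(r^\star(a_+^i)-r^\star(a_-^i)) - \log\sigma(r(a_+^i)-r(a_-^i))$, and let $\mathcal{F}_{i-1}$ be the $\sigma$-field generated by everything up to and including the choice of the pair $(a_1^i,a_2^i)$ (so the pair is $\mathcal{F}_{i-1}$-measurable, but the preference outcome is not). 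Conditioned on $\mathcal{F}_{i-1}$, the sign of the comparison is Bernoulli with parameter $p_i^\star := \sigma(r^\star(a_1^i)-r^\star(a_2^i))$, so $z_i$ takes one of two values depending on the outcome. A direct computation shows $\mathbb{E}[z_i\mid\mathcal{F}_{i-1}] = \mathsf{KL}\big(\sigma(r^\star(a_1^i)-r^\star(a_2^i))\,\|\,\sigma(r(a_1^i)-r(a_2^i))\big)$ — indeed, this is exactly the KL divergence between the two Bernoulli laws, since $z_i$ is the log-likelihood-ratio of the true-vs-surrogate preference model on the observed label. Hence $\Delta_t(r)=\sum_{i=1}^t (z_i - \mathbb{E}[z_i\mid\mathcal{F}_{i-1}])$ is a martingale, and we need a Bernstein-type (Freedman) bound for it.

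Next I would control the conditional range and conditional variance of the increments. Since both reward functions lie in $[0,r_{\max}]$, reward differences lie in $[-r_{\max},r_{\max}]$, so $\sigma(\cdot)$ of them lies in $[\sigma(-r_{\max}),\sigma(r_{\max})]$, and therefore each $z_i$ is bounded in absolute value by $2\,|\log\sigma(-r_{\max})| \lesssim r_{\max}$; thus the centered increments are bounded by $O(r_{\max})$. For the conditional variance, the key point is the elementary inequality that for a Bernoulli log-likelihood ratio, $\operatorname{Var}[z_i\mid\mathcal{F}_{i-1}] \lesssim r_{\max}\cdot \mathbb{E}[z_i\mid\mathcal{F}_{i-1}] = r_{\max}\cdot\mathsf{KL}_i$, where $\mathsf{KL}_i := \mathsf{KL}\big(\sigma(r^\star(a_1^i)-r^\star(a_2^i))\,\|\,\sigma(r(a_1^i)-r(a_2^i))\big)$. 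This is a standard fact: for two distributions with bounded log-likelihood ratio in $[-M,M]$, the variance of the LLR under the first is at most (roughly) $M$ times the KL; here one checks it for the two-point case, where it amounts to comparing $p^\star(1-p^\star)(\log\frac{p^\star(1-p)}{p(1-p^\star)})^2$-type quantities with $r_{\max}\cdot\mathsf{KL}(p^\star\|p)$, using that the logit gap is $O(r_{\max})$. Plugging $V := \sum_{i=1}^t \operatorname{Var}[z_i\mid\mathcal{F}_{i-1}] \lesssim r_{\max}\sum_{i=1}^t \mathsf{KL}_i$ and range $O(r_{\max})$ into Freedman's inequality gives, with probability $\ge 1-\delta$,
\[
|\Delta_t(r)| \lesssim \sqrt{V\log(1/\delta)} + r_{\max}\log(1/\delta) \lesssim \sqrt{r_{\max}\Big(\sum_{i=1}^t\mathsf{KL}_i\Big)\log(1/\delta)} + r_{\max}\log(1/\delta),
\]
which is the claimed bound up to the $\log\log T$ refinement.

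To get the $\log\frac{\log T}{\delta}$ and $\log\frac{\log t}{\delta}$ factors rather than $\log\frac{T}{\delta}$ — a minor but genuine point, since the bound must hold simultaneously for all $1\le t\le T$ with the stated $r$ fixed — I would apply Freedman's inequality not at every $t$ but on a dyadic grid of the variance proxy or of $t$ (roughly $\log T$ values), and interpolate: between consecutive grid points the martingale cannot move much because the per-step increments are $O(r_{\max})$ and the variance has only doubled. Taking a union bound over the $O(\log T)$ grid points replaces $\log(1/\delta)$ by $\log\frac{\log T}{\delta}$ in the variance term and $\log\frac{\log t}{\delta}$ in the deterministic term.

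**The main obstacle** I anticipate is the variance-versus-KL comparison in the two-point case: one must verify $\operatorname{Var}_{p^\star}[z] \le c\, r_{\max}\,\mathsf{KL}(p^\star\|p)$ uniformly over $p^\star,p$ arising from logits in $[-r_{\max},r_{\max}]$, and in particular handle the regime where $p^\star$ is near $0$ or $1$ (i.e., $r_{\max}$ large), where both sides are exponentially small and one needs the ratio to stay bounded by $O(r_{\max})$ rather than blowing up. The clean way is to note $z = \log\frac{p^\star}{p}$ with prob.\ $p^\star$ and $z=\log\frac{1-p^\star}{1-p}$ with prob.\ $1-p^\star$, so $\operatorname{Var}_{p^\star}[z] = p^\star(1-p^\star)\big(\log\frac{p^\star(1-p)}{p(1-p^\star)}\big)^2$; the logit-difference $\log\frac{p^\star(1-p)}{p(1-p^\star)}$ equals $(r^\star(a_1)-r^\star(a_2)) - (r(a_1)-r(a_2))$, which is bounded by $2r_{\max}$ in absolute value, and then $p^\star(1-p^\star)(\cdot)^2 \le 2r_{\max}\cdot p^\star(1-p^\star)|\cdot| \le 2r_{\max}\,\mathsf{KL}(p^\star\|p)$ after invoking the standard inequality $p^\star(1-p^\star)|\theta^\star-\theta| \le \mathsf{KL}(\sigma(\theta^\star)\|\sigma(\theta))$ for logits (equivalently, a second-order Taylor/convexity bound on the Bernoulli KL). Everything else is bookkeeping: bounding the range, assembling Freedman, and the dyadic union bound.
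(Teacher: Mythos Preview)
Your approach is essentially identical to the paper's: define the log-likelihood-ratio increments, observe that their conditional mean is the Bernoulli KL, bound the range by $O(r_{\max})$, bound the conditional variance by $O(r_{\max})$ times the KL, and apply a Freedman-type inequality with a dyadic union bound to get the $\log\log$ factors.

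One small correction: the ``standard inequality'' you invoke at the end, $p^\star(1-p^\star)\,|\theta^\star-\theta|\le \mathsf{KL}(\sigma(\theta^\star)\|\sigma(\theta))$, is false for small $|\theta^\star-\theta|$ (take $\theta^\star=0$, $\theta=0.01$: the left side is $\approx 0.0025$ while the KL is $\approx 1.25\times 10^{-5}$). What actually holds is $\mathsf{KL}(\sigma(\theta^\star)\|\sigma(\theta))\ge \tfrac14\,p^\star(1-p^\star)\,\min\{|\theta^\star-\theta|,(\theta^\star-\theta)^2\}$; combining this with $|\theta^\star-\theta|\le 2r_{\max}$ gives the variance bound $p^\star(1-p^\star)(\theta^\star-\theta)^2 \lesssim r_{\max}\,\mathsf{KL}$ you need. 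The paper proves exactly this KL lower bound as a separate lemma and uses it in the same spot.
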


Equipped with the concentration bounds in Lemma~\ref{lem:likelihood},
we can use the standard covering argument to derive an uniform upper
bound, whose proof is deferred to Appendix~\ref{subsec:proof-lemma-covering}.

\begin{lemma}\label{lemma:covering}There exists some universal constant
$C_{3}>0$ such that with probability exceeding $1-O(T^{-9})$, 
\[
\ell(r,\mathcal{D}^{(t)})-\ell(r^{\star},\mathcal{D}^{(t)})\leq-\frac{1}{2}\sum_{i=1}^{t}\mathsf{KL}\big(\sigma(r^{\star}(a_{1}^{i})-r^{\star}(a_{2}^{i}))\parallel\sigma(r(a_{1}^{i})-r(a_{2}^{i}))\big)+C_{3}Ar_{\max}\log T
\]
holds for any $r:\mathcal{A}\to[0, r_{\max}]$ and $1\leq t\leq T$.

\end{lemma}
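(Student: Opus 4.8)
The plan is to prove Lemma~\ref{lemma:covering} by upgrading the fixed-$r$ concentration bound of Lemma~\ref{lem:likelihood} to a bound that holds uniformly over all reward functions $r:\mathcal{A}\to[0,r_{\max}]$, and then absorbing the deviation term into the negative KL term via Young's inequality. First I would rewrite the quantity $\ell(r,\mathcal{D}^{(t)})-\ell(r^\star,\mathcal{D}^{(t)})$ as $-\sum_{i=1}^t \mathsf{KL}(\sigma(r^\star(a_1^i)-r^\star(a_2^i))\,\|\,\sigma(r(a_1^i)-r(a_2^i))) + \Delta_t(r)$, using exactly the definition of $\Delta_t(r)$ in Lemma~\ref{lem:likelihood} (note $\ell(r,\mathcal{D}^{(t)})-\ell(r^\star,\mathcal{D}^{(t)}) = -\sum_i \log\frac{\sigma(r^\star(a_+^i)-r^\star(a_-^i))}{\sigma(r(a_+^i)-r(a_-^i))}$). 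So everything reduces to showing $\Delta_t(r)\le \frac12\sum_i \mathsf{KL}_i(r) + C_3 A r_{\max}\log T$ uniformly, where $\mathsf{KL}_i(r)$ abbreviates the $i$-th KL term.

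The covering argument proceeds as follows. Since $r$ is a vector in $[0,r_{\max}]^A$, I would take an $\epsilon$-net $\mathcal{N}_\epsilon$ of $[0,r_{\max}]^A$ in $\ell_\infty$ with $|\mathcal{N}_\epsilon| \le (3r_{\max}/\epsilon)^A$; choosing $\epsilon$ polynomially small in $T$ (say $\epsilon \asymp r_{\max}/T^2$) makes $\log|\mathcal{N}_\epsilon| \lesssim A\log T$. Applying Lemma~\ref{lem:likelihood} with $\delta = T^{-10}/|\mathcal{N}_\epsilon|$ and a union bound over $\mathcal{N}_\epsilon$ (and over the $T$ values of $t$), with probability $1-O(T^{-9})$ we get, for every $r'\in\mathcal{N}_\epsilon$ and every $t$, $|\Delta_t(r')| \le C_2\sqrt{r_{\max}\big(\sum_i \mathsf{KL}_i(r')\big)\log(T|\mathcal{N}_\epsilon|/\delta)} + C_2 r_{\max}\log(T\log T/\delta) \lesssim \sqrt{A r_{\max}\log T \cdot \sum_i \mathsf{KL}_i(r')} + A r_{\max}\log T$. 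Then for a general $r$, pick the nearest $r'\in\mathcal{N}_\epsilon$ with $\|r-r'\|_\infty\le\epsilon$; since $x\mapsto\log\sigma(x)$ is $1$-Lipschitz and the KL of two sigmoids is also controlled by $O(1)$ times the difference of arguments when $\epsilon$ is tiny, both $\sum_i \log\frac{\sigma(r^\star_i)}{\sigma(r_i)}$ and $\sum_i \mathsf{KL}_i(r)$ change by at most $O(t\epsilon) = O(1)$ when passing from $r$ to $r'$, which is negligible. This transfers the bound on $\Delta_t(r')$ to $\Delta_t(r)$ up to an additive $O(1)$.

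Finally, with the uniform bound $\Delta_t(r) \le C\sqrt{A r_{\max}\log T \cdot S} + C A r_{\max}\log T$ in hand, where $S := \sum_i \mathsf{KL}_i(r)\ge 0$, I apply the elementary inequality $\sqrt{ab}\le \tfrac12 a + \tfrac12 b$ (equivalently AM-GM / Young) with $a = S$ and $b = C^2 A r_{\max}\log T$ to get $C\sqrt{A r_{\max}\log T\cdot S} \le \tfrac12 S + \tfrac12 C^2 A r_{\max}\log T$. Combining, $\ell(r,\mathcal{D}^{(t)})-\ell(r^\star,\mathcal{D}^{(t)}) = -S + \Delta_t(r) \le -S + \tfrac12 S + C' A r_{\max}\log T = -\tfrac12 S + C' A r_{\max}\log T$, which is exactly the claimed bound with $C_3 = C'$. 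The main obstacle I anticipate is the discretization step: one must verify carefully that both the log-likelihood-ratio sum and, more delicately, the sum of sigmoid-KL terms are stable under $\ell_\infty$ perturbations of $r$ of size $\epsilon$ — the KL between $\sigma(u)$ and $\sigma(v)$ needs a quantitative Lipschitz-type bound in $|u-v|$ (which holds, with a constant depending mildly on $r_{\max}$ through the range of arguments, and is harmless once $\epsilon$ is polynomially small in $T$) — and that the net cardinality only contributes $O(A\log T)$ to the log term rather than something larger. Everything else is bookkeeping with union bounds and Young's inequality.
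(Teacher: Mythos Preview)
Your proposal is correct and follows essentially the same route as the paper: rewrite $\ell(r,\mathcal{D}^{(t)})-\ell(r^\star,\mathcal{D}^{(t)})$ via $\Delta_t(r)$, apply Lemma~\ref{lem:likelihood} on an $\varepsilon$-net of $[0,r_{\max}]^A$ with $\log|\mathcal{N}_\varepsilon|\lesssim A\log T$, transfer to arbitrary $r$ using the $1$-Lipschitz property of $\log\sigma(\cdot)$ and of $\mathsf{KL}(\sigma(x)\,\|\,\sigma(\cdot))$, and absorb the square-root term via AM--GM. Two cosmetic remarks: the paper applies AM--GM \emph{before} covering (which slightly streamlines the transfer step), and your decomposition should read $\ell(r,\mathcal{D}^{(t)})-\ell(r^\star,\mathcal{D}^{(t)}) = -\sum_i \mathsf{KL}_i(r) - \Delta_t(r)$ (a sign flip on $\Delta_t$), though since you ultimately bound $|\Delta_t|$ this does not affect the argument.
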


As an immediate consequence of Lemma~\ref{lemma:covering}, with
probability exceeding $1-O(T^{-9})$,
\[
\ell(r^{(t)},\mathcal{D}^{(t)})-\ell(r^{\star},\mathcal{D}^{(t)})\leq C_{3}Ar_{\mathsf{max}}\log T
\]
holds for any $1\leq t\leq T$. Therefore 
\begin{equation}
\theta=\sum_{t=1}^{T}\theta_{t}=\sum_{t=1}^{T}\frac{\ell(r^{(t)},\mathcal{D}^{(t)})-\ell(r^{\star},\mathcal{D}^{(t)})}{\alpha_{t}}\leq\sum_{t=1}^{T}\frac{C_{3}Ar_{\mathsf{max}}\log T}{\alpha_{t}}.\label{eq:theta-bound}
\end{equation}

\subsection{Step 3: bounding reward errors}

We first notice that
\begin{align}
\alpha_{t}^{-1}\big[\ell(r^{(t)},\mathcal{D}^{(t-1)})-\ell(r^{\star},\mathcal{D}^{(t-1)})\big] & \overset{\text{(i)}}{\geq}\max_{\pi}J(\pi,r^{\star};\pi^{(t-1)})-\max_{\pi}J(\pi,r^{(t)};\pi^{(t-1)})\nonumber \\
 & \overset{\text{(ii)}}{\geq}J(\pi^{(t)},r^{\star};\pi^{(t-1)})-J(\pi^{(t)},r^{(t)};\pi^{(t-1)})\nonumber\\
 & \overset{\text{(iii)}}{=}\mathbb{E}_{a\sim\pi^{(t)}}[r^{(t)}(a)-r^{\star}(a)]-\mathbb{E}_{a\sim\pi^{(t-1)}}[r^{(t)}(a)-r^{\star}(a)] \nonumber\\
 & \geq-4r_{\max}.\label{eq:optimality-ineq-2}
\end{align}
Here step (i) is an intermediate step of (\ref{eq:optimality-ineq});
step (ii) follows from the optimality of $\pi^{(t)}$ (cf.~\eqref{eq:update-pi});
step (iii) follows from the definition of $J$ (cf.~\eqref{eq:defn-J}).
This combined with Lemma~\ref{lemma:covering} implies that
\begin{align}
&\sum_{i=1}^{t}\mathsf{KL}\big(\sigma(r^{\star}(a_{1}^{i})-r^{\star}(a_{2}^{i}))\parallel\sigma(r^{(t)}(a_{1}^{i})-r^{(t)}(a_{2}^{i}))\big) \label{eq:KL-sum-upper-bound} \\ & \qquad \leq-2\big[\ell(r^{(t)},\mathcal{D}^{(t)})-\ell(r^{\star},\mathcal{D}^{(t)})\big]+2C_{3}Ar_{\mathsf{max}}\log T \leq C_{4}\alpha_{t}r_{\max},\nonumber
\end{align}
as long as $\alpha_{t}\geq A\log T$ and $C_{4}\geq8+2C_{3}$. This
implies that for any $t\in[T]$ and any action pair $(a_{+},a_{-})$,
\begin{equation}
\mathsf{KL}\big(\sigma(r^{\star}(a_{+})-r^{\star}(a_{-}))\Vert\sigma(r^{(t)}(a_{+})-r^{(t)}(a_{-}))\big)\leq\frac{C_{4}\alpha_{t}r_{\max}}{N_{t}(a_{+},a_{-})},\label{eq:KL-bound}
\end{equation}
where $N_{t}(a_{+},a_{-})$ is the number of comparison for $(a_{+},a_{-})$
up to time $t$. This motivates us to decompose $\zeta$ according
to whether $N_{t}(a_{+},a_{-})\gg\alpha_{t}r_{\max}$: let $\tau\coloneqq100C_{4}\alpha_{T}r_{\mathsf{max}}$
and denote by $t_{n}(a_{+},a_{-})$ the time of the $n$-th comparison
for $(a_{+},a_{-})$, we have
\[
\zeta\leq2\tau A^{2}r_{\max}+\sum_{r^{\star}(a_{+})\ge r^{\star}(a_{-})}\underbrace{\sum_{n=\tau}^{N_{T}(a_{+},a_{-})}\big|r^{(t_{n})}(a_{+})-r^{(t_{n})}(a_{-})-r^{\star}(a_{+})+r^{\star}(a_{-})\big|}_{\eqqcolon\zeta(a_{+},a_{-})},
\]
where we denote by $t_{n}(a_{+},a_{-})$ the time of the $n$-th comparison
for $(a_{+},a_{-})$, and the first summation is taken over all action
pairs $(a_{+},a_{-})$ satisfying $r^{\star}(a_{+})\ge r^{\star}(a_{-})$.
To bound each $\zeta(a_{+},a_{-})$, we need the following technical
lemma. The proof can be found in Appendix~\ref{subsec:proof-lemma-counting}.

\begin{lemma}\label{lemma:counting} Consider any action pair $(a_{+},a_{-})$
and time $t_{0}$ such that $N_{t_{0}}(a_{+},a_{-})\geq\tau$. There exists  universal constant $C_{5}>0$ such that, for
any $t_{0}\le t_{1}<t_{2}\le T$, with probability exceeding $1-O(T^{-10})$
we have
\begin{align*}
N_{t_{2}}(a_{+},a_{-})-N_{t_{1}}(a_{+},a_{-}) & \leq C_{5}^{1/\beta}\sum_{t=t_{1}+1}^{t_{2}}\frac{\pi_{\mathsf{ref}}(a_{-})}{\pi_{\mathsf{ref}}(a_{+})}\Big[\mathsf{KL}\big(\sigma(r^{\star}(a_{+})-r^{\star}(a_{-}))\Vert\sigma(r^{(t)}(a_{+})-r^{(t)}(a_{-}))\big)^{\frac{1}{\beta}}\\
 & \qquad\qquad\qquad\qquad\qquad+\sigma(r^{\star}(a_{-})-r^{\star}(a_{+}))^{\frac{1}{\beta}}\Big]+C_{5}\sqrt{T\log T}.
\end{align*}
\end{lemma}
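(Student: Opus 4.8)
The plan is to bound the per-iteration probability that a comparison for $(a_+,a_-)$ is made, control that probability pointwise by the binary KL divergence in the statement, and then sum up via a martingale concentration inequality. We may assume $r^{\star}(a_+)\ge r^{\star}(a_-)$, which is the case in which the lemma is invoked. First I would observe that a comparison for $(a_+,a_-)$ occurs at iteration $t$ exactly when $\{a_1^t,a_2^t\}=\{a_+,a_-\}$; since $a_1^t\sim\pi^{(t-1)}$ and $a_2^t\sim\pi^{(t)}$ are drawn independently given the history $\mathcal F_{t-1}$ (which already determines $r^{(t-1)},\pi^{(t-1)},r^{(t)},\pi^{(t)}$), the conditional probability of this event is $p_t=\pi^{(t-1)}(a_+)\pi^{(t)}(a_-)+\pi^{(t-1)}(a_-)\pi^{(t)}(a_+)$. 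Using the closed form \eqref{eq:optimal-policy-given-r} for $\pi^{(s)}$ (the calibration policy is irrelevant to it) together with $\pi^{(s)}(a_+)\le 1$, one gets $\pi^{(s)}(a_-)\le\frac{\pi_{\mathsf{ref}}(a_-)}{\pi_{\mathsf{ref}}(a_+)}e^{-\Delta^{(s)}/\beta}$ for $s\in\{t-1,t\}$, where $\Delta^{(s)}:=r^{(s)}(a_+)-r^{(s)}(a_-)$, and hence $p_t\le\frac{\pi_{\mathsf{ref}}(a_-)}{\pi_{\mathsf{ref}}(a_+)}\bigl(e^{-\Delta^{(t)}/\beta}+e^{-\Delta^{(t-1)}/\beta}\bigr)$.

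The key estimate I would establish is the pointwise bound
\[
e^{-\Delta^{(s)}/\beta}\;\le\;C^{1/\beta}\Bigl(\mathsf{KL}\bigl(\sigma(r^{\star}(a_+)-r^{\star}(a_-))\,\|\,\sigma(\Delta^{(s)})\bigr)^{1/\beta}+\sigma(r^{\star}(a_-)-r^{\star}(a_+))^{1/\beta}\Bigr)
\]
for a universal constant $C$, valid for every $s$ with $N_s(a_+,a_-)\ge\tau$. To prove it, set $p:=\sigma(r^\star(a_+)-r^\star(a_-))\ge\tfrac12$, $q:=\sigma(\Delta^{(s)})$, and $K:=\mathsf{KL}(\mathrm{Ber}(p)\,\|\,\mathrm{Ber}(q))$, so that $e^{-\Delta^{(s)}}=(1-q)/q$. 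Because $N_s(a_+,a_-)\ge\tau=100C_4\alpha_T r_{\max}$ and $\alpha_t$ is non-decreasing, \eqref{eq:KL-bound} (on the event of Lemma~\ref{lemma:covering}) gives $K\le\tfrac1{100}$, and this smallness first pins $q$ below a universal positive constant: if $q\ge p$ it is trivial, while if $q<p$ then $p\log(p/q)\ge\tfrac12\log\tfrac1{2q}$ and the remaining KL term is at least $-\tfrac12$, so $K\ge\tfrac12\log\tfrac1{2q}-\tfrac12$, forcing $q\gtrsim 1$. Hence $e^{-\Delta^{(s)}}\lesssim 1-q$, and it remains to show $1-q\lesssim K+(1-p)$. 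If $1-q\le 2(1-p)$ this is immediate; otherwise, writing $w:=\frac{1-q}{1-p}>2$ and using $p\log(p/q)\ge p-q$, I would obtain $K\ge(p-q)+(1-p)\log\frac{1-p}{1-q}=(1-p)(w-1-\log w)$, and since $w\mapsto\frac{w-1-\log w}{w}$ is increasing on $(1,\infty)$ and strictly positive at $w=2$, this yields $K\gtrsim(1-p)w=1-q$. Raising the resulting inequality $e^{-\Delta^{(s)}}\lesssim K+(1-p)$ to the power $1/\beta$ and applying $(x+y)^{1/\beta}\le 2^{1/\beta}(x^{1/\beta}+y^{1/\beta})$ absorbs all $\beta$-dependence into $C^{1/\beta}$ and gives the display.

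Then I would combine these: for $t_1<t\le t_2$ (so that $N_{t-1}(a_+,a_-)\ge N_{t_0}(a_+,a_-)\ge\tau$) one has $p_t\lesssim C^{1/\beta}\frac{\pi_{\mathsf{ref}}(a_-)}{\pi_{\mathsf{ref}}(a_+)}\bigl(K_t^{1/\beta}+K_{t-1}^{1/\beta}+\sigma(r^\star(a_-)-r^\star(a_+))^{1/\beta}\bigr)$ with $K_s:=\mathsf{KL}(\sigma(r^\star(a_+)-r^\star(a_-))\,\|\,\sigma(\Delta^{(s)}))$; after the harmless reindexing $\sum_{t=t_1+1}^{t_2}K_{t-1}^{1/\beta}\le K_{t_1}^{1/\beta}+\sum_{t=t_1+1}^{t_2}K_t^{1/\beta}\le 1+\sum_{t=t_1+1}^{t_2}K_t^{1/\beta}$ (valid since $K_{t_1}\le\tfrac1{100}$), the partial sum $P:=\sum_{t=t_1+1}^{t_2}p_t$ is at most a constant times $C^{1/\beta}\sum_{t=t_1+1}^{t_2}\frac{\pi_{\mathsf{ref}}(a_-)}{\pi_{\mathsf{ref}}(a_+)}\bigl(K_t^{1/\beta}+\sigma(r^\star(a_-)-r^\star(a_+))^{1/\beta}\bigr)$, up to an additive constant that is folded into the $\sqrt{T\log T}$ term. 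Finally, $N_{t_2}(a_+,a_-)-N_{t_1}(a_+,a_-)=\sum_{t=t_1+1}^{t_2}\mathds{1}[\{a_1^t,a_2^t\}=\{a_+,a_-\}]$, whose centered increments $\mathds{1}[\cdots]-p_t$ form a bounded martingale difference sequence with predictable quadratic variation at most $P$; a Freedman-type (Bernstein) martingale inequality then gives, with probability $1-O(T^{-10})$, a deviation of order $\sqrt{P\log T}+\log T$, and the bounds $\sqrt{P\log T}\le\tfrac12(P+\log T)$ and $\log T\le\sqrt{T\log T}$ turn this into $N_{t_2}(a_+,a_-)-N_{t_1}(a_+,a_-)\le 2P+O(\sqrt{T\log T})$, which is the claim after renaming constants.

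The hard part will be the pointwise estimate in the second paragraph: a bare Pinsker inequality only yields $1-q\lesssim\sqrt K+(1-p)$, which is too weak (it produces the exponent $K^{1/(2\beta)}$ instead of $K^{1/\beta}$). What rescues the argument is that the a priori bound $K\le\tfrac1{100}$ — a consequence of the hypothesis $N_{t_0}(a_+,a_-)\ge\tau$ together with \eqref{eq:KL-bound} — confines $q$ to a compact subinterval of $(0,1)$, after which a reverse-Pinsker-type case split (on whether $1-q$ exceeds $2(1-p)$) delivers a bound on $1-q$ that is genuinely linear in $K$. Everything else — the martingale concentration and the index bookkeeping — is routine.
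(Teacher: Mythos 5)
Your proposal is correct and follows essentially the same route as the paper's proof: you use the hypothesis $N_{t_0}(a_+,a_-)\geq\tau$ together with \eqref{eq:KL-bound} to confine $\sigma(r^{(t)}(a_+)-r^{(t)}(a_-))$ away from $0$, establish the pointwise reverse-Pinsker-type bound $e^{r^{(t)}(a_-)-r^{(t)}(a_+)}\lesssim \mathsf{KL}+\sigma(r^{\star}(a_-)-r^{\star}(a_+))$ via a case split on whether the estimated gap is dominated by the true one, raise it to the power $1/\beta$ through the closed-form policy \eqref{eq:optimal-policy-given-r}, and finish with martingale concentration. The only differences (counting the exact comparison event rather than all samples of $a_-$, Freedman in place of Azuma--Hoeffding, and slightly different algebra in the KL lower bound) are cosmetic.
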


Equipped with Lemma~\ref{lemma:counting}, we can bound each $\zeta(a_{+},a_{-})$
using both density ratios regarding human feedback $\pi_{\mathsf{HF}}(a_{+})/\pi_{\mathsf{HF}}(a_{-})$,
and regarding the reference policy $\pi_{\mathsf{ref}}(a_{-})/\pi_{\mathsf{ref}}(a_{+})$. The proof is deferred to Appendix~\ref{subsec:proof-lemma-zeta}.

\begin{lemma}\label{lemma:zeta}There exists  universal constant $C_{6}>0$ such that, for any action pair $(a_{+},a_{-})$,
with probability exceeding $1-O(T^{-9})$ we have
\begin{align*}
\zeta(a_{+},a_{-}) & \leq C_{6}(r_{\max}+\log T)\min\left\{ \frac{\pi_{\mathsf{HF}}(a_{+})}{\pi_{\mathsf{HF}}(a_{-})}\alpha_{T}r_{\mathsf{max}},\left(T\frac{\pi_{\mathsf{ref}}(a_{-})}{\pi_{\mathsf{ref}}(a_{+})}\right)^{\frac{\beta}{\beta+1}}\alpha_{T}^{\frac{1}{\beta+1}}r_{\max}^{\frac{1}{\beta+1}}\right\} \\
 & \qquad+C_{6}\left(\frac{AN_{T}(a_{+},a_{-})\log T}{\alpha_{T}}+\sqrt{T\log T}\right)r_{\max}.
\end{align*}
\end{lemma}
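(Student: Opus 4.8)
The plan is to control each summand of $\zeta(a_+,a_-)$ by the Bradley--Terry KL error of the corresponding reward iterate and then sum these errors in two complementary ways — one exploiting \eqref{eq:KL-bound} directly (which will yield the $\pi_{\mathsf{HF}}$ quantity in the minimum) and one passing through Lemma~\ref{lemma:counting} (which will yield the $\pi_{\mathsf{ref}}$ quantity with its $\beta$-dependent exponent). Write $\delta^\star:=r^\star(a_+)-r^\star(a_-)\ge 0$, and for the $n$-th comparison of $(a_+,a_-)$, occurring at time $t_n$ (so that $N_{t_n}(a_+,a_-)=n$), set $\delta_n:=r^{(t_n)}(a_+)-r^{(t_n)}(a_-)$ and $d_n:=\mathsf{KL}\big(\sigma(\delta^\star)\Vert\sigma(\delta_n)\big)$; then \eqref{eq:KL-bound} gives $d_n\le C_4\alpha_T r_{\max}/n$, and summing over $n\ge\tau$ (using that $\alpha_t$ is non-decreasing) gives $\sum_n d_n\le C_4\alpha_T r_{\max}\log T$. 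The elementary ingredient I would establish is a Bernoulli-KL inequality: since $\pi_{\mathsf{HF}}(a_+)/\pi_{\mathsf{HF}}(a_-)=e^{\delta^\star}\asymp 1/\sigma(-\delta^\star)$ and the logit map has curvature of order $\sigma(-\delta^\star)$ near $\sigma(\delta^\star)$,
\[
|\delta-\delta^\star|\ \lesssim\ \min\Big\{\,r_{\max},\ \tfrac{\pi_{\mathsf{HF}}(a_+)}{\pi_{\mathsf{HF}}(a_-)}\,d+\sqrt{\tfrac{\pi_{\mathsf{HF}}(a_+)}{\pi_{\mathsf{HF}}(a_-)}\,d}\,\Big\},\qquad d:=\mathsf{KL}\big(\sigma(\delta^\star)\Vert\sigma(\delta)\big),
\]
for all $\delta^\star\ge 0$ and $|\delta|\le r_{\max}$; this follows by splitting into $\delta\ge\delta^\star$, $0\le\delta<\delta^\star$, and $\delta<0$, lower bounding $d$ in each case by separately controlling $\log\tfrac{\sigma(\delta^\star)}{\sigma(\delta)}$ and $\log\tfrac{\sigma(-\delta^\star)}{\sigma(-\delta)}$, and invoking the trivial clip $|\delta-\delta^\star|\le 2r_{\max}$ when the amplified estimate would exceed $r_{\max}$.

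For the $\pi_{\mathsf{HF}}$-quantity, plug $d_n\le C_4\alpha_T r_{\max}/n$ into the pointwise inequality and sum from $n=\tau$ to $N_T(a_+,a_-)$. The linear part contributes $\lesssim\tfrac{\pi_{\mathsf{HF}}(a_+)}{\pi_{\mathsf{HF}}(a_-)}\alpha_T r_{\max}\sum_n\tfrac1n\lesssim\tfrac{\pi_{\mathsf{HF}}(a_+)}{\pi_{\mathsf{HF}}(a_-)}\alpha_T r_{\max}\log T$, and the clipped square-root part $\sum_n\min\{r_{\max},\sqrt{\tfrac{\pi_{\mathsf{HF}}(a_+)}{\pi_{\mathsf{HF}}(a_-)}d_n}\}$ splits into the $O\big(\tfrac{\pi_{\mathsf{HF}}(a_+)}{\pi_{\mathsf{HF}}(a_-)}\alpha_T\big)$ indices where the root exceeds $r_{\max}$ (contributing $\lesssim\tfrac{\pi_{\mathsf{HF}}(a_+)}{\pi_{\mathsf{HF}}(a_-)}\alpha_T r_{\max}$) and a residual $\lesssim\sqrt{\tfrac{\pi_{\mathsf{HF}}(a_+)}{\pi_{\mathsf{HF}}(a_-)}\alpha_T r_{\max}\,N_T(a_+,a_-)}$ to which an AM--GM split — weighted so that one side is at most $(r_{\max}+\log T)\tfrac{\pi_{\mathsf{HF}}(a_+)}{\pi_{\mathsf{HF}}(a_-)}\alpha_T r_{\max}$ — applies, the other side being absorbed by the stated residual $\tfrac{AN_T(a_+,a_-)\log T}{\alpha_T}r_{\max}$ (using $\alpha_t>A\log T$ and that $\zeta(a_+,a_-)=0$ unless $N_T(a_+,a_-)\ge\tau=100C_4\alpha_T r_{\max}$).

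For the $\pi_{\mathsf{ref}}$-quantity, split the sum at a threshold $K$: the first $\min\{K,N_T(a_+,a_-)\}$ comparisons contribute at most $2r_{\max}K$, and for $n>K$ I would apply Lemma~\ref{lemma:counting} with $t_1$ the time of the $K$-th comparison and $t_2=T$, using $\mathsf{KL}\big(\sigma(\delta^\star)\Vert\sigma(r^{(t)}(a_+)-r^{(t)}(a_-))\big)\le C_4\alpha_T r_{\max}/N_t(a_+,a_-)\le C_4\alpha_T r_{\max}/K$ for such $t$ together with $\#\{t:N_t(a_+,a_-)\ge K\}\le T$ to get
\[
N_T(a_+,a_-)-K\ \lesssim\ C_5^{1/\beta}\,\tfrac{\pi_{\mathsf{ref}}(a_-)}{\pi_{\mathsf{ref}}(a_+)}\,T\Big[(\alpha_T r_{\max}/K)^{1/\beta}+\sigma(-\delta^\star)^{1/\beta}\Big]+C_5\sqrt{T\log T}.
\]
Multiplying by $2r_{\max}$ and choosing $K\asymp\big(T\,\tfrac{\pi_{\mathsf{ref}}(a_-)}{\pi_{\mathsf{ref}}(a_+)}\big)^{\beta/(\beta+1)}(\alpha_T r_{\max})^{1/(\beta+1)}$ balances $r_{\max}K$ against the counting term and reproduces the second quantity in the minimum, up to the residuals $r_{\max}\tfrac{\pi_{\mathsf{ref}}(a_-)}{\pi_{\mathsf{ref}}(a_+)}T\,\sigma(-\delta^\star)^{1/\beta}$ and $r_{\max}\sqrt{T\log T}$ (when the balancing value falls below $\tau$ one takes $K=\tau$ instead, which only occurs when $T\tfrac{\pi_{\mathsf{ref}}(a_-)}{\pi_{\mathsf{ref}}(a_+)}\lesssim\alpha_T r_{\max}$, a regime in which $N_T(a_+,a_-)$ is already tiny). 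Taking the smaller of the two bounds gives the minimum in the claim.

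The main obstacle is the final reconciliation of residual terms — chiefly the ``equilibrium comparison-rate'' term $r_{\max}\tfrac{\pi_{\mathsf{ref}}(a_-)}{\pi_{\mathsf{ref}}(a_+)}T\,\sigma(-\delta^\star)^{1/\beta}$ from Lemma~\ref{lemma:counting}, plus the AM--GM leftover from the first route. The resolution is to case on which quantity realizes the minimum: if the $\pi_{\mathsf{HF}}$-quantity is smaller one uses the first bound directly, and if the $\pi_{\mathsf{ref}}$-quantity is smaller one uses the second, in which case the defining inequality of that case — $(T\tfrac{\pi_{\mathsf{ref}}(a_-)}{\pi_{\mathsf{ref}}(a_+)})^{\beta/(\beta+1)}(\alpha_T r_{\max})^{1/(\beta+1)}\le\tfrac{\pi_{\mathsf{HF}}(a_+)}{\pi_{\mathsf{HF}}(a_-)}\alpha_T r_{\max}$ — combined with $\sigma(-\delta^\star)\asymp\pi_{\mathsf{HF}}(a_-)/\pi_{\mathsf{HF}}(a_+)$ and a Young-type inequality with conjugate exponents $\beta+1$ and $(\beta+1)/\beta$ forces the equilibrium term to be at most $r_{\max}$ times the $\pi_{\mathsf{ref}}$-quantity; the same case analysis, with $\alpha_t>A\log T$ and the emptiness of the sum when $N_T(a_+,a_-)<\tau$, handles the AM--GM leftover. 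The other delicate point is establishing the pointwise Bernoulli-KL inequality with the sharp $\sigma(-\delta^\star)$-scaling rather than the exponentially weak bound a naive global mean-value estimate would give; the remaining steps are routine summation, and the high-probability guarantee is inherited from Lemma~\ref{lemma:counting} and \eqref{eq:KL-bound}.
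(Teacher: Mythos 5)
Your second route (the $\pi_{\mathsf{ref}}$-quantity, via a threshold $K$ and Lemma~\ref{lemma:counting}) is sound and essentially matches the paper's ``Case 1'' argument, including the Young-type absorption of the equilibrium term $r_{\max}\frac{\pi_{\mathsf{ref}}(a_-)}{\pi_{\mathsf{ref}}(a_+)}T\,\sigma(r^\star(a_-)-r^\star(a_+))^{1/\beta}$ under the case hypothesis, which does check out. The genuine gap is in your first route. Your pointwise inequality $|\delta-\delta^\star|\lesssim\frac{\pi_{\mathsf{HF}}(a_+)}{\pi_{\mathsf{HF}}(a_-)}d+\sqrt{\frac{\pi_{\mathsf{HF}}(a_+)}{\pi_{\mathsf{HF}}(a_-)}d}$ is correct (it follows from Lemma~\ref{lemma:KL-lower-bound}), but feeding it only the crude bound $d_n\le C_4\alpha_T r_{\max}/n$ from \eqref{eq:KL-bound} and summing the square-root part leaves a residual of order $\sqrt{\tfrac{\pi_{\mathsf{HF}}(a_+)}{\pi_{\mathsf{HF}}(a_-)}\alpha_T r_{\max}N_T(a_+,a_-)}$, and your proposed AM--GM absorption of this into $(r_{\max}+\log T)\frac{\pi_{\mathsf{HF}}(a_+)}{\pi_{\mathsf{HF}}(a_-)}\alpha_T r_{\max}+\frac{AN_T(a_+,a_-)\log T}{\alpha_T}r_{\max}$ requires $\alpha_T\lesssim A r_{\max}\log T\,(r_{\max}+\log T)$. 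That is not implied by the hypotheses you cite: $\alpha_t>A\log T$ is a lower bound, and $N_T(a_+,a_-)\ge\tau$ only gives $\alpha_T\le T/(100C_4 r_{\max})$. Concretely, with $r_{\max}=1$, $A=2$, $e^{r^\star(a_+)-r^\star(a_-)}=e$, $\alpha_T=\sqrt{T}$ and $N_T(a_+,a_-)=T$, your residual is of order $T^{3/4}$ while the lemma's right-hand side is of order $\sqrt{T}\log T$ (and the $\pi_{\mathsf{HF}}$-quantity realizes the minimum, so you cannot retreat to the second route), so no universal $C_6$ closes the argument.

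The missing idea is the paper's self-bounding step. Instead of $d_n\le C_4\alpha_T r_{\max}/n$, the paper uses the refined bound \eqref{eq:KL-upper-bound-recursive}, $d_n\le(2\alpha_{t_n}\gamma_{t_n}+2C_3Ar_{\max}\log T)/n$, where $\gamma_t$ is the \emph{signed} optimism gap from \eqref{eq:regret-t-decom}. Over a dyadic block of comparison indices $[n_1,2n_1]$ it bounds, via Cauchy--Schwarz and \eqref{eq:r-KL-bound}, the square of $\sum_n|r^{(t_n)}(a_+)-r^{(t_n)}(a_-)-r^\star(a_+)+r^\star(a_-)|$ by $\frac{r_{\max}\alpha_T}{n_1}\frac{\pi_{\mathsf{HF}}(a_+)}{\pi_{\mathsf{HF}}(a_-)}\sum_n\gamma_{t_n}$ plus lower-order terms; and $\sum_n\gamma_{t_n}$ is in turn at most that same sum of reward errors plus a martingale term $O(r_{\max}\sqrt{n_1\log T})$ --- crucially with no factor of $N_T r_{\max}$, because the $\gamma_{t_n}$ are never replaced by absolute values. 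Solving the resulting quadratic self-bounding inequality block by block replaces your $\sqrt{\tfrac{\pi_{\mathsf{HF}}(a_+)}{\pi_{\mathsf{HF}}(a_-)}\alpha_T r_{\max}N_T}$ by $\frac{\pi_{\mathsf{HF}}(a_+)}{\pi_{\mathsf{HF}}(a_-)}\alpha_T r_{\max}\log T+r_{\max}\sqrt{N_T\log T}+\frac{AN_T r_{\max}\log T}{\alpha_T}$, which is what the lemma needs. Without this cancellation your first route cannot reach the stated bound.
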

This immediately implies that
\begin{align}
\zeta & \leq2\tau A^{2}r_{\max}+C_{6}\left(\frac{AT\log T}{\alpha_{T}}+A^{2}\sqrt{T\log T}\right)r_{\max}\label{eq:zeta-bound} \\
 & \quad+C_{6}(r_{\max}+\log T) \hspace{-1ex}\sum_{r^{\star}(a_{+})\ge r^{\star}(a_{-})}\hspace{-1ex}\min\bigg\{\frac{\pi_{\mathsf{HF}}(a_{+})}{\pi_{\mathsf{HF}}(a_{-})}\alpha_{T}r_{\mathsf{max}},\left(T\frac{\pi_{\mathsf{ref}}(a_{-})}{\pi_{\mathsf{ref}}(a_{+})}\right)^{\frac{\beta}{\beta+1}}\alpha_{T}^{\frac{1}{\beta+1}}r_{\max}^{\frac{1}{\beta+1}}\bigg\}.\nonumber
\end{align}
Putting the regret decomposition \eqref{eq:regret-decom} and the
bounds \eqref{eq:xi-bound}, \eqref{eq:theta-bound} and \eqref{eq:zeta-bound}
collectively yields the desired regret bound (\ref{eq:general-regret}).

\section{Discussion}

In this paper, we investigated the problem of efficient exploration in online RLHF. By a careful analysis of the existing optimism-based exploration strategies, we identified a conceptual drawback in their sampling protocol, and we proved lower bounds to show that they can lead to inefficient exploration. We then proposed our algorithm that explicitly targets uncertainty in reward differences most relevant for policy improvement. Under a multi-armed bandit setup of RLHF, we establish regret bounds of order $T^{(\beta+1)/(\beta+2)}$, which scales polynomially in all model parameters.

Our work opens several avenues for future investigation. An immediate question is whether the rate $T^{(\beta+1)/(\beta+2)}$ is minimax optimal, or if faster rates can be achieved. Another important direction is to refine the dependence on parameters such as $A$ and $r_{\max}$, which may be improved with sharper analysis or alternative exploration schemes. Finally, our theoretical results are restricted to the bandit setting; extending the analysis to richer environments that incorporate a prompt space would be an exciting step toward bridging theory and practice in online RLHF.

\section*{Acknowledgements}

G. Li is supported in part by the Chinese University of Hong Kong Direct Grant for Research and the Hong Kong Research Grants Council ECS 2191363.

\appendix

\section{Proof of Proposition~\ref{prop:lb} \protect\label{sec:proof-lb}}

For each $t\geq1$, define the event
\[
\mathcal{E}_{t}:=\{\text{no \ensuremath{a_{0}} is sampled in the first \ensuremath{t} samples}\}.
\]
We will show that for any $t\geq1$, 
\begin{equation}
\mathbb{P}(\mathcal{E}_{t})\geq\frac{4}{9}\big(1-\exp(-r_{\max}/\beta)\big)^{2(t-1)}.\label{eq:Et-prob}
\end{equation}

Conditional on $\mathcal{E}_{t}$, it can be seen that $\ell(r,\mathcal{D}^{(t)})$
only depends on $r(a_{1})-r(a_{2})$. Now we study when we fix $r(a_{1})-r(a_{2})\equiv\delta$
such that $\ell(r,\mathcal{D}^{(t)})$ is fixed, when is $J(\pi,r;\pi_{\mathsf{cal}})$
maximized over both $\pi$ and $r$. By symmetry, we can assume without
loss of generality that $\delta\geq0$. We can compute
\begin{align*}
J(\pi,r;\pi_{\mathsf{cal}}) & =\mathbb{E}_{a\sim\pi}[r(a)]-\mathbb{E}_{a\sim\pi_{\mathsf{cal}}}[r(a)]-\beta\mathsf{KL}(\pi\parallel\pi_{\mathsf{ref}})\\
 & =[\pi(a_{1})-p][r(a_{1})-r(a_{0})]+[\pi(a_{2})-p][r(a_{2})-r(a_{0})]-\beta\mathsf{KL}(\pi\parallel\pi_{\mathsf{ref}})\\
 & =[\pi(a_{1})+\pi(a_{2})-2p][r(a_{1})-r(a_{0})]-\delta[\pi(a_{2})-p]-\beta\mathsf{KL}(\pi\parallel\pi_{\mathsf{ref}}).
\end{align*}
For fixed $\pi$, we check which reward function $r$ maximizes $J(\pi,r;\pi_{\mathsf{cal}})$.
\begin{itemize}
\item When $\pi(a_{1})+\pi(a_{2})>2p$, we know that
\begin{align}
\max_{r}J(\pi,r;\pi_{\mathsf{cal}}) & =r_{\max}[\pi(a_{1})+\pi(a_{2})-2p]-\delta[\pi(a_{2})-p]-\beta\mathsf{KL}(\pi\parallel\pi_{\mathsf{ref}}),\label{eq:J-large-pi}
\end{align}
which is maximized at $r(a_{1})=r_{\max}$, $r(a_{2})=r_{\max}-\delta$
and $r(a_{0})=0$.
\item When $\pi(a_{1})+\pi(a_{2})<2p$, we know that
\begin{align}
\max_{r}J(\pi,r;\pi_{\mathsf{cal}}) & =(r_{\max}-\delta)[2p-\pi(a_{1})-\pi(a_{2})]-\delta[\pi(a_{2})-p]-\beta\mathsf{KL}(\pi\parallel\pi_{\mathsf{ref}}),\label{eq:J-small-pi}
\end{align}
which is maximized at $r(a_{1})=\delta$, $r(a_{2})=0$ and $r(a_{0})=r_{\max}$.
\end{itemize}
In addition, for any policy $\pi$ such that $\pi(a_{1})+\pi(a_{2})<2p$,
by considering another policy $\pi^{\prime}$ defined as $\pi^{\prime}(a_{1})=2p-\pi(a_{2})$
and $\pi^{\prime}(a_{2})=2p-\pi(a_{1})$, we have
\begin{align*}
&\max_{r}J(\pi',r;\pi_{\mathsf{cal}})-\max_{r}J(\pi,r;\pi_{\mathsf{cal}}) \\
& \qquad =r_{\max}[\pi'(a_{1})+\pi'(a_{2})-2p]-\delta[\pi'(a_{2})-p]-\beta\mathsf{KL}(\pi'\parallel\pi_{\mathsf{ref}})\\
 & \qquad\qquad-(r_{\max}-\delta)[2p-\pi(a_{1})-\pi(a_{2})]+\delta[\pi(a_{2})-p]+\beta\mathsf{KL}(\pi\parallel\pi_{\mathsf{ref}})\\
 & \qquad=\beta[\mathsf{KL}(\pi\parallel\pi_{\mathsf{ref}})-\mathsf{KL}(\pi'\parallel\pi_{\mathsf{ref}})].
\end{align*}
Here the first relation follows from \eqref{eq:J-large-pi}, \eqref{eq:J-small-pi}
and the fact that $\pi^{\prime}(a_{1})+\pi^{\prime}(a_{2})>2p$. Let
$x=\pi(a_{1})$ and $y=\pi(a_{2})$. Let
\begin{align*}
f(x,y) & \coloneqq\mathsf{KL}(\pi\parallel\pi_{\mathsf{ref}})-\mathsf{KL}(\pi'\parallel\pi_{\mathsf{ref}})\\
 & =x\log x+y\log y+(1-x-y)\log(1-x-y)-(2p-x)\log(2p-x)\\
 & \qquad-(2p-y)\log(2p-y)-(1-4p+x+y)\log(1+x+y-4p).
\end{align*}
By elementary analysis, it is straightforward to check that $f(x,y)>0$
for any $x,y>0$ satisfying $x+y<2p$. Therefore we have
\[
\max_{r}J(\pi',r;\pi_{\mathsf{cal}})>\max_{r}J(\pi,r;\pi_{\mathsf{cal}}).
\]
Therefore in order to maximize $\ell(r,\mathcal{D}^{(t)})+\alpha J^{\star}(r;\pi_{\mathsf{cal}})$,
the following statement always holds regardless of the value of $\delta$:
\[
r^{(t+1)}(a_{0})=0,\quad\max\big\{ r^{(t+1)}(a_{1}),r^{(t+1)}(a_{2})\big\}=r_{\max}.
\]
This immediately implies that 
\begin{align*}
\pi^{(t+1)}(a_{0}) & =\frac{\exp(r^{(t+1)}(a_{0})/\beta)}{\exp(r^{(t+1)}(a_{0})/\beta)+\exp(r^{(t+1)}(a_{1})/\beta)+\exp(r^{(t+1)}(a_{2})/\beta)}\leq\frac{1}{2+\exp(r_{\max}/\beta)}.
\end{align*}
Therefore conditional on $\mathcal{E}_{t}$, we know that
\begin{align*}
\mathbb{P}(\mathcal{E}_{t+1}\vert\mathcal{E}_{t}) & \geq\big(1-\pi^{(t+1)}(a_{0})\big)^{2}\geq\left(\frac{1}{1+\exp(-r_{\max}/\beta)}\right)^{2}\geq\big(1-\exp(-r_{\max}/\beta)\big)^{2}.
\end{align*}
This relation, together with
\[
\mathbb{P}(\mathcal{E}_{0})=\big(\pi^{(1)}(a_{1})+\pi^{(1)}(a_{2})\big)^{2}=\frac{4}{9},
\]
establishes the statement \eqref{eq:Et-prob}. This immediately implies
that, for any $t\leq\exp(r_{\max}/\beta)/2$, 
\[
\mathbb{P}(\mathcal{E}_{t})\geq\frac{4}{9}\big(1-\exp(-r_{\max}/\beta)\big)^{2(t-1)}\geq\frac{4}{9}\big(1-\exp(-r_{\max}/\beta)\big)^{\exp(r_{\max}/\beta)}\geq\frac{4}{9e}\geq0.16.
\]

Finally, when $\mathcal{E}_{t}$ holds, we have
\begin{align*}
J(\pi^{\star};r^{\star},\pi_{\mathsf{cal}})-J(\pi^{(t)};r^{\star},\pi_{\mathsf{cal}}) & =\pi^{\star}(a_{0})-\pi^{(t)}(a_{0})-\beta\mathsf{KL}(\pi^{\star}\Vert\pi_{\mathsf{ref}})+\beta\mathsf{KL}(\pi^{(t)}\Vert\pi_{\mathsf{ref}}).
\end{align*}
We have
\[
\pi^{\star}(a_{0})=\frac{\exp(1/\beta)}{\exp(1/\beta)+2},\quad\pi^{\star}(a_{1})=\pi^{\star}(a_{2})=\frac{1}{\exp(1/\beta)+2}.
\]
Therefore we have
\begin{align*}
\mathsf{KL}(\pi^{\star}\parallel\pi_{\mathsf{ref}}) & =\log3+\pi^{\star}(a_{0})\log\pi^{\star}(a_{0})+\pi^{\star}(a_{1})\log\pi^{\star}(a_{1})+\pi^{\star}(a_{2})\log\pi^{\star}(a_{2})\\
 & =\log3+\frac{\exp(1/\beta)}{\exp(1/\beta)+2}\log\frac{\exp(1/\beta)}{\exp(1/\beta)+2}+\frac{2}{\exp(1/\beta)+2}\log\frac{1}{\exp(1/\beta)+2}\\
 & =\log3+\beta^{-1}\frac{\exp(1/\beta)}{\exp(1/\beta)+2}-\log[\exp(1/\beta)+2].
\end{align*}
In addition, when $\mathcal{E}^{t-1}$ happens, we know that
\begin{align*}
\mathsf{KL}(\pi^{(t)}\parallel\pi_{\mathsf{ref}}) & =\log3+\pi^{(t)}(a_{0})\log\pi^{(t)}(a_{0})+\pi^{(t)}(a_{1})\log\pi^{(t)}(a_{1})+\pi^{(t)}(a_{2})\log\pi^{(t)}(a_{2})\\
 & \overset{\text{(i)}}{\geq}\log3+\pi^{(t)}(a_{0})\log\pi^{(t)}(a_{0})+\big[\pi^{(t)}(a_{1})+\pi^{(t)}(a_{2})\big]\log\frac{\pi^{(t)}(a_{1})+\pi^{(t)}(a_{2})}{2}\\
 & =\log3+\pi^{(t)}(a_{0})\log\pi^{(t)}(a_{0})+\big[1-\pi^{(t)}(a_{0})\big]\log\frac{1-\pi^{(t)}(a_{0})}{2}\\
 &\overset{\text{(ii)}}{\geq}\log3-\log2-0.16.
\end{align*}
Here step (i) uses Jensen's inequality for convex function $f(x)=x\log x$;
step (ii) holds since the function $g(x)=x\log x+(1-x)\log(1-x)/2$
is monotonically decreasing for $0<x<1/3$, and we have
\[
\pi^{(t)}(a_{0})\leq\frac{1}{2+\exp(r_{\max}/\beta)}\leq\frac{1}{2+\exp(3)}\leq0.046
\]
provided that $r_{\max}/\beta\geq3$. We have
\begin{align*}
J(\pi^{\star};r^{\star},\pi_{\mathsf{cal}})-J(\pi^{(t)};r^{\star},\pi_{\mathsf{cal}}) & =\pi^{\star}(a_{0})-\pi^{(t)}(a_{0})-\beta\mathsf{KL}(\pi^{\star}\parallel\pi_{\mathsf{ref}})+\beta\mathsf{KL}(\pi^{(t)}\parallel\pi_{\mathsf{ref}})\\
 & \geq\beta\log\big(\exp(1/\beta)+2\big)-(\log2+0.16)\beta-0.046\\
 & \geq1/2,
\end{align*}
where the last relation holds for any $\beta>0$. 

\section{Proof of Proposition~\ref{prop:2} \protect\label{sec:proof-lb-2}}

Let $T=\min\{\kappa,\exp(r_{\max})/2\}$, and define the events
\[
\mathcal{A}\coloneqq\big\{ a_{2}^{t}=a_{0}\text{ for all }1\leq t\leq T\big\}
\]
and
\[
\mathcal{E}:=\{a_{1}^{t}\succ a_{2}^{t}\text{ or }a_{1}^{t}=a_{2}^{t}\text{ for all }1\leq t\leq T\}.
\]
We can check that when $\kappa\geq5$, 
\[
\mathbb{P}(\mathcal{A})=[\pi_{\mathsf{ref}}(a_{0})]^{T}\leq(1-2\kappa^{-1})^{\kappa}\geq\frac{1}{16}.
\]
Conditional on $\mathcal{A}$, we know that when $r_{\max}\geq1$,
\begin{align*}
\mathbb{P}(\mathcal{E}\mymid\mathcal{A}) & \geq\Big(\frac{\exp(r_{\max}-1)}{1+\exp(r_{\max}-1)}\Big)^{T}\geq\Big(\frac{\exp(r_{\max}-1)}{1+\exp(r_{\max}-1)}\Big)^{\exp(r_{\max})/2}\geq\frac{1}{4}.
\end{align*}
Conditional on $\mathcal{A}$ and $\mathcal{E}_{t}$, for any $1\leq t\leq T-1$,
all the preference data in $\mathcal{D}^{(t)}$ are of form $a_{1}^{t}\succ a_{2}^{t}$.
In this case, it is straightforward to check that the reward function
that maximizes $\ell(r,\mathcal{D}^{(t)})+\alpha J^{\star}(r;\pi_{\mathsf{ref}})$
is
\[
r^{(t+1)}(a_{0})=0,\quad r^{(t+1)}(a_{1})=r^{(t+1)}(a_{2})=r_{\max}.
\]
This immediately implies that 
\[
\pi^{(t+1)}(a_{0})=\frac{\kappa-2}{\kappa-2+2\exp(r_{\max}/\beta)},\quad\pi^{(t+1)}(a_{1})=\pi^{(t+1)}(a_{2})=\frac{\exp(r_{\max}/\beta)}{\kappa-2+2\exp(r_{\max}/\beta)}.
\]
On the other hand, we know that
\begin{align*}
\pi^{\star}(a_{0}) & =\frac{\kappa-2}{\kappa-2+\exp(r_{\max}/\beta)+\exp((r_{\max}-2)/\beta)},\\
\pi^{\star}(a_{1}) & =\frac{\exp(r_{\max}/\beta)}{\kappa-2+\exp(r_{\max}/\beta)+\exp((r_{\max}-2)/\beta)},\\
\pi^{\star}(a_{2}) & =\frac{\exp((r_{\max}-2)/\beta)}{\kappa-2+\exp(r_{\max}/\beta)+\exp((r_{\max}-2)/\beta)}.
\end{align*}
For any $2\leq t\leq T$, we first lower bound
\begin{equation}
J(\pi^{\star};r^{\star},\pi_{\mathsf{ref}})-J(\pi^{(t)};r^{\star},\pi_{\mathsf{ref}})\geq J(\pi_{\theta^{\star}};r^{\star},\pi_{\mathsf{ref}})-J(\pi_{1};r^{\star},\pi_{\mathsf{ref}})\label{eq:J-relax}
\end{equation}
for any $\theta^{\star}\in[0,1]$, where we define $\pi_{\theta}\coloneqq\theta\pi^{(t)}+(1-\theta)\pi^{\star}$,
and the above relation follows from the optimality of $\pi^{\star}$.
Recall the definition 
\begin{align*}
J(\pi;r^{\star},\pi_{\mathsf{ref}}) & =\pi(a_{1})r_{\max}+\pi(a_{2})(r_{\max}-2)-\beta\sum_{i=0}^{2}\pi(a_{i})\log\frac{\pi(a_{i})}{\pi_{\mathsf{ref}}(a_{i})},
\end{align*}
we can compute
\[
\nabla_{\pi}J(\pi;r^{\star},\pi_{\mathsf{ref}})=\left[\begin{array}{c}
r^{\star}(a_{0})-\beta\log[\pi(a_{0})/\pi_{\mathsf{ref}}(a_{0})]-\beta\\
r^{\star}(a_{1})-\beta\log[\pi(a_{1})/\pi_{\mathsf{ref}}(a_{1})]-\beta\\
r^{\star}(a_{2})-\beta\log[\pi(a_{2})/\pi_{\mathsf{ref}}(a_{2})]-\beta
\end{array}\right]
\]
and
\[
\nabla_{\pi}^{2}J(\pi;r^{\star},\pi_{\mathsf{ref}})=-\beta\mathsf{diag}\left\{ \pi(a_{0}),\pi(a_{1}),\pi(a_{2})\right\} ^{-1}.
\]
It is straightforward to check that
\begin{equation}
\nabla_{\pi}J(\pi^{(t)};r^{\star},\pi_{\mathsf{ref}})=\left[\begin{array}{c}
0\\
0\\
-1
\end{array}\right]+\mathsf{const}\cdot\left[\begin{array}{c}
1\\
1\\
1
\end{array}\right].\label{eq:grad-pi-t}
\end{equation}
Since $\pi^{(t)}(a_{0})<\pi^{\star}(a_{0})$, $\pi^{(t)}(a_{1})<\pi^{\star}(a_{1})$
and $\pi^{(t)}(a_{2})>\pi^{\star}(a_{2})$, we know that for any $\theta\in[0,\theta^{\star}]$
\begin{align}
\nabla_{\pi}^{2}J(\pi_{\theta};r^{\star},\pi_{\mathsf{ref}}) & \succeq-\beta\mathsf{diag}\big\{\pi^{(t)}(a_{0}),\pi^{(t)}(a_{1}),\theta^{\star}\pi^{(t)}(a_{2})+(1-\theta^{\star})\pi^{\star}(a_{2})\big\}^{-1}.\label{eq:hessian-pi-t}
\end{align}
Therefore we have
\begin{align}
 & J(\pi_{\theta^{\star}};r^{\star},\pi_{\mathsf{ref}})-J(\pi_{1};r^{\star},\pi_{\mathsf{ref}})\overset{\text{(i)}}{\geq}\theta^{\star}\nabla_{\pi}J(\pi^{(t)};r^{\star},\pi_{\mathsf{ref}})^{\top}(\pi^{\star}-\pi^{(t)})\nonumber \\
 & \qquad\quad-\frac{\beta\theta^{\star2}}{2}(\pi^{\star}-\pi^{(t)})^{\top}\mathsf{diag}\big\{\pi^{(t)}(a_{0}),\pi^{(t)}(a_{1}),\theta^{\star}\pi^{(t)}(a_{2})+(1-\theta^{\star})\pi^{\star}(a_{2})\big\}^{-1}(\pi^{\star}-\pi^{(t)})\nonumber \\
 & \qquad\overset{\text{(ii)}}{\geq}\theta^{\star}[\pi^{(t)}(a_{2})-\pi^{\star}(a_{2})]-\frac{\beta\theta^{\star2}}{2}[\pi^{\star}(a_{0})+\pi^{\star}(a_{1})+\frac{9}{16}\pi^{(t)}(a_{2})/\theta^{\star}]\nonumber \\
 & \qquad=\theta^{\star}[\pi^{(t)}(a_{2})-\pi^{\star}(a_{2})]-\frac{9}{32}\beta\theta^{\star}\pi^{(t)}(a_{2})-\frac{\beta\theta^{\star2}}{2}[1-\pi^{\star}(a_{2})]\nonumber \\
 & \qquad=\left(1-\frac{9}{32}\beta\right)\theta^{\star}\pi^{(t)}(a_{2})-\left(1-\frac{\beta\theta^{\star}}{2}\right)\theta^{\star}\pi^{\star}(a_{2})-\frac{\beta\theta^{\star2}}{2}\nonumber \\
 & \qquad\overset{\text{(iii)}}{\geq}\left(\frac{3}{4}-\frac{9}{32}\beta+\frac{\beta\theta^{\star}}{8}\right)\theta^{\star}\pi^{(t)}(a_{2})-\frac{\beta\theta^{\star2}}{2}\overset{\text{(iv)}}{\geq}\frac{15}{32}\theta^{\star}\pi^{(t)}(a_{2})-\frac{\theta^{\star2}}{2}.\label{eq:J-relax-lb}
\end{align}
Here step (i) follows from the Taylor expansion and (\ref{eq:hessian-pi-t});
step (ii) utilizes (\ref{eq:grad-pi-t}) and as well as the following
relations
\[
\pi^{(t)}(a_{0})\leq\pi^{\star}(a_{0})\leq2\pi^{(t)}(a_{0}),\quad\pi^{(t)}(a_{1})\leq\pi^{\star}(a_{1})\leq2\pi^{(t)}(a_{1})
\]
and when $\beta\leq1$, 
\begin{equation}
\pi^{\star}(a_{2})\leq\frac{2}{\exp(2/\beta)+1}\pi^{(t)}(a_{2})\leq\frac{1}{4}\pi^{(t)}(a_{2});\label{eq:a2-relation}
\end{equation}
steps (iii) and (iv) follows from (\ref{eq:a2-relation}) and $\beta\leq1$.
When $\kappa\leq\exp(r_{\max}\beta)$, we have
\begin{equation}
\pi^{(t)}(a_{2})=\frac{\exp(r_{\max}/\beta)}{\kappa-2+2\exp(r_{\max}/\beta)}\geq\frac{\exp(r_{\max}/\beta)}{3\exp(r_{\max}/\beta)-2}\geq\frac{1}{3}.\label{eq:pi-t-a2-lb}
\end{equation}
By taking (\ref{eq:J-relax}), (\ref{eq:J-relax-lb}) and (\ref{eq:pi-t-a2-lb})
collectively, we have
\[
(\pi^{\star};r^{\star},\pi_{\mathsf{ref}})-J(\pi^{(t)};r^{\star},\pi_{\mathsf{ref}})\geq\frac{5}{32}\theta^{\star}-\frac{\theta^{\star2}}{2}\geq\frac{25}{2048}>0.01
\]
where we take $\theta^{\star}=5/32$.

\section{Proof of auxiliary lemmas}

\subsection{Proof of Lemma~\ref{lem:likelihood}\protect\label{subsec:proof-lemma-likelihood}}

We first express
\begin{align*}
X_{i}\coloneqq\log\frac{\sigma(r^{\star}(a_{+}^{i})-r^{\star}(a_{-}^{i}))}{\sigma(r(a_{+}^{i})-r(a_{-}^{i}))} & =\ind\{a_{1}^{i}\succ a_{2}^{i}\}\log\frac{\sigma(r^{\star}(a_{1}^{i})-r^{\star}(a_{2}^{i}))}{\sigma(r(a_{1}^{i})-r(a_{2}^{i}))}+\ind\{a_{1}^{i}\prec a_{2}^{i}\}\log\frac{\sigma(r^{\star}(a_{2}^{i})-r^{\star}(a_{1}^{i}))}{\sigma(r(a_{2}^{i})-r(a_{1}^{i}))}.
\end{align*}
It is straightforward to check that
\begin{align*}
\mathbb{E}\left[X_{i}\big\vert a_{1}^{i},a_{2}^{i}\right] & =\mathbb{P}\left(a_{1}^{i}\succ a_{2}^{i}\vert a_{1}^{i},a_{2}^{i}\right)\log\frac{\sigma(r^{\star}(a_{1}^{i})-r^{\star}(a_{2}^{i}))}{\sigma(r(a_{1}^{i})-r(a_{2}^{i}))}+\mathbb{P}\left(a_{1}^{i}\prec a_{2}^{i}\vert a_{1}^{i},a_{2}^{i}\right)\log\frac{\sigma(r^{\star}(a_{2}^{i})-r^{\star}(a_{1}^{i}))}{\sigma(r(a_{2}^{i})-r(a_{1}^{i}))}\\
 & =\sigma(r^{\star}(a_{1}^{i})-r^{\star}(a_{2}^{i}))\log\frac{\sigma(r^{\star}(a_{1}^{i})-r^{\star}(a_{2}^{i}))}{\sigma(r(a_{1}^{i})-r(a_{2}^{i}))}+\sigma(r^{\star}(a_{2}^{i})-r^{\star}(a_{1}^{i}))\log\frac{\sigma(r^{\star}(a_{2}^{i})-r^{\star}(a_{1}^{i}))}{\sigma(r(a_{2}^{i})-r(a_{1}^{i}))}\\
 & =\mathsf{KL}\big(\sigma(r^{\star}(a_{1}^{i})-r^{\star}(a_{2}^{i}))\parallel\sigma(r(a_{1}^{i})-r(a_{2}^{i}))\big).
\end{align*}
and
\[
\left|X_{i}\right|\leq\left|\log\left(1+\exp(-r(a_{+}^{i})+r(a_{-}^{i}))\right)\right|\le2r_{\mathsf{max}}.
\]
In addition, we can compute the variance
\begin{align*}
\mathsf{Var}\left(X_{i}\big\vert a_{1}^{i},a_{2}^{i}\right) & =\sigma(r^{\star}(a_{1}^{i})-r^{\star}(a_{2}^{i}))\sigma(r^{\star}(a_{2}^{i})-r^{\star}(a_{1}^{i}))\left[\log\frac{\sigma(r^{\star}(a_{1}^{i})-r^{\star}(a_{2}^{i}))}{\sigma(r(a_{1}^{i})-r(a_{2}^{i}))}-\log\frac{\sigma(r^{\star}(a_{2}^{i})-r^{\star}(a_{1}^{i}))}{\sigma(r(a_{2}^{i})-r(a_{1}^{i}))}\right]^{2}\\
 & =\sigma(r^{\star}(a_{1}^{i})-r^{\star}(a_{2}^{i}))\sigma(r^{\star}(a_{2}^{i})-r^{\star}(a_{1}^{i}))\left[\log\frac{\sigma(r^{\star}(a_{1}^{i})-r^{\star}(a_{2}^{i}))}{\sigma(r^{\star}(a_{2}^{i})-r^{\star}(a_{1}^{i}))}-\log\frac{\sigma(r(a_{1}^{i})-r(a_{2}^{i}))}{\sigma(r(a_{2}^{i})-r(a_{1}^{i}))}\right]^{2}\\
 & =\sigma(r^{\star}(a_{1}^{i})-r^{\star}(a_{2}^{i}))\sigma(r^{\star}(a_{2}^{i})-r^{\star}(a_{1}^{i}))\big[r(a_{1}^{i})-r(a_{2}^{i})-r^{\star}(a_{1}^{i})+r^{\star}(a_{2}^{i})\big]^{2}.
\end{align*}
In view of Lemma~\ref{lemma:KL-lower-bound}, we have
\begin{align}
 & \mathsf{KL}\big(\sigma(r^{\star}(a_{1}^{i})-r^{\star}(a_{2}^{i}))\parallel\sigma(r(a_{1}^{i})-r(a_{2}^{i}))\big)\nonumber \\
 & \qquad\geq\frac{1}{4}\sigma(r^{\star}(a_{1}^{i})-r^{\star}(a_{2}^{i}))\sigma(r^{\star}(a_{2}^{i})-r^{\star}(a_{1}^{i}))\nonumber \\
 & \qquad\qquad\cdot\min\left\{ |r(a_{1}^{i})-r(a_{2}^{i})-r^{\star}(a_{1}^{i})+r^{\star}(a_{2}^{i})|,\big[r(a_{1}^{i})-r(a_{2}^{i})-r^{\star}(a_{1}^{i})+r^{\star}(a_{2}^{i})\big]^{2}\right\} \nonumber \\
 & \qquad\geq\frac{1}{16r_{\max}}\sigma(r^{\star}(a_{1}^{i})-r^{\star}(a_{2}^{i}))\sigma(r^{\star}(a_{2}^{i})-r^{\star}(a_{1}^{i}))\big[r(a_{1}^{i})-r(a_{2}^{i})-r^{\star}(a_{1}^{i})+r^{\star}(a_{2}^{i})\big]^{2},\label{eq:KL-lower-bound-a1-a2}
\end{align}
where the last step follows from $|r(a_{1}^{i})-r(a_{2}^{i})-r^{\star}(a_{1}^{i})+r^{\star}(a_{2}^{i})|\leq4r_{\max}$.
Therefore we have
\[
\mathsf{Var}\left(X_{i}\big\vert a_{1}^{i},a_{2}^{i}\right)\leq16r_{\mathsf{max}}\mathsf{KL}\big(\sigma(r^{\star}(a_{+}^{i})-r^{\star}(a_{-}^{i}))\parallel\sigma(r(a_{+}^{i})-r(a_{-}^{i}))\big).
\]
In addition, we have the following deterministic bound
\[
\sum_{i=1}^{t}\mathsf{Var}\left(X_{i}\big\vert a_{1}^{i},a_{2}^{i}\right)\leq16tr_{\max}^{2}.
\]
By the Freedman's inequality (cf.~Lemma~\ref{lemma:freedman}),
for any fixed $r$, with probability exceeding $1-\delta$,
\begin{align*}
|\Delta_{t}(r)| & \leq\left|\sum_{i=1}^{t}\left(X_{i}-\mathbb{E}\left[X_{i}\big\vert a_{1}^{i},a_{2}^{i}\right]\right)\right|\\
 & \leq C_{2}\sqrt{\sum_{i=1}^{t}r_{\mathsf{max}}\mathsf{KL}\big(\sigma(r^{\star}(a_{1}^{i})-r^{\star}(a_{2}^{i}))\parallel\sigma(r(a_{1}^{i})-r(a_{2}^{i}))\big)\log\frac{\log t}{\delta}}+C_{2}r_{\mathsf{max}}\log\frac{\log t}{\delta}
\end{align*}
for some sufficiently large constant $C_{2}>0$.

\subsection{Proof of Lemma~\ref{lemma:covering}\protect\label{subsec:proof-lemma-covering}}

For any fixed $r:\mathcal{A}\to[\pm r_{\max}]$, with probability
exceeding $1-\delta$ we have
\begin{align*}
\vert\Delta_{t}(r)\vert & \overset{\text{(i)}}{\leq}C_{2}\sqrt{\sum_{i=1}^{t}r_{\mathsf{max}}\mathsf{KL}\big(\sigma(r^{\star}(a_{1}^{i})-r^{\star}(a_{2}^{i}))\parallel\sigma(r(a_{1}^{i})-r(a_{2}^{i}))\big)\log\frac{\log T}{\delta}}+C_{2}r_{\mathsf{max}}\log\frac{\log T}{\delta}\\
 & \overset{\text{(ii)}}{\leq}\frac{1}{2}\sum_{i}\mathsf{KL}\big(\sigma(r^{\star}(a_{1}^{i})-r^{\star}(a_{2}^{i}))\parallel\sigma(r(a_{1}^{i})-r(a_{2}^{i}))\big)+2C_{2}^{2}r_{\max}\log\frac{\log T}{\delta}.
\end{align*}
Here step (i) follows from Lemma~\ref{lem:likelihood}, and step
(ii) utilizes the AM-GM inequality. This immediately implies that
\begin{align}
\ell(r,\mathcal{D}^{(t)})-\ell(r^{\star},\mathcal{D}^{(t)}) & =\sum_{i=1}^{t}\log\frac{\sigma(r(a_{+}^{i})-r(a_{-}^{i}))}{\sigma(r^{\star}(a_{+}^{i})-r^{\star}(a_{-}^{i}))}\nonumber \\
 & =-\sum_{i=1}^{t}\mathsf{KL}\big(\sigma(r^{\star}(a_{1}^{i})-r^{\star}(a_{2}^{i}))\parallel\sigma(r(a_{1}^{i})-r(a_{2}^{i}))\big)-\Delta_{t}(r)\nonumber \\
 & \leq-\frac{1}{2}\sum_{i=1}^{t}\mathsf{KL}\big(\sigma(r^{\star}(a_{1}^{i})-r^{\star}(a_{2}^{i}))\parallel\sigma(r(a_{1}^{i})-r(a_{2}^{i}))\big)+2C_{2}^{2}r_{\max}\log\frac{\log T}{\delta}.\label{eq:pointwise}
\end{align}

Then we explore the Lipschitzness continuity of the above functionals
of $r$. For any two fixed reward functions $r,r':\mathcal{A}\to[\pm r_{\max}]$,
we have
\begin{align}
\left|\ell(r,\mathcal{D}^{(t)})-\ell(r',\mathcal{D}^{(t)})\right| & =\sum_{i=1}^{t}\left|\log[\sigma(r(a_{+}^{i})-r(a_{-}^{i}))]-\log[\sigma(r'(a_{+}^{i})-r'(a_{-}^{i}))]\right|\nonumber \\
 & \leq\sum_{i=1}^{t}\vert r(a_{+}^{i})-r(a_{-}^{i})-r'(a_{+}^{i})+r'(a_{-}^{i})\vert\leq2T\Vert r-r'\Vert_{\infty},\label{eq:lip-1}
\end{align}
where the penultimate step follows from $\mathrm{d}\log(\sigma(x))/\mathrm{d}x=\sigma(-x)\leq1$.
Similarly, for any $x,y,\delta\in\mathbb{R}$, we have
\begin{align*}
\left|\mathsf{KL}\big(\sigma(x)\parallel\sigma(y)\big)-\mathsf{KL}\big(\sigma(x)\parallel\sigma(y+\delta)\big)\right| & =\left|\sigma(x)\log\frac{\sigma(y+\delta)}{\sigma(y)}+(1-\sigma(x))\log\frac{1-\sigma(y+\delta)}{1-\sigma(y)}\right|\\
 & \leq\sigma(x)|\delta|+(1-\sigma(x))|\delta|=|\delta|.
\end{align*}
This implies that
\begin{align}
 & \bigg\vert\sum_{i=1}^{t}\mathsf{KL}\big(\sigma(r^{\star}(a_{1}^{i})-r^{\star}(a_{2}^{i}))\parallel\sigma(r(a_{1}^{i})-r(a_{2}^{i}))\big)\nonumber \\
 & \qquad-\sum_{i=1}^{t}\mathsf{KL}\big(\sigma(r^{\star}(a_{1}^{i})-r^{\star}(a_{2}^{i}))\parallel\sigma(r'(a_{1}^{i})-r'(a_{2}^{i}))\big)\bigg\vert\leq2\Vert r-r'\Vert_{\infty}.\label{eq:lip-2}
\end{align}

Let $\mathcal{N}_{\varepsilon}$ be an $\varepsilon$-net of $[-r_{\max},r_{\max}]^{A}$
(or equivalently, the function space of $r:\mathcal{A}\to[\pm r_{\max}]$)
under the $\ell_{\infty}$ norm such that $\vert\mathcal{N}_{\varepsilon}\vert\leq(2r_{\max}/\varepsilon)^{A}$.
By standard union bound argument and (\ref{eq:pointwise}), with probability
exceeding $1-\delta$, 
\begin{align}
\ell(r,\mathcal{D}^{(t)})-\ell(r^{\star},\mathcal{D}^{(t)}) & \leq-\frac{1}{2}\sum_{i=1}^{t}\mathsf{KL}\big(\sigma(r^{\star}(a_{1}^{i})-r^{\star}(a_{2}^{i}))\parallel\sigma(r(a_{1}^{i})-r(a_{2}^{i}))\big)+2C_{2}^{2}r_{\max}\log\frac{|\mathcal{N}_{\varepsilon}|\log T}{\delta}\label{eq:uniform}
\end{align}
holds for any $r\in\mathcal{N}_{\varepsilon}$. This implies that
for any $r:\mathcal{A}\to[\pm r_{\max}]$, there exists $r_{0}\in\mathcal{N}_{\varepsilon}$
such that $\Vert r-r'\Vert\leq\varepsilon$, hence
\begin{align*}
 & \ell(r,\mathcal{D}^{(t)})-\ell(r^{\star},\mathcal{D}^{(t)})\overset{\text{(i)}}{\leq}\ell(r_{0},\mathcal{D}^{(t)})-\ell(r^{\star},\mathcal{D}^{(t)})+2T\varepsilon\\
 & \quad\overset{\text{(ii)}}{\leq}-\frac{1}{2}\sum_{i=1}^{t}\mathsf{KL}\big(\sigma(r^{\star}(a_{1}^{i})-r^{\star}(a_{2}^{i}))\parallel\sigma(r_{0}(a_{1}^{i})-r_{0}(a_{2}^{i}))\big)+2C_{2}^{2}r_{\max}\log\frac{|\mathcal{N}_{\varepsilon}|\log T}{\delta}+2T\varepsilon\\
 & \quad\overset{\text{(iii)}}{\leq}-\frac{1}{2}\sum_{i=1}^{t}\mathsf{KL}\big(\sigma(r^{\star}(a_{1}^{i})-r^{\star}(a_{2}^{i}))\parallel\sigma(r(a_{1}^{i})-r(a_{2}^{i}))\big)+2C_{2}^{2}r_{\max}\log\frac{|\mathcal{N}_{\varepsilon}|\log T}{\delta}+4T\varepsilon\\
 & \quad\overset{\text{(iv)}}{\leq}-\frac{1}{2}\sum_{i=1}^{t}\mathsf{KL}\big(\sigma(r^{\star}(a_{1}^{i})-r^{\star}(a_{2}^{i}))\parallel\sigma(r(a_{1}^{i})-r(a_{2}^{i}))\big)+C_{3}Ar_{\max}\log T.
\end{align*}
Here step (i) utilizes (\ref{eq:lip-1}); step (ii) follows from $r_{0}\in\mathcal{N}_{\varepsilon}$
and the uniform concentration bound (\ref{eq:uniform}); step (iii)
uses (\ref{eq:lip-2}); step (iv) holds as long as $C_{3}\gg2C_{2}^{2}$,
where we let $\varepsilon=Ar_{\max}/T$ and $\delta=T^{-10}$. This
completes the proof.

\subsection{Proof of Lemma~\ref{lemma:counting}\protect\label{subsec:proof-lemma-counting}}

When $N_{t}(a_{+},a_{-})\geq100C_{4}\alpha_{t}r_{\max}$, we have
\begin{equation}
\mathsf{KL}\big(\sigma(r^{\star}(a_{+})-r^{\star}(a_{-}))\parallel\sigma(r^{(t)}(a_{+})-r^{(t)}(a_{-}))\big)\leq\frac{1}{100}.\label{eq:KL-upper-bound}
\end{equation}
Now we assert that $r^{(t)}(a_{-})-r^{(t)}(a_{+})<0.5$ for any $t\geq t_{0}$.
This is because, if $r^{(t)}(a_{-})-r^{(t)}(a_{+})\geq0.5$, we have
\begin{align*}
\mathsf{KL}\big(\sigma(r^{\star}(a_{+})-r^{\star}(a_{-}))\parallel\sigma(r^{(t)}(a_{+})-r^{(t)}(a_{-}))\big) & =\mathsf{KL}\big(\sigma(r^{\star}(a_{-})-r^{\star}(a_{+}))\parallel\sigma(r^{(t)}(a_{-})-r^{(t)}(a_{+}))\big)\\
 & \geq\mathsf{KL}\big(\sigma(0)\parallel\sigma(0.5)\big)>\frac{1}{100}.
\end{align*}
Here we use the fact that $r^{\star}(a_{-})-r^{\star}(a_{+})\leq0$.
This contradicts with (\ref{eq:KL-upper-bound}). Hence we have
\begin{equation}
r^{(t)}(a_{-})-r^{(t)}(a_{+})<0.5.\label{eq:rt-bound-1}
\end{equation}
Let $p\coloneqq\sigma(r^{\star}(a_{-})-r^{\star}(a_{+}))$ and $q\coloneqq\sigma(r^{(t)}(a_{-})-r^{(t)}(a_{+}))$.
We have
\begin{align}
\exp(r^{(t)}(a_{-})-r^{(t)}(a_{+})) & \overset{\text{(i)}}{\leq}3\sigma(r^{(t)}(a_{-})-r^{(t)}(a_{+}))=3q\overset{\text{(ii)}}{\leq}6p+\mathsf{KL}(p\parallel q)\label{eq:rt-bound-2}\\
 & =6\sigma(r^{\star}(a_{-})-r^{\star}(a_{+}))+24\mathsf{KL}\big(\sigma(r^{\star}(a_{+})-r^{\star}(a_{-}))\parallel\sigma(r^{(t)}(a_{+})-r^{(t)}(a_{-}))\big).\nonumber 
\end{align}
Here step (i) follows from (\ref{eq:rt-bound-1}), while step (ii)
holds trivially when $q\leq2p$, and when $q>2p$ we have
\[
\mathsf{KL}(p\parallel q)=p\log\frac{p}{q}+(1-p)\log\frac{1-p}{1-q}\geq\frac{(q-p)^{2}}{2q}\geq\frac{1}{8}q.
\]

Finally, for any $t_{0}\le t_{1}<t_{2}\le T$, we can upper bound
\[
N_{t_{2}}(a_{+},a_{-})-N_{t_{1}}(a_{+},a_{-})\leq\sum_{i=t_{1}+1}^{t_{2}}X_{i}\quad\text{where}\quad X_{i}\coloneqq\ind\{a_{-}\text{ is sampled in the }i\text{-th iteration}\}.
\]
It is straightforward to check that $X_{i}-\mathbb{E}[X_{i}|\mathcal{F}_{i-1}]$
is a martingale difference sequence, and by the Azuma-Hoeffding inequality,
with probability exceeding $1-O(T^{-100})$ we have
\[
\sum_{i=t_{1}+1}^{t_{2}}\left(X_{i}-\mathbb{E}[X_{i}|\pi^{(i)},\pi^{(i-1)}]\right)\leq\widetilde{C}\sqrt{T\log T}
\]
for some universal constant $\widetilde{C}>0$. In addition, we have
\begin{align*}
\mathbb{E}[X_{i}|\pi^{(i)},\pi^{(i-1)}] & \leq\frac{\pi^{(i)}(a_{-})}{\pi^{(i)}(a_{-})+\pi^{(i)}(a_{+})}+\frac{\pi^{(i-1)}(a_{-})}{\pi^{(i-1)}(a_{-})+\pi^{(i-1)}(a_{+})}.
\end{align*}
For each $t\in[T]$, we have
\begin{align*}
 & \frac{\pi^{(t)}(a_{-})}{\pi^{(t)}(a_{-})+\pi^{(t)}(a_{+})}\overset{\text{(i)}}{=}\frac{\pi_{\mathsf{ref}}(a_{-})\exp(r^{(t)}(a_{-})/\beta)}{\pi_{\mathsf{ref}}(a_{-})\exp(r^{(t)}(a_{-})/\beta)+\pi_{\mathsf{ref}}(a_{+})\exp(r^{(t)}(a_{+})/\beta)}\\
 & \qquad\leq\frac{\pi_{\mathsf{ref}}(a_{-})}{\pi_{\mathsf{ref}}(a_{+})}\exp\left(\big(r^{(t)}(a_{-})-r^{(t)}(a_{+})\big)/\beta\right)\\
 & \qquad\leq\frac{\pi_{\mathsf{ref}}(a_{-})}{\pi_{\mathsf{ref}}(a_{+})}\left[6\sigma(r^{\star}(a_{-})-r^{\star}(a_{+}))+24\mathsf{KL}\big(\sigma(r^{\star}(a_{+})-r^{\star}(a_{-}))\parallel\sigma(r^{(t)}(a_{+})-r^{(t)}(a_{-}))\big)\right]^{1/\beta}.
\end{align*}
Here step (i) utilizes \eqref{eq:optimal-policy-given-r}, while step
(ii) follows from \eqref{eq:rt-bound-2}. Hence we have
\begin{align*}
N_{t_{2}}(a_{+},a_{-})-N_{t_{1}}(a_{+},a_{-}) & \leq2\sum_{t=t_{1}+1}^{t_{2}}\frac{\pi^{(t)}(a_{-})}{\pi^{(t)}(a_{-})+\pi^{(t)}(a_{+})}+2\widetilde{C}\sqrt{T\log T}\\
 & \leq C_{5}^{1/\beta}\sum_{t=t_{1}+1}^{t_{2}}\frac{\pi_{\mathsf{ref}}(a_{-})}{\pi_{\mathsf{ref}}(a_{+})}\Big[\mathsf{KL}\big(\sigma(r^{\star}(a_{+})-r^{\star}(a_{-}))\Vert\sigma(r^{(t)}(a_{+})-r^{(t)}(a_{-}))\big)^{\frac{1}{\beta}}\\
 & \qquad\qquad\qquad\qquad\qquad+\sigma(r^{\star}(a_{-})-r^{\star}(a_{+}))^{\frac{1}{\beta}}\Big]+C_{5}\sqrt{T\log T}
\end{align*}
for some sufficiently large constant $C_{5}>0$.

\subsection{Proof of Lemma~\ref{lemma:zeta}\protect\protect\label{subsec:proof-lemma-zeta}}

Let $t_{0}$ be the first iteration such that 
\begin{equation}
	N_{t_{0}}(a_{+},a_{-})\geq\min\left\{ \frac{1}{2}N_{T}(a_{+},a_{-}),100C_{4}\alpha_{T}r_{\mathsf{max}}\right\} .\label{eq:defn-t0}
\end{equation}
In what follows, we establish the desired result under two different
cases: $N_{T}(a_{+},a_{-})$ being larger or smaller than $c_{0}\exp(r^{\star}(a_{+})-r^{\star}(a_{-}))\alpha_{T}r_{\mathsf{max}}$
for some sufficiently large constant $c_{0}>0$.

\paragraph{Case 1.}

When $N_{T}(a_{+},a_{-})\leq c_{0}\exp(r^{\star}(a_{+})-r^{\star}(a_{-}))\alpha_{T}r_{\mathsf{max}}$,
it is straightforward to show that 
\begin{equation}
	\zeta(a_{+},a_{-})\leq N_{T}(a_{+},a_{-})r_{\max}\leq C_{6}\exp(r^{\star}(a_{+})-r^{\star}(a_{-}))\alpha_{T}r_{\mathsf{max}}^{2}=C_{6}\frac{\pi_{\mathsf{HF}}(a_{+})}{\pi_{\mathsf{HF}}(a_{-})}\alpha_{T}r_{\mathsf{max}}^{2}.\label{eq:case1-hf}
\end{equation}
In addition, we have 
\[
\sigma(r^{\star}(a_{-})-r^{\star}(a_{+}))\leq\exp(r^{\star}(a_{-})-r^{\star}(a_{+}))\leq\frac{c_{0}\alpha_{T}r_{\mathsf{max}}}{N_{T}(a_{+},a_{-})}.
\]
In addition, for any $t_{0}\leq t\leq T$, we can use \eqref{eq:KL-bound}
to show that 
\begin{equation}
	\mathsf{KL}\big(\sigma(r^{\star}(a_{+})-r^{\star}(a_{-}))\Vert\sigma(r^{(t)}(a_{+})-r^{(t)}(a_{-}))\big)\leq\frac{C_{4}\alpha_{t}r_{\max}}{N_{t}(a_{+},a_{-})}\leq\frac{2C_{4}\alpha_{T}r_{\max}}{N_{T}(a_{+},a_{-})}.\label{eq:KL-bound-large-n}
\end{equation}
By taking $t_{1}=t_{0}-1$ and $t_{2}=T$ in Lemma~\ref{lemma:counting},
we have 
\begin{align*}
	N_{T}(a_{+},a_{-}) & \overset{\text{(i)}}{\leq}2[N_{T}(a_{+},a_{-})-N_{t_{0}-1}(a_{+},a_{-})]\\
	& \overset{\text{(ii)}}{\leq}4T\frac{\pi_{\mathsf{ref}}(a_{-})}{\pi_{\mathsf{ref}}(a_{+})}\Big(\frac{C_{5}\max\{c_{0},2C_{4}\}\alpha_{T}r_{\mathsf{max}}}{N_{T}(a_{+},a_{-})}\Big)^{1/\beta}+2C_{5}\sqrt{T\log T}.
\end{align*}
Here step (i) follows from the definition of $t_{0}$ (cf.~\eqref{eq:defn-t0}),
while step (ii) uses the above two bounds and Lemma~\ref{lemma:counting}
with $t_{1}=t_{0}-1$ and $t_{2}=T$. This immediately implies that
\[
N_{T}(a_{+},a_{-})\leq C_{7}\left(T\frac{\pi_{\mathsf{ref}}(a_{-})}{\pi_{\mathsf{ref}}(a_{+})}\right)^{\frac{\beta}{\beta+1}}(\alpha_{T}r_{\max})^{\frac{1}{\beta+1}}+C_{7}\sqrt{T\log T}
\]
for some sufficiently large constant $C_{7}>0$. This leads to 
\begin{equation}
	\zeta(a_{+},a_{-})\leq N_{T}(a_{+},a_{-})r_{\max}\leq C_{7}\left(T\frac{\pi_{\mathsf{ref}}(a_{-})}{\pi_{\mathsf{ref}}(a_{+})}\right)^{\frac{\beta}{\beta+1}}\alpha_{T}^{\frac{1}{\beta+1}}r_{\max}^{\frac{\beta+2}{\beta+1}}+C_{7}\sqrt{T\log T}r_{\max}.\label{eq:case1-ref}
\end{equation}

\paragraph{Case 2.}

When $N_{T}(a_{+},a_{-})>c_{0}\exp(r^{\star}(a_{+})-r^{\star}(a_{-}))\alpha_{T}r_{\mathsf{max}}$,
we have 
\begin{align*}
	& \exp(r^{\star}(a_{+})-r^{\star}(a_{-}))\alpha_{T}r_{\mathsf{max}}\leq\frac{1}{c_{0}}N_{T}(a_{+},a_{-})\overset{\text{(i)}}{\leq}\frac{2}{c_{0}}[N_{T}(a_{+},a_{-})-N_{t_{0}-1}(a_{+},a_{-})]\\
	& \qquad\overset{\text{(ii)}}{\leq}\frac{2C_{5}^{1/\beta}}{c_{0}}T\frac{\pi_{\mathsf{ref}}(a_{-})}{\pi_{\mathsf{ref}}(a_{+})}\Big[\sigma(r^{\star}(a_{-})-r^{\star}(a_{+}))^{1/\beta}+\Big(\frac{2C_{4}\alpha_{T}r_{\max}}{N_{T}(a_{+},a_{-})}\Big)^{1/\beta}\Big]+\frac{2C_{5}}{c_{0}}\sqrt{T\log T}\\
	& \qquad\overset{\text{(iii)}}{\leq}\frac{4}{c_{0}}\max\{C_5,2C_{4}C_5/c_{0}\}^{1/\beta}T\frac{\pi_{\mathsf{ref}}(a_{-})}{\pi_{\mathsf{ref}}(a_{+})}\exp(r^{\star}(a_{-})-r^{\star}(a_{+}))^{1/\beta}+\frac{2C_{5}}{c_{0}}\sqrt{T\log T}.
\end{align*}
Here step (i) follows from the definition of $t_{0}$ (cf.~\eqref{eq:defn-t0});
step (ii) utilizes Lemma~\ref{lemma:counting} with $t_{1}=t_{0}-1$
and $t_{2}=T$, as well as \eqref{eq:KL-bound-large-n}; step (iii)
holds since $\sigma(r^{\star}(a_{-})-r^{\star}(a_{+}))\leq\exp(r^{\star}(a_{-})-r^{\star}(a_{+}))$
and 
\[
\frac{2C_{4}\alpha_{T}r_{\max}}{N_{T}(a_{+},a_{-})}\leq\frac{2C_{4}\alpha_{T}r_{\max}}{c_{0}\exp(r^{\star}(a_{+})-r^{\star}(a_{-}))\alpha_{T}r_{\mathsf{max}}}\leq\frac{2C_{4}}{c_{0}}\exp(r^{\star}(a_{-})-r^{\star}(a_{+})).
\]
This immediately implies that for some sufficiently large constant
$C_{8}>0$, we have 
\begin{equation}
	\frac{\pi_{\mathsf{HF}}(a_{+})}{\pi_{\mathsf{HF}}(a_{-})}=\exp(r^{\star}(a_{+})-r^{\star}(a_{-}))\leq C_{8}\Big(\frac{\pi_{\mathsf{ref}}(a_{-})T}{\pi_{\mathsf{ref}}(a_{+})\alpha_{T}r_{\mathsf{max}}}\Big)^{\frac{\beta}{\beta+1}}+C_{8}\frac{\sqrt{T\log T}}{\alpha_{T}r_{\mathsf{max}}}.\label{eq:HF-ref-bound}
\end{equation}
Similar to (\ref{eq:KL-lower-bound-a1-a2}), we can show that 
\begin{align*}
	& \mathsf{KL}\big(\sigma(r^{\star}(a_{+})-r^{\star}(a_{-}))\Vert\sigma(r(a_{+})-r(a_{-}))\big)=\mathsf{KL}\big(\sigma(r^{\star}(a_{-})-r^{\star}(a_{+}))\Vert\sigma(r(a_{-})-r(a_{+}))\big)\\
	& \qquad\overset{\text{(a)}}{\geq}\frac{1}{16r_{\max}}\sigma(r^{\star}(a_{-})-r^{\star}(a_{+}))[1-\sigma(r^{\star}(a_{-})-r^{\star}(a_{+}))][r(a_{+})-r(a_{-})-r^{\star}(a_{+})+r^{\star}(a_{-})]^{2}\\
	& \qquad\overset{\text{(b)}}{\geq}\frac{1}{64r_{\max}}\exp(r^{\star}(a_{-})-r^{\star}(a_{+}))[r(a_{+})-r(a_{-})-r^{\star}(a_{+})+r^{\star}(a_{-})]^{2}.
\end{align*}
Here step (a) follows from Lemma~\ref{lemma:KL-lower-bound}; step
(b) makes use of the fact that $r^{\star}(a_{-})\leq r^{\star}(a_{+})$.
Hence we have 
\begin{align}
	& [r(a_{+})-r(a_{-})-r^{\star}(a_{+})+r^{\star}(a_{-})]^{2}\nonumber \\
	& \qquad\leq64r_{\mathsf{max}}\frac{\pi_{\mathsf{HF}}(a_{+})}{\pi_{\mathsf{HF}}(a_{-})}\mathsf{KL}\big(\sigma(r^{\star}(a_{+})-r^{\star}(a_{-}))\parallel\sigma(r(a_{+})-r(a_{-}))\big).\label{eq:r-KL-bound}
\end{align}
In addition, we have 
\begin{align*}
	\sum_{i=1}^{t}\mathsf{KL}\big(\sigma(r^{\star}(a_{1}^{i})-r^{\star}(a_{2}^{i}))\parallel\sigma(r^{(t)}(a_{1}^{i})-r^{(t)}(a_{2}^{i}))\big) & \overset{\text{(i)}}{\leq}-2\big[\ell(r^{(t)},\mathcal{D}^{(t)})-\ell(r^{\star},\mathcal{D}^{(t)})\big]+2C_{3}Ar_{\mathsf{max}}\log T\\
	& \overset{\text{(ii)}}{\leq}2\alpha_{t}\gamma_{t}+2C_{3}Ar_{\mathsf{max}}\log T
\end{align*}
Here step (i) follows from Lemma~\ref{lemma:covering}, while step
(ii) utilizes (\ref{eq:optimality-ineq-2}) and the definition of
$\gamma_{t}$ (cf.~\eqref{eq:regret-t-decom}). This immediately
implies that 
\begin{equation}
	\mathsf{KL}\big(\sigma(r^{\star}(a_{+})-r^{\star}(a_{-}))\parallel\sigma(r^{(t)}(a_{+})-r^{(t)}(a_{-}))\big)\leq\frac{2\alpha_{t}\gamma_{t}+2C_{3}Ar_{\mathsf{max}}\log T}{N_{t}(a_{+},a_{-})}.\label{eq:KL-upper-bound-recursive}
\end{equation}
Therefore for any $1\leq n_{1}<n_{2}\leq N_{T}(a_{+},a_{-})$, we
have 
\begin{align}
	& \frac{1}{n_{2}-n_{1}}\bigg(\sum_{n=n_{1}}^{n_{2}}\big|r^{(t_{n})}(a_{+})-r^{(t_{n})}(a_{-})-r^{\star}(a_{+})+r^{\star}(a_{-})\big|\bigg)^{2}\nonumber \\
	& \qquad\overset{\text{(i)}}{\leq}\sum_{n=n_{1}}^{n_{2}}[r^{(t_{n})}(a_{+})-r^{(t_{n})}(a_{-})-r^{\star}(a_{+})+r^{\star}(a_{-})]^{2}\nonumber \\
	& \qquad\overset{\text{(ii)}}{\leq}64r_{\mathsf{max}}\frac{\pi_{\mathsf{HF}}(a_{+})}{\pi_{\mathsf{HF}}(a_{-})}\sum_{n=n_{1}}^{n_{2}}\mathsf{KL}\big(\sigma(r^{\star}(a_{+})-r^{\star}(a_{-}))\parallel\sigma(r^{(t_{n})}(a_{+})-r^{(t_{n})}(a_{-}))\big)\nonumber \\
	& \qquad\overset{\text{(iii)}}{\leq}\frac{128r_{\mathsf{max}}\alpha_{T}}{n_{1}}\frac{\pi_{\mathsf{HF}}(a_{+})}{\pi_{\mathsf{HF}}(a_{-})}\sum_{n=n_{1}}^{n_{2}}\gamma_{t_{n}}+128C_{3}Ar_{\mathsf{max}}^{2}\log T\frac{n_{2}-n_{1}}{n_{1}}\frac{\pi_{\mathsf{HF}}(a_{+})}{\pi_{\mathsf{HF}}(a_{-})}.\label{eq:self-bounding-1}
\end{align}
Here step (i) uses the Cauchy-Schwarz inequality; step (ii) follows
from \eqref{eq:r-KL-bound}; step (iii) utilizes \eqref{eq:KL-upper-bound-recursive}
and the fact that $\{\alpha_{t}\}$ is monotonically increasing. Following
the same analysis as in (\ref{eq:regret-decom}) and (\ref{eq:xi-bound}),
we know that 
\begin{align}
	\sum_{n=n_{1}}^{n_{2}}\gamma_{t_{n}} & \leq\sum_{n=n_{1}}^{n_{2}}\xi_{t_{n}}+\sum_{n=n_{1}}^{n_{2}}\big|r^{(t_{n})}(a_{+})-r^{(t_{n})}(a_{-})-r^{\star}(a_{+})+r^{\star}(a_{-})\big|\nonumber \\
	& \leq C_{1}r_{\mathsf{max}}\sqrt{(n_{2}-n_{1})\log T}+\sum_{n=n_{1}}^{n_{2}}\big|r^{(t_{n})}(a_{+})-r^{(t_{n})}(a_{-})-r^{\star}(a_{+})+r^{\star}(a_{-})\big|.\label{eq:self-bounding-2}
\end{align}
Taking \eqref{eq:self-bounding-1} and \eqref{eq:self-bounding-2}
collectively and let $n_{2}=2n_{1}$, we know that for any $n_{1}\leq N_{T}(a_{+},a_{-})/2$,
\begin{align*}
	& \bigg(\sum_{n=n_{1}}^{2n_{1}}\big|r^{(t_{n})}(a_{+})-r^{(t_{n})}(a_{-})-r^{\star}(a_{+})+r^{\star}(a_{-})\big|\bigg)^{2}\\
	& \quad\overset{\text{(iii)}}{\leq}128r_{\mathsf{max}}\alpha_{T}\frac{\pi_{\mathsf{HF}}(a_{+})}{\pi_{\mathsf{HF}}(a_{-})}\sum_{n=n_{1}}^{2n_{1}}\big|r^{(t_{n})}(a_{+})-r^{(t_{n})}(a_{-})-r^{\star}(a_{+})+r^{\star}(a_{-})\big|\\
	& \quad\qquad+128r_{\mathsf{max}}\frac{\pi_{\mathsf{HF}}(a_{+})}{\pi_{\mathsf{HF}}(a_{-})}n_{1}\left(\alpha_{T}C_{1}r_{\mathsf{max}}\sqrt{\frac{\log T}{n_{1}}}+C_{3}Ar_{\mathsf{max}}\log T\right).
\end{align*}
This self-bounding relation implies that 
\begin{align*}
	& \sum_{n=n_{1}}^{2n_{1}}\big|r^{(t_{n})}(a_{+})-r^{(t_{n})}(a_{-})-r^{\star}(a_{+})+r^{\star}(a_{-})\big|\leq256r_{\mathsf{max}}\alpha_{T}\frac{\pi_{\mathsf{HF}}(a_{+})}{\pi_{\mathsf{HF}}(a_{-})}\\
	& \qquad\qquad+\sqrt{256r_{\mathsf{max}}\frac{\pi_{\mathsf{HF}}(a_{+})}{\pi_{\mathsf{HF}}(a_{-})}n_{1}\left(\alpha_{T}C_{1}r_{\mathsf{max}}\sqrt{\frac{\log T}{n_{1}}}+C_{3}Ar_{\mathsf{max}}\log T\right)}.\\
	& \qquad\leq400r_{\mathsf{max}}\alpha_{T}\frac{\pi_{\mathsf{HF}}(a_{+})}{\pi_{\mathsf{HF}}(a_{-})}+C_{1}r_{\mathsf{max}}\sqrt{n_{1}\log T}+C_{3}n_{1}\frac{Ar_{\mathsf{max}}\log T}{\alpha_{T}},
\end{align*}
where the last relation follows from the AM-GM inequality. By using
the above relation recursively, we have 
\begin{align}
	\zeta(a_{+},a_{-}) & \leq\sum_{k=1}^{\lceil\log T\rceil}\sum_{n=N_{T}(a_{+},a_{-})/2^{k}}^{N_{T}(a_{+},a_{-})/2^{k-1}}\big|r^{(t_{n})}(a_{+})-r^{(t_{n})}(a_{-})-r^{\star}(a_{+})+r^{\star}(a_{-})\big|\nonumber \\
	& \leq C_{9}r_{\mathsf{max}}\frac{\pi_{\mathsf{HF}}(a_{+})}{\pi_{\mathsf{HF}}(a_{-})}\alpha_{T}\log T+C_{9}r_{\max}\sqrt{N_{T}(a_{+},a_{-})\log T}+C_{9}N_{T}(a_{+},a_{-})\frac{Ar_{\max}\log T}{\alpha_{T}}\label{eq:case2-hf}
\end{align}
for some sufficiently large constant $C_{9}>0$. On the other hand,
taking \eqref{eq:case2-hf} and \eqref{eq:HF-ref-bound} collectively
yields 
\begin{align}
	\zeta(a_{+},a_{-}) & \leq C_{8}C_{9}\left(T\frac{\pi_{\mathsf{ref}}(a_{-})}{\pi_{\mathsf{ref}}(a_{+})}\right)^{\frac{\beta}{\beta+1}}\alpha_{T}^{\frac{1}{\beta+1}}r_{\max}^{\frac{1}{\beta+1}}\log T+C_{9}r_{\max}\sqrt{N_{T}(a_{+},a_{-})\log T}\nonumber \\
	& \qquad+C_{9}N_{T}(a_{+},a_{-})\frac{Ar_{\max}\log T}{\alpha_{T}}.\label{eq:case2-ref}
\end{align}

By putting \eqref{eq:case1-hf}, \eqref{eq:case1-ref}, \eqref{eq:case2-hf}
and \eqref{eq:case2-ref} together, we have 
\begin{align*}
	\zeta(a_{+},a_{-}) & \leq C_{6}(r_{\max}+\log T)\min\left\{ \frac{\pi_{\mathsf{HF}}(a_{+})}{\pi_{\mathsf{HF}}(a_{-})}\alpha_{T}r_{\mathsf{max}},\left(T\frac{\pi_{\mathsf{ref}}(a_{-})}{\pi_{\mathsf{ref}}(a_{+})}\right)^{\frac{\beta}{\beta+1}}\alpha_{T}^{\frac{1}{\beta+1}}r_{\max}^{\frac{1}{\beta+1}}\right\} \\
	& \qquad+C_{6}\left(\frac{AN_{T}(a_{+},a_{-})\log T}{\alpha_{T}}+\sqrt{T\log T}\right)r_{\max}
\end{align*}
always holds for some universal constant $C_{6}>0$.

\section{Proof of Proposition~\ref{prop:regret-1} \protect\label{sec:proof-prop-regret-1}}

Under Assumption~\ref{assumption:1}, we know that for any action
pair $(a_{+},a_{-})$,
\[
\min\bigg\{\frac{\pi_{\mathsf{HF}}(a_{+})}{\pi_{\mathsf{HF}}(a_{-})}\alpha_{T}r_{\mathsf{max}},\left(T\frac{\pi_{\mathsf{ref}}(a_{-})}{\pi_{\mathsf{ref}}(a_{+})}\right)^{\frac{\beta}{\beta+1}}\alpha_{T}^{\frac{1}{\beta+1}}r_{\max}^{\frac{1}{\beta+1}}\bigg\}\leq\max\big\{\tau\alpha_{T}r_{\mathsf{max}},\left(\kappa T\right)^{\frac{\beta}{\beta+1}}\alpha_{T}^{\frac{1}{\beta+1}}r_{\max}^{\frac{1}{\beta+1}}\big\}.
\]
Therefore we have
\begin{align*}
	\mathcal{R}(T) & \leq Cr_{\mathsf{max}}A^{2}\sqrt{T\log T}+C\sum_{t=1}^{T}\frac{Ar_{\mathsf{max}}\log T}{\alpha_{t}}+2C(r_{\mathsf{max}}+\log T)A^{2}\tau\alpha_{T}r_{\mathsf{max}}\\
	& \qquad+C(r_{\mathsf{max}}+\log T)A^{2}\left(\kappa T\right)^{\frac{\beta}{\beta+1}}\alpha_{T}^{\frac{1}{\beta+1}}r_{\max}^{\frac{1}{\beta+1}}.
\end{align*}
By taking
\[
\alpha_{t}=A\log T+t^{\frac{1}{\beta+2}}\Big(\frac{r_{\max}}{\kappa}\Big)^{\frac{\beta}{\beta+2}}\Big(\frac{\log T}{A(r_{\max}+\log T)}\Big)^{\frac{\beta+1}{\beta+2}},
\]
we can achieve
\begin{align*}
	\mathcal{R}(T) & \lesssim(r_{\mathsf{max}}+\log T)A^{3}\tau r_{\mathsf{max}}\log T+r_{\mathsf{max}}A^{2}\sqrt{T\log T}\\
	& \qquad+(r_{\max}+\log T)^{\frac{\beta+1}{\beta+2}}r_{\max}^{\frac{2}{\beta+2}}\kappa^{\frac{\beta}{\beta+2}}A^{\frac{2\beta+3}{\beta+2}}T^{\frac{\beta+1}{\beta+2}}(\log T)^{\frac{1}{\beta+2}}\\
	& \qquad+(r_{\mathsf{max}}+\log T)^{\frac{1}{\beta+2}}A^{\frac{\beta+3}{\beta+2}}\tau r_{\mathsf{max}}^{\frac{2\beta+2}{\beta+2}}\kappa^{-\frac{\beta}{\beta+2}}(\log T)^{\frac{\beta+1}{\beta+2}}T^{\frac{1}{\beta+2}}\\
	& \qquad+(r_{\mathsf{max}}+\log T)A^{\frac{2\beta+3}{\beta+1}}\kappa^{\frac{\beta}{\beta+1}}(\log T)^{\frac{1}{\beta+1}}r_{\max}^{\frac{1}{\beta+1}}T^{\frac{\beta}{\beta+1}}\\
	& \lesssim\tau A^{3}r_{\mathsf{max}}^{2}\log^{2}T+T^{\frac{\beta+1}{\beta+2}}\kappa^{\beta}r_{\max}^{2}A^{3}\tau\log^{2}T.
\end{align*}

\section{Another assumption and the regret bound \protect\label{sec:other-assumption-regret}}

As an alternaive to Assumption~\ref{assumption:1}, we can also impose
the following assumption to capture the relation between human preference
$\pi_{\mathsf{HF}}$ and the reference policy $\pi_{\mathsf{ref}}$. 

\begin{assumption} \label{assumption:2}There exists some quantity
	$\mu>0$ such that, for any action pair $(a_{+},a_{-})$, 
	\[
	\frac{\pi_{\mathsf{HF}}(a_{+})}{\pi_{\mathsf{HF}}(a_{-})}\leq\mu\frac{\pi_{\mathsf{ref}}(a_{+})}{\pi_{\mathsf{ref}}(a_{-})}.
	\]
\end{assumption}

The quantity $\mu$ measures the deviation of human preference from
the reference policy. Under Assumption~\ref{assumption:2}, we have
\begin{align*}
	& \min\bigg\{\frac{\pi_{\mathsf{HF}}(a_{+})}{\pi_{\mathsf{HF}}(a_{-})}\alpha_{T}r_{\mathsf{max}},\left(T\frac{\pi_{\mathsf{ref}}(a_{-})}{\pi_{\mathsf{ref}}(a_{+})}\right)^{\frac{\beta}{\beta+1}}\alpha_{T}^{\frac{1}{\beta+1}}r_{\max}^{\frac{1}{\beta+1}}\bigg\}\\
	& \qquad\leq\min\bigg\{\mu\frac{\pi_{\mathsf{ref}}(a_{+})}{\pi_{\mathsf{ref}}(a_{-})}\alpha_{T}r_{\mathsf{max}},\left(T\frac{\pi_{\mathsf{ref}}(a_{-})}{\pi_{\mathsf{ref}}(a_{+})}\right)^{\frac{\beta}{\beta+1}}\alpha_{T}^{\frac{1}{\beta+1}}r_{\max}^{\frac{1}{\beta+1}}\bigg\}.\\
	& \qquad\leq(\mu T)^{\frac{\beta}{2\beta+1}}(\alpha_{T}r_{\max})^{\frac{\beta+1}{2\beta+1}}.
\end{align*}
Putting the above relation with (\ref{eq:general-regret}), we have
\begin{align*}
	\mathcal{R}(T) & \lesssim r_{\mathsf{max}}A^{2}\sqrt{T\log T}+\sum_{t=1}^{T}\frac{Ar_{\mathsf{max}}\log T}{\alpha_{t}}+A^{2}\alpha_{T}r_{\mathsf{max}}^{2}\\
	& \qquad+(r_{\mathsf{max}}+\log T)A^{2}(\mu T)^{\frac{\beta}{2\beta+1}}(\alpha_{T}r_{\max})^{\frac{\beta+1}{2\beta+1}}.
\end{align*}
By taking 
\[
\alpha_{t}=A+t^{\frac{\beta+1}{3\beta+2}}\big(\frac{r_{\max}}{\mu}\big)^{\frac{\beta}{3\beta+2}}\big(\frac{\log T}{A(r_{\max}+\log T)}\big)^{\frac{2\beta+1}{3\beta+2}},
\]
we have
\begin{align*}
	\mathcal{R}(T) & \lesssim T^{\frac{2\beta+1}{3\beta+2}}\mu^{\frac{\beta}{3\beta+2}}\mathsf{poly}(A,r_{\max},\log T).
\end{align*}

\section{Technical lemmas}

\begin{lemma} \label{lemma:KL-lower-bound}For any $x,\delta\in\mathbb{R}$,
we have
\[
\mathsf{KL}(\sigma(x)\Vert\sigma(x+\delta))\geq\frac{1}{4}\sigma(x)\left(1-\sigma(x)\right)\min\{|\delta|,\delta^{2}\}.
\]
\end{lemma}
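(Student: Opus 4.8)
The plan is to recognize the Bernoulli KL as the Bregman divergence of the log-partition function $\phi(y)=\log(1+e^y)$ and then control it by a second-order Taylor expansion combined with a one-line log-derivative estimate on $\phi''$. First I would record the identity: since $\phi'(y)=\sigma(y)$, $\phi''(y)=\sigma(y)(1-\sigma(y))$, $\log\sigma(y)=y-\phi(y)$ and $\log(1-\sigma(y))=-\phi(y)$, a direct expansion of $\mathsf{KL}(\sigma(x)\,\|\,\sigma(x+\delta))=\sigma(x)\log\frac{\sigma(x)}{\sigma(x+\delta)}+(1-\sigma(x))\log\frac{1-\sigma(x)}{1-\sigma(x+\delta)}$ gives
\[
\mathsf{KL}(\sigma(x)\,\|\,\sigma(x+\delta)) \;=\; \phi(x+\delta)-\phi(x)-\phi'(x)\,\delta \;=:\; h(\delta),
\]
so that $h(0)=h'(0)=0$, $h'(\delta)=\sigma(x+\delta)-\sigma(x)$ and $h''(\delta)=\phi''(x+\delta)\ge 0$ (in particular $h$ is convex).

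Next I would establish the pointwise comparison $\phi''(y)\ge \phi''(x)\,e^{-|y-x|}$, which holds because $\frac{d}{dy}\log\phi''(y)=\phi'''(y)/\phi''(y)=1-2\sigma(y)\in[-1,1]$. Plugging this into the integral form of Taylor's theorem reduces the whole statement to elementary one-variable estimates:
\[
h(\delta)\;=\;\delta^2\int_0^1 (1-t)\,\phi''(x+t\delta)\,\mathrm{d}t\;\ge\;\delta^2\,\phi''(x)\int_0^1(1-t)\,e^{-t|\delta|}\,\mathrm{d}t .
\]

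For $|\delta|\le 1$ I would bound $e^{-t|\delta|}\ge e^{-t}$ on $[0,1]$, so $h(\delta)\ge \delta^2\phi''(x)\int_0^1(1-t)e^{-t}\,\mathrm{d}t=e^{-1}\delta^2\phi''(x)\ge\tfrac14\phi''(x)\delta^2=\tfrac14\phi''(x)\min\{|\delta|,\delta^2\}$, using $e^{-1}>1/4$ and $\min\{|\delta|,\delta^2\}=\delta^2$ on this range. For $|\delta|>1$, I would first reduce to $\delta>1$ via the symmetry $\mathsf{KL}(\sigma(x)\,\|\,\sigma(x+\delta))=\mathsf{KL}(\sigma(-x)\,\|\,\sigma(-x-\delta))$ (swapping the two Bernoulli outcomes), under which $\phi''(x)$ and $\min\{|\delta|,\delta^2\}$ are invariant. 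Then, using convexity, $h(\delta)=h(1)+\int_1^\delta\big(\sigma(x+s)-\sigma(x)\big)\,\mathrm{d}s$; monotonicity of $\sigma$ gives $\sigma(x+s)-\sigma(x)\ge\sigma(x+1)-\sigma(x)=\int_0^1\phi''(x+t)\,\mathrm{d}t\ge(1-e^{-1})\phi''(x)$ for $s\ge1$, and the displayed Taylor bound at $\delta=1$ gives $h(1)\ge e^{-1}\phi''(x)$, whence $h(\delta)\ge\phi''(x)\big[e^{-1}+(1-e^{-1})(\delta-1)\big]$. Since this bracket exceeds $\delta/4$ at $\delta=1$ (as $e^{-1}>1/4$) and has slope $1-e^{-1}>1/4$, it dominates $\delta/4$ for all $\delta\ge1$, giving $h(\delta)\ge\tfrac14\phi''(x)\delta=\tfrac14\phi''(x)\min\{|\delta|,\delta^2\}$.

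The only mildly delicate point is this last case: the raw Taylor/log-derivative bound alone is too lossy to recover the linear-in-$|\delta|$ growth of the KL with the sharp-enough constant, so the key move is to peel off $h(1)$ using convexity and add the linear growth coming from monotonicity of $\sigma$, then check that the numerical constants ($e^{-1}$ versus $1/4$, and $1-e^{-1}$ versus $1/4$) leave the needed slack. Everything else is routine calculus.
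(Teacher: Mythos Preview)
Your proof is correct but takes a different route from the paper. The paper writes $f_x(t)\coloneqq\mathsf{KL}(\sigma(x)\,\|\,\sigma(x+t))$, computes its \emph{first} derivative in closed form,
\[
f_x'(t)=\frac{\sigma(x)(1-\sigma(x))(e^{t}-1)}{1+\sigma(x)(e^{t}-1)},
\]
and bounds it directly by $\sigma(x)(1-\sigma(x))(1-e^{-t})\ge\tfrac12\sigma(x)(1-\sigma(x))\min\{t,1\}$ for $t>0$; a single integration then gives the claim uniformly in $\delta>0$, with $\delta<0$ handled symmetrically. Your argument instead recognizes the KL as the Bregman remainder of $\phi(y)=\log(1+e^y)$, controls the \emph{second} derivative via the log-Lipschitz bound $\phi''(y)\ge\phi''(x)e^{-|y-x|}$, and applies the integral Taylor remainder. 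This is conceptually clean and the log-derivative trick is nice, but the second-derivative route loses the linear growth for large $|\delta|$, which forces your extra ``peel off $h(1)$ and use monotonicity of $\sigma$'' step together with the numerical checks on $e^{-1}$ and $1-e^{-1}$ versus $1/4$. The paper's first-derivative bound handles both regimes in one stroke, at the cost of a slightly less structural derivation; your approach makes the Bregman/log-partition structure transparent and would generalize more readily to other exponential families.
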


\begin{proof}

Let $f_{x}(t)\coloneqq\mathsf{KL}(\sigma(x)\Vert\sigma(x+t))$. We
have
\begin{align*}
f_{x}(t) & =\sigma(x)\log\frac{\sigma(x)}{\sigma(x+t)}+(1-\sigma(x))\log\frac{1-\sigma(x)}{1-\sigma(x+t)}\\
 & =\sigma(x)\log\left(\frac{\sigma(x)}{1-\sigma(x)}\cdot\frac{1-\sigma(x+t)}{\sigma(x+t)}\right)+\log\frac{1-\sigma(x)}{1-\sigma(x+t)}\\
 & =\log\frac{1+\exp(x+t)}{1+\exp(x)}-\sigma(x)t=\log\left(1+\sigma(x)(e^{t}-1)\right)-\sigma(x)t.
\end{align*}
Then we have 
\[
f_{x}^{\prime}(t)=\frac{\sigma(x)e^{t}}{1+\sigma(x)(e^{t}-1)}-\sigma(x)=\frac{\sigma(x)\left(1-\sigma(x)\right)(e^{t}-1)}{1+\sigma(x)(e^{t}-1)}.
\]
For any $t>0$, we can check that
\begin{align*}
f_{x}^{\prime}(t) & >\sigma(x)\left(1-\sigma(x)\right)\left(1-e^{-t}\right)\geq\frac{1}{2}\sigma(x)\left(1-\sigma(x)\right)\min\left\{ t,1\right\} ,
\end{align*}
and for any $t\in(0,1)$ we have
\begin{align*}
f_{x}^{\prime}(t) & <\sigma(x)\left(1-\sigma(x)\right)\left(e^{t}-1\right)\leq2\sigma(x)\left(1-\sigma(x)\right)t.
\end{align*}
This immediately implies that for $\delta>0$, 
\begin{align*}
\mathsf{KL}\left(\sigma(x)\Vert\sigma(x+\delta)\right) & =f_{x}(\delta)-f_{x}(0)=\int_{0}^{\delta}f_{x}^{\prime}(t)\mathrm{d}t\\
 & \geq\frac{1}{2}\sigma(x)\left(1-\sigma(x)\right)\int_{0}^{\delta}\min\left\{ t,1\right\} \mathrm{d}t\\
 & \overset{\text{(a)}}{\geq}\frac{1}{4}\sigma(x)\left(1-\sigma(x)\right)\min\{\delta,\delta^{2}\}.
\end{align*}
Here step (a) holds since $\int_{0}^{\delta}\min\{t,1\}\mathrm{d}t=\delta^{2}/2$
for $\delta\leq1$, and $\int_{0}^{\delta}\min\{t,1\}\mathrm{d}t=\delta-1/2\geq\delta/2$
for $\delta>1$. 

For $\delta<0$, we can use the same argument to show that
\[
\mathsf{KL}\left(\sigma(x)\Vert\sigma(x+\delta)\right)\geq\frac{1}{4}\sigma(x)\left(1-\sigma(x)\right)\min\{-\delta,\delta^{2}\}.
\]
This completes the proof. \end{proof}

The following lemma provides a user-friendly version of Freedman's
inequality (the Bernstein inequality for martingale differences) \citep{freedman1975tail,tropp2011freedman}.

\begin{lemma}\label{lemma:freedman}Consider a filtration $\{\mathcal{F}_{i}\}_{i\geq0}$
and random variables $\{X_{i}\}_{i\geq1}$ obeying
\[
|X_{i}|\leq R\qquad\text{and}\qquad\mathbb{E}[X_{i}\vert\mathcal{F}_{i-1}]=0\qquad\text{for all }i\geq1.
\]
Define $W_{n}=\sum_{i=1}^{n}\mathbb{E}[X_{i}^{2}\vert\mathcal{F}_{i-1}]$,
and suppose that $W_{n}\leq\sigma^{2}$ holds deterministically for
some given quantity $\sigma>0$. Then for any positive integer $m\geq1$,
with probability exceeding $1-\delta$ we have
\[
\left|\sum_{i=1}^{n}X_{i}\right|\leq\sqrt{8\max\left\{ W_{n},\frac{\sigma^{2}}{2^{m}}\right\} \log\frac{2m}{\delta}}+\frac{4}{3}R\log\frac{2m}{\delta}.
\]
\end{lemma}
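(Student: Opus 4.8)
The plan is to obtain this ``user-friendly'' form by combining the classical Freedman (Bernstein-type) inequality for martingales with a dyadic peeling argument over the predictable quadratic variation.

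\emph{Step 1: the classical inequality and its inversion.} Recall that for a martingale difference sequence $\{X_i\}$ with $|X_i|\le R$, the classical Freedman inequality \citep{freedman1975tail,tropp2011freedman} gives, for any fixed $v,t>0$,
\[
\mathbb{P}\Big(\sum_{i=1}^n X_i\ge t\ \text{and}\ W_n\le v\Big)\le\exp\Big(-\frac{t^2/2}{v+Rt/3}\Big).
\]
Requiring the right-hand side to be at most $\delta'$ is a quadratic inequality in $t$, whose positive root is bounded by $\tfrac{2}{3}R\log(1/\delta')+\sqrt{2v\log(1/\delta')}$ (using $\sqrt{a+b}\le\sqrt a+\sqrt b$). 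Hence, for any fixed $v>0$, with probability at least $1-\delta'$ we have that on the event $\{W_n\le v\}$,
\[
\sum_{i=1}^n X_i\le\sqrt{2v\log(1/\delta')}+\tfrac{2}{3}R\log(1/\delta').
\]
Applying the same argument to $\{-X_i\}$ and taking a union bound yields the two-sided version with failure probability $2\delta'$.

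\emph{Step 2: peeling over the variance.} The difficulty is that $W_n$ is random, so no single proxy $v$ suffices; this is exactly where the deterministic a priori bound $W_n\le\sigma^2$ enters. Partition $[0,\sigma^2]$ into the $m$ dyadic slabs $(\sigma^2 2^{-k},\sigma^2 2^{-(k-1)}]$ for $k=1,\dots,m$, together with the leftover interval $[0,\sigma^2 2^{-m}]$. For each $k\in\{1,\dots,m\}$ apply the two-sided bound from Step~1 with proxy $v_k=\sigma^2 2^{-(k-1)}$ and per-side failure probability $\delta/(2m)$; a union bound over these $2m$ events gives total failure probability at most $\delta$ and produces the logarithmic factor $\log(2m/\delta)$. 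On the resulting good event, inspect the realized value of $W_n$: if $W_n\le\sigma^2 2^{-m}$ then $W_n\le v_m$ and the $k=m$ bound applies; otherwise $W_n$ lies in exactly one slab $(\sigma^2 2^{-k_0},\sigma^2 2^{-(k_0-1)}]$, so $W_n\le v_{k_0}=\sigma^2 2^{-(k_0-1)}\le 2W_n$ and the $k_0$-th bound applies. In either case the proxy used is at most $2\max\{W_n,\sigma^2 2^{-m}\}$, so $\sqrt{2v_k\log(2m/\delta)}\le\sqrt{8\max\{W_n,\sigma^2 2^{-m}\}\log(2m/\delta)}$ and $\tfrac{2}{3}R\log(2m/\delta)\le\tfrac{4}{3}R\log(2m/\delta)$, which is the claimed inequality.

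\emph{Main obstacle.} There is no genuine analytic obstacle here: the statement merely repackages a known inequality into a form that does not require the user to know $W_n$ in advance. The only point requiring care is the bookkeeping of the peeling — one must invoke the deterministic bound $W_n\le\sigma^2$ so that the number of dyadic levels is finite (matching the free parameter $m$), ensure that the realized random variance always falls into one of the \emph{deterministic} slabs so that a fixed-$v$ concentration bound can legitimately be applied there, and track the (deliberately loose) constants $8$ and $\tfrac{4}{3}$ through the union bound and the slab-to-slab comparison.
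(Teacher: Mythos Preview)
Your argument is correct and is precisely the standard derivation: classical Freedman plus a dyadic peeling over the predictable variance, with the deterministic cap $W_n\le\sigma^2$ ensuring a finite number $m$ of slabs. The paper does not actually prove this lemma but simply cites \citet[Section~A]{li2021q}, where the same peeling argument is carried out; your sketch reproduces that proof faithfully (and in fact your bookkeeping yields the slightly sharper constant $4$ instead of $8$ in the variance term, which of course still implies the stated bound).
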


\begin{proof}See \citet[Section A]{li2021q}.\end{proof}
\bibliographystyle{apalike}
\bibliography{../bibfileRL}

\end{document}